\newtheorem{theorem}{Theorem}
\newtheorem{lemma}[theorem]{Lemma} 
\newtheorem{proposition}[theorem]{Proposition} 
\newtheorem{remark}[theorem]{Remark}
\newtheorem{definition}[theorem]{Definition}
\newcommand{\BlackBox}{\rule{1.5ex}{1.5ex}}  
    \renewenvironment{proof}{\par\noindent{\bf Proof\ }}{\hfill\BlackBox\\[2mm]}
    \newenvironment{proof}{\par\noindent{\bf Proof\ }}{\hfill\BlackBox\\[2mm]}
\begin{document}

%

%

\twocolumn[
\aistatstitle{Minimax-Optimal Two-Sample Test with Sliced Wasserstein}

\aistatsauthor{ Binh Thuan Tran \And Nicolas Schreuder}

\aistatsaddress{ LAMA, LIGM, Université Gustave Eiffel\\
binh-thuan.tran3@univ-eiffel.fr
\And  CNRS, LIGM\\
nicolas.schreuder@cnrs.fr}]

\begin{abstract}
We study the problem of nonparametric two-sample testing using the sliced Wasserstein (SW) distance. While prior theoretical and empirical work indicates that the SW distance offers a promising balance between strong statistical guarantees and computational efficiency, its theoretical foundations for hypothesis testing remain limited. We address this gap by proposing a permutation-based SW test and analyzing its performance. The test inherits finite-sample Type I error control from the permutation principle. Moreover, we establish non-asymptotic power bounds and show that the procedure achieves the minimax separation rate $n^{-1/2}$ over multinomial and bounded-support alternatives, matching the optimal guarantees of kernel-based tests while building on the geometric foundations of Wasserstein distances. Our analysis further quantifies the trade-off between the number of projections and statistical power. Finally, numerical experiments demonstrate that the test combines finite-sample validity with competitive power and scalability, and---unlike kernel-based tests, which require careful kernel tuning---it performs consistently well across all scenarios we consider.
\end{abstract}
\section{INTRODUCTION}\label{sec:intro}
As machine-generated data becomes ubiquitous---from synthetic images to large language model outputs and scientific simulations---it is necessary to assess whether generated samples are statistically indistinguishable from real data samples. This question fits naturally into the framework of two-sample testing: one sample is drawn from a data distribution $\mu$ and the other from a generative model distribution $\nu$. Formally, the goal is to test
\begin{align*}
    \mathcal{H}_0: \mu = \nu \quad \text{against} \quad \mathcal{H}_1: \mu \neq \nu.
\end{align*}

Two-sample testing is a fundamental problem in statistics, with a long history and a wide range of practical applications \citep{lehmann2005testing}. It has played a central role in areas such as clinical laboratory science \citep{miles2004comparison,zhang2021two}, finance \citep{horvath2014estimation}, bioinformatics \citep{borgwardt2006integrating}, neuroscience \citep{stelzer2013statistical}, and video content analysis \citep{liu2018classifier}. Recently, it has attracted growing interest in emerging machine learning and applied domains, including generative modeling \citep{li2017mmd}, model equality testing for large language models \citep{gao2024model}, and physics \citep{chakravarti2023model}.

Two primary classes of two-sample tests have been developed: \emph{parametric} and \emph{nonparametric}. Parametric tests, such as the classical $t$-test \citep{student1908probable} and Hotelling's two-sample $T^2$ test \citep{hotelling1931generalization}, rely on strong distributional assumptions (e.g., Gaussianity) and may lose power when these assumptions fail. Nonparametric tests avoid such assumptions and are thus more broadly applicable.

Among nonparametric methods, kernel-based tests have become especially prominent. In particular, the kernel two-sample test of \citet{gretton2012mmd}, based on the Maximum Mean Discrepancy (MMD)--a special case of Integral Probability Metrics \citep{muller1997integral}--is widely used for its flexibility and computational tractability. However, kernel-based tests depend critically on kernel choice and hyperparameter tuning, which ideally should reflect the geometry of the underlying data \citep{schrab2023mmd}. 
This limitation motivates exploring alternatives such as optimal transport (OT), which incorporates geometric structure through the choice of ground cost. The Wasserstein distance, in particular, is a natural candidate for two-sample testing, as it captures geometric discrepancies between distributions \citep{villani2008optimal}. 
However, the standard plug-in estimator suffers from the curse of dimensionality, with sharp error rates of order $n^{-1/d}$ \citep{fournier2015rate,weed2019sharp}.

To mitigate this curse of dimensionality, we focus on the sliced Wasserstein distance \citep{rabin2011sw}, which projects measures onto one-dimensional subspaces and attains a parametric $n^{-1/2}$ convergence rate while retaining geometric interpretability \citep{nadjahi2021sliced, nietert2022statistical}. Despite its widespread use in generative modeling and learning, as well as promising empirical performance \citep{grossi2025refereeing}, theoretical guarantees for its application in two-sample testing remain largely unexplored. In particular, finite-sample validity, non-asymptotic power guarantees, and minimax optimality results for tests based on the sliced Wasserstein distances are currently lacking in the literature. This gap limits the theoretical understanding and rigorous adoption of sliced Wasserstein-based two-sample tests in practice.

In this paper, we address these gaps by proposing a permutation-based two-sample test using the sliced Wasserstein distance. We prove that this test enjoys finite-sample Type I error control. Furthermore, we establish non-asymptotic lower bounds on the test's power and demonstrate its minimax optimality against multinomial and bounded-support alternatives. We also analyze the computational–statistical trade-off inherent in the test, showing how the number of projections affects both runtime and statistical performance. A key technical contribution is our analysis of the SW distance between permuted distributions: to the best of our knowledge, this is the first work to control permutation quantiles for Wasserstein-based statistics. This analysis underpins the finite-sample guarantees for the SW test and extends naturally to the general Wasserstein distance. Finally, we complement our theoretical analysis with experiments, showing that our test achieves strong empirical performance and, unlike kernel-based methods whose power depends heavily on kernel choice, performs consistently well across diverse benchmarks without the need for parameter tuning or kernel aggregation.

The remainder of the paper is organized as follows. Section 2 reviews the necessary background on two-sample testing and sliced Wasserstein distance. Section 3 introduces the permutation-based sliced Wasserstein two-sample test. Section 4 presents our theoretical results on finite-sample validity, power analysis, and minimax optimality. Section 5 reports experimental evaluations on synthetic and real datasets. We conclude in Section 6 with a discussion of future research directions. All proofs are deferred to the Appendix.

\section{BACKGROUND}
In this section, we present the two-sample testing problem setup, define the sliced Wasserstein distance, and introduce minimax optimality for testing.

\subsection{Problem setup}

We consider two independent samples drawn from distributions $\mu$ and $\nu$ on $\mathbb{R}^d$:
\[
\mathcal{Y}_n = (Y_1, \dots, Y_n) \overset{\text{i.i.d.}}{\sim} \mu, \quad \mathcal{Z}_m = (Z_1, \dots, Z_m) \overset{\text{i.i.d.}}{\sim} \nu,
\]
both supported on a common ball $ B_D = \{x \in \mathbb{R}^d : \|x\| \leq D\}$ for some $D > 0$, and assume $n \leq m$.

The goal is to test the hypotheses
\[
H_0: \mu = \nu \quad \text{versus} \quad H_1: d(\mu, \nu) \geq \epsilon,
\]
for some discrepancy metric $d$ and fixed $\epsilon > 0$. Many existing tests fit this framework by choosing $d$ as a specific probability discrepancy, such as the total variation distance or the MMD. Later, we will focus on the sliced Wasserstein distance.

A test is a measurable function
\[
\phi_{n,m} : (\mathbb{R}^{d})^n \times (\mathbb{R}^{d})^m \to \{0,1\},
\]
where $\phi_{n,m} = 1$ indicates rejection of the hypothesis $H_0$. Typically, $\phi_{n,m}$ is based on a test statistic $\Psi$ measuring discrepancy between samples, rejecting $H_0$ if $\Psi$ exceeds a threshold.

A test’s performance is characterized by its Type~I (false positive) and Type~II (false negative) errors, defined as follows. Let $\varepsilon>0$ and define

\[
\mathcal{P}_1(\varepsilon) \coloneqq \{(\mu,\nu) : d(\mu,\nu) \geq \epsilon\}.
\]

A level-$\alpha$ test controls the Type I error uniformly by $\alpha \in (0, 1)$, and has power at least $1-\beta$ if its Type II error is bounded by $\beta \in (0, 1)$ uniformly over $\mathcal{P}_1(\varepsilon)$:
\[
\sup_{\mu = \nu} \mathbb{P}_{\mu \times \nu}(\phi_{n,m}{=}1) \leq \alpha, \sup_{(\mu,\nu) \in \mathcal{P}_1(\varepsilon)} \mathbb{P}_{\mu \times \nu}(\phi_{n,m}{=}0) \leq \beta,
\]
where probabilities are taken over the samples and any additional randomness (e.g., random projections).

In Section~\ref{sec:theory}, we establish that our test is of level $\alpha$ and characterize the minimal separation between distributions detectable with power at least $1-\beta$.

\subsection{Sliced Wasserstein Distance}\label{subsec:SlicedWasserStein}

For $p\geq 1$, the $p$-Wasserstein distance~\citep[Section~6]{villani2008optimal} between probability measures $\mu, \nu$ in the space of probability measures with finite $p$-th moments $\mathcal{P}_p(\mathbb{R}^d)$, is defined as
\begin{align*}
\operatorname{W}_p(\mu,\nu) := \left( \inf_{\pi \in \Gamma(\mu,\nu)} \int_{\mathbb{R}^d \times \mathbb{R}^d} \|x - y\|^p \, d\pi(x,y) \right)^{1/p},\end{align*}
where $\Gamma(\mu,\nu)$ is the set of couplings with marginals $\mu$ and $\nu$. The $p$-Wasserstein distance $\operatorname{W}_p$ is a natural candidate for two-sample testing: it vanishes if and only if the distributions coincide and captures geometric discrepancies. 
However, its minimax separation rate scales as $n^{-c/d}$ for some $c>1$, making it impractical in moderate to high dimensions \citep[Section 2.5]{ba2011sublinear,chewi2024statistical}.

To address this limitation, we use the sliced Wasserstein (SW) distance \citep{rabin2011sw}, defined next.
\begin{definition}[SW distance]\label{slicedwassersteindistance}
For $p \geq 1$ and $\mu, \nu \in \mathcal{P}_p(\mathbb{R}^d)$, the sliced Wasserstein distance is
\begin{equation}\label{definitionSlicedWasserstein}
\operatorname{SW}_p(\mu, \nu) := \left( \int_{\mathbb{S}^{d-1}} \operatorname{W}_p^p\big( \Pi^\theta_{\#}\mu, \Pi^\theta_{\#}\nu \big) \, \sigma(d\theta) \right)^{1/p},
\end{equation}
where $\Pi^\theta(x){=}\langle \theta, x \rangle$ is the projection onto direction $\theta$ on the unit sphere $\mathbb{S}^{d-1}$, and $\sigma$ is the uniform measure on the unit sphere.
\end{definition}
By averaging one-dimensional Wasserstein distances over random projections, $\operatorname{SW}_p$ benefits from a dimension-free sample complexity rate  while preserving geometric interpretability and has been successfully applied in large-scale statistical problems (see Sections~\ref{sec:intro} and \ref{sec:related}). In order to estimate the SW distance, we replace the unknown measures $\mu$ and $\nu$ by their empirical counterpart
\[
\widehat{\mu}_n\coloneqq \frac{1}{n} \sum_{i=1}^n \delta_{Y_i} \quad \text{ and } \quad \widehat{\nu}_m \coloneqq\frac{1}{m} \sum_{i=1}^m \delta_{Z_i}.
\]
Moreover, we approximate the expectation with respect to the uniform distribution on the sphere $\sigma$ by Monte Carlo sampling: we draw $L \geq 1$ independent directions $\Theta = (\theta_1, \dots, \theta_L)$ from $\sigma$ and compute the tractable statistic
\begin{align}\label{teststatistic}
\widehat{\operatorname{SW}}^p_p(\widehat{\mu}_n, \widehat{\nu}_m) \coloneqq \frac{1}{L} \sum_{\ell=1}^L \operatorname{W}_p^p\big( \Pi^{\theta_\ell}_{\#}\widehat{\mu}_n, \Pi^{\theta_\ell}_{\#}\widehat{\nu}_m \big).
\end{align}

For brevity, we write $\widehat{\operatorname{SW}}^p_p$ for the estimator in \eqref{teststatistic}. 

\subsection{Minimax optimality}\label{sec:minimaxoptimality}

Minimax theory provides a benchmark for testing by characterizing the smallest detectable difference for level-$\alpha$ tests \citep{ingster1993asymptotically,baraud2002non}. Define
\[
\Phi_{n+m,\alpha} \coloneqq \left\{ \phi_{n,m} : \sup_{\mu = \nu} \mathbb{P}_{\mu \times \nu}(\phi_{n,m} = 1) \leq \alpha \right\}
\]
as the set of tests controlling Type I error at level $\alpha$ based on $n$ samples from $\mu$ and $m$ samples from $\nu$.
Given $\epsilon>0$, the minimax risk is then defined as the minimal worst-case Type II error over $\mathcal{P}_1(\epsilon)$ among all level-$\alpha$ tests,
\[
R^{\dagger}_{n+m, \epsilon} \coloneqq \inf_{\phi_{n,m} \in \Phi_{n+m,\alpha}} \sup_{(\mu, \nu) \in \mathcal{P}_1(\epsilon)} \mathbb{P}_{\mu \times \nu}(\phi_{n,m} = 0).
\]
The minimax separation $\epsilon_{n,m}^\dagger$ is the smallest discrepancy $\epsilon>0$ such that a level-$\alpha$ test exists with Type II error at most $\beta$:
\begin{equation}\label{eq:minimax_separation_def}
\epsilon_{n,m}^\dagger\coloneqq \inf \left\{ \epsilon > 0 : R^{\dagger}_{n+m, \epsilon} \leq \beta \right\},
\end{equation}
for fixed $\beta \in (0,1-\alpha)$. Intuitively, $\epsilon_{n,m}^\dagger$ characterizes the detection boundary below which no $\alpha$-level test can reliably distinguish alternatives from the null.

For a given separation $\epsilon>0$, the maximum Type II error of a test $\phi_{n,m} \in \Phi_{n+m,\alpha}$ over the class of alternative distributions $\mathbb{P}_1(\epsilon)$ is
\[
R_{n+m,\epsilon}(\phi_{n,m}) \coloneqq \sup_{(\mu, \nu) \in \mathcal{P}_1(\epsilon)} \mathbb{P}_{\mu \times \nu}(\phi_{n,m} = 0).
\]

Then, its minimum separation is
\[
\tilde{\epsilon}_{\phi,n,m}\coloneqq \inf \left\{\epsilon > 0 : R_{n+m,\epsilon}(\phi_{n,m}) \leq \beta \right\}.
\]

A test $\phi_{n,m}$ is minimax rate-optimal if its minimum separation $\tilde{\epsilon}_{\phi,n,m}$ is equivalent to
$\epsilon_{n,m}^\dagger$ up to 
constant factors. We show in Section~\ref{sec:theory} that the sliced Wasserstein test we consider is minimax rate-optimal.

\section{RELATED WORK}\label{sec:related}
The problem of two-sample testing has been extensively studied for over a century, with two main classes of approaches: \emph{parametric} and \emph{non-parametric}.

\paragraph{Classical parametric tests.}
Early approaches to the two-sample problem include the Kolmogorov–Smirnov test \citep{kolmogorov1933sulla,smirnov1948table}, Student’s $t$-test for mean comparison \citep{student1908probable}, and Hotelling’s $T^2$ test as a multivariate generalization of the $t$-test \citep{hotelling1931generalization}. These procedures are computationally simple and well understood, but they are either restricted to the univariate setting (e.g., the KS test) or to testing specific moments such as the mean (e.g., $t$-test, Hotelling’s $T^2$), which limits their applicability in high-dimensional or nonparametric problems.

\paragraph{Kernel-based methods.}
Kernel two-sample testing has emerged as a popular nonparametric alternative to classical parametric tests, designed to handle complex and high-dimensional data.
The maximum mean discrepancy (MMD) test of \citet{gretton2012mmd} is widely used for its tractability and power, and it is closely related to energy distance \citep{szekely2004testing} through the equivalence established by \citet{sejdinovic2013equivalence}. Because test performance is sensitive to kernel choice, later work has proposed kernel aggregation for adaptivity \citep{biggs2023mmd, schrab2023mmd}. In parallel, scalable variants have been developed for large-scale problems, including random Fourier feature approximations \citep{choi2024computational, mukherjee2025minimax}, Nyström subsampling \citep{chatalic2025efficient}, and coreset-based methods \citep{pmlr-v206-domingo-enrich23a}. While conceptually related to OT-based approaches—since both compare distributions via a discrepancy measure—kernel tests are grounded in reproducing kernel Hilbert space embeddings, whereas OT-based methods exploit the geometry induced by transport costs.

\paragraph{Optimal transport-based methods.}
Parallel to kernel testing, there has been growing interest in tests based on optimal transport. \citet{ramdas2017wasserstein} survey connections between Wasserstein distance, MMD, and energy distance, framing entropic OT as interpolating between transport and kernel discrepancies. Building on this, several works analyze the statistical properties of Wasserstein-based tests \citep{imaizumi2022wasserstein, gonzalez2023torus} and propose projection-based approaches such as projected Wasserstein tests \citep{wang2021linear,wang2021kernel}. Sliced variants, which exploit the tractability of one-dimensional OT, have been studied extensively: the sliced Wasserstein distance was introduced by \citet{rabin2011sw}, with theoretical properties developed by \citet{bonnotte2013thesis}. Generalizations include generalized and max-sliced Wasserstein distances \citep{kolouri2019gsw,deshpande2019maxsw}, subspace-robust Wasserstein \citep{paty2019subspace}, and recent work on statistical guarantees for max-sliced variants \citep{boedihardjo2024rates,wang2024kernelsw}. Closest to our work, \citet{hu2025maxsliced} propose a max-sliced Wasserstein test with bootstrap calibration and asymptotic validity, while finite-sample guarantees remain an open question.

\paragraph{Project–then–test methods.}
A related line of work uses random projections to reduce dimensionality before applying classical test statistics. For example, \citet{lopes2011random} propose projecting high-dimensional data onto random directions and then applying Hotelling’s $T^2$ test.  

\paragraph{Calibration and optimality.}
Permutation testing plays a central role in our approach. Its finite-sample validity has been long established \citep{hoeffding1952,hemerik2018permutation}, and more recent results show that permutation tests can achieve the minimax optimal detection boundary \citep{kim2022minimax}. This positions permutation calibration as a natural complement to Wasserstein-based discrepancies, allowing us to combine strong finite-sample guarantees with minimax optimality.

In summary, classical tests are restricted in scope, kernel tests provide flexible and efficient nonparametric alternatives, and OT-based tests capture geometric aspects of the distributions but often lack finite-sample guarantees. We propose a permutation-based sliced Wasserstein test that combines size control with non-asymptotic power bounds, achieves the minimax rate $n^{-1/2}$, and quantifies the trade-off between computational efficiency and statistical power.

\section{PERMUTATION SW TEST}\label{sec:permutationswtest}
Before presenting our method in Algorithm~\ref{algorithm}, we detail the permutation procedure and the obtained test.

\subsection{Permutation procedure}\label{permutation approach}
Recall $\mathcal{Y}_n = (Y_1, \dots, Y_n)$ and $\mathcal{Z}_m = (Z_1, \dots, Z_m)$ denote the two samples, and let $\Theta = (\theta_1, \dots, \theta_{L})$ be the set of projection directions (drawn i.i.d. uniformly from the sphere). We form the pooled dataset $\mathcal{X}_{n+m} = (X_1, \dots, X_{n+m})$ as
\begin{align*}
X_i \coloneqq Y_i \quad \text{for } 1 \leq i \leq n, \quad X_{n+i} \coloneqq Z_i \quad \text{for } 1 \leq i \leq m.
\end{align*}
For a permutation $\pi \in S_{n+m}$, the symmetric group on $\{1, \dots, n+m\}$, define the permuted dataset $\mathcal{X}^\pi_{n+m} = (X_{\pi(1)}, \dots, X_{\pi(n+m)})$. Construct empirical measures
\[
\widehat{\mu}^\pi_n \coloneqq \frac{1}{n} \sum_{i=1}^n \delta_{X_{\pi(i)}}, \quad \widehat{\nu}^\pi_m \coloneqq \frac{1}{m} \sum_{i=n+1}^{n+m} \delta_{X_{\pi(i)}},
\]
and compute the empirical sliced Wasserstein distance
\begin{equation}\label{eq:empericalpermu}
\widehat{\operatorname{SW}}^{p,\pi}_p\coloneqq \widehat{\operatorname{SW}}^{p}_p(\widehat{\mu}^\pi_n, \widehat{\nu}^\pi_m).
\end{equation}

Let $N \coloneqq n + m$, and denote by $F_{\widehat{\operatorname{SW}}^{p,\pi}_p}$ the permutation empirical cumulative distribution function:
\[
F_{\widehat{\operatorname{SW}}^{p,\pi}_p}(t)\coloneqq \frac{1}{|S_N|} \sum_{\pi \in S_N} \mathbf{1}\{\widehat{\operatorname{SW}}^{p,\pi}_p \leq t\},
\]
where $S_N$ is the set of all permutations of $\{1, \dots, N\}$. We write the $1-\alpha$ quantile of this distribution as
\begin{align}\label{eq4}
c_{1-\alpha,N} \coloneqq \inf \{t : F_{\widehat{\operatorname{SW}}^{p,\pi}_p}(t) \geq 1 - \alpha\}.
\end{align}

\paragraph{Decision making.} Given the quantile $c_{1-\alpha,N}$, we reject the null hypothesis if $\widehat{\operatorname{SW}}^{p}_p > c_{1-\alpha,N}$. This critical value ensures finite-sample Type I error control under the permutation-invariance (exchangeability) assumption, which holds for both two-sample and independence testing problems (see Section~\ref{sec:controltypeI}).

\begin{remark}\label{remark:MonteCarlopermutation}
Exact computation of the critical value \eqref{eq4} is generally infeasible for large samples, so it is commonly approximated via Monte Carlo simulations. This approximation can be made arbitrarily accurate by increasing the number of sampled permutations (see Appendix or \citet[Lem. 6]{pmlr-v206-domingo-enrich23a}).
\end{remark}

Let $r$ be the uniform distribution on $S_N$ and let $\mathbb{Z}_B \coloneqq (\pi_b)_{1 \le b \le B}$ be a collection of  i.i.d. samples from $r$. Define $\pi_{B+1} := \text{id}$ as the identity permutation. Following the approach of \citet[Lemma~1]{romano2005exact}, and in order to achieve the prescribed non-asymptotic test level, we then consider the statistics $\widehat{\operatorname{SW}}^{p,\pi_b}_p$ for $b=1,\dots,B+1$ and estimate the $(1-\alpha)$ quantile of the permutation distribution by
\begin{align*}
\widehat{c}^B_{1-\alpha,N} := \inf \left\{ t : \frac{1}{B+1} \sum_{i=1}^{B+1} \mathbf{1}\{\widehat{\operatorname{SW}}^{p,\pi_i}_p \le t\} \ge 1-\alpha \right\}.
\end{align*}
Letting $\widehat{\operatorname{SW}}^{p,r_{\bullet 1}}_p \le \widehat{\operatorname{SW}}^{p,r_{\bullet 2}}_p \le \cdots \le \widehat{\operatorname{SW}}^{p,r_{\bullet (B+1)}}_p$ denote the order statistics of $\{\widehat{\operatorname{SW}}^{p,\pi_b}_p\}_{b=1}^{B+1}$, the quantile estimator equals $\widehat{\operatorname{SW}}^{p,r_{\bullet \lceil (B+1)(1-\alpha) \rceil}}_p$. The Sliced Wasserstein Test is then defined as
\[
\Delta(\mathcal{Y}_n, \mathcal{Z}_m, \mathbb{Z}_B, \Theta) := \mathbf{1}\big(\widehat{\operatorname{SW}}^p_p > \widehat{c}^B_{1-\alpha,N}\big).
\]

Algorithm~\ref{algorithm} summarizes the testing procedure.

\begin{algorithm}[h]
\caption{SW-Permutation Test}\label{algorithm}
\begin{algorithmic}[1]
\Require Datasets $\mathcal{Y}_n = (Y_1, \dots, Y_n)$ and $\mathcal{Z}_m = (Z_1, \dots, Z_m)$; significance level $\alpha \in (0,1)$; number of permutations $B$; number of projection $L$.
\Ensure Decision $\Delta \in \{0,1\}$
\State Form pooled dataset $\mathcal{X}_{n+m} = (X_1, \dots, X_{n+m})$ as
\[
X_i := 
\begin{cases}
Y_i & 1 \leq i \leq n, \\
Z_{i - n} & n < i \leq n + m
\end{cases}
\]
\State Sample $\theta_1, \dots, \theta_L \overset{\text{i.i.d.}}{\sim} \mathrm{Unif}(\mathbb{S}^{d-1})$
\State Sample $\pi_1, \dots, \pi_B \overset{\text{i.i.d.}}{\sim} \mathrm{Unif}(S_{n+m})$
\State Set $\pi_{B+1} := \text{id}$ (identity permutation)
\For{$b = 1$ to $B+1$}
    \State Define empirical measures
    \[
    \widehat{\mu}_n^{\pi_b} := \frac{1}{n} \sum_{i=1}^n \delta_{X_{\pi_b(i)}}, \quad
    \widehat{\nu}_m^{\pi_b} := \frac{1}{m} \sum_{i=n+1}^{n+m} \delta_{X_{\pi_b(i)}}
    \]
    \State Compute statistic $\widehat{\operatorname{SW}}_p^{p,\pi_b} := \widehat{\operatorname{SW}}_p^p(\widehat{\mu}_n^{\pi_b}, \widehat{\nu}_m^{\pi_b})$
\EndFor
\vspace{0.5em}
\State Set the critical $\widehat{c}^B_{1-\alpha,n+m}$ as the $(1-\alpha)$ empirical quantile of $\{\widehat{\operatorname{SW}}_p^{p,\pi_b}\}_{b=1}^{B+1}$
\vspace{0.5em}
\State \Return $\Delta \coloneqq\mathbf{1}\left( \widehat{\operatorname{SW}}_p^{p,\pi_{B+1}} > \widehat{c}^B_{1-\alpha,n+m} \right)$
\end{algorithmic}
\end{algorithm}
\begin{remark} 
While bootstrap or subsampling methods are often used to calibrate critical values (see, e.g., \citet{hu2025maxsliced}), their asymptotic validity rely on well-behaved limiting distributions and do not guarantee uniform finite-sample size control, as required for the finite sample minimax framework that we consider.
\end{remark}

\subsection{Computational complexity}
With a one-time pre-computation of projections $O(LdN)$ and per-projection sorting $O(L N \log(N))$, and the main loop $O(LBN)$, our method achieves an overall time complexity of $O\big(LN(d+\log N+B))$. 
Moreover, storing the original and projected datasets, as well as the permuted statistics requires space complexity $O(Nd + NL + B)$.

Note that, choosing $L = N$ projections leads to quadratic time complexity (up to logarithmic factors) in the sample size, comparable to that of the MMD two-sample test.
Reducing the number of projections can mitigate this cost, trading off between statistical power and computational efficiency. A characterization of this trade-off can be obtained from Theorem~\ref{maintheorem}.

\section{THEORETICAL ANALYSIS}\label{sec:theory}
In this section, we analyze the proposed test by studying its level and power. Without loss of generality, we assume $n \le m$. We let $B \geq 1$ denote the number of permutations and $L \geq 1$ the number of projections. For space constraints and to improve readability, all proofs are deferred to the Appendix and all constants are made explicit in the proof.

\subsection{Level of the test}\label{sec:controltypeI}
We begin by establishing that the proposed test controls the Type I error.

It is now a well-known fact that permutation-based tests control the Type~I error at finite samples for any test statistic when the data are exchangeable under the null hypothesis \citep{hoeffding1952,lehmann2005testing}. This property is crucial since it ensures validity without asymptotic approximations. In the two-sample setting, this holds under $H_0:\mu=\nu$ since all observations are i.i.d. and the joint distribution is invariant under permutations of group labels. We formalize this below.

\begin{theorem}[Type I error Control]\label{controltypeI}
The test $\Delta$ defined in Algorithm~\ref{algorithm} has non-asymptotic level $\alpha$ for any $\alpha \in (0,1)$. That is
\begin{align*}\mathbb{P}_{\mu\times \mu \times r\times\sigma}\left(\Delta\left(\mathcal{Y}_n,\mathcal{Z}_m,\mathbb{Z}_{B},\Theta\right)=1\right)\le \alpha,
\end{align*}
where the probability is taken over the samples, projection directions, and permutation randomness.
\end{theorem}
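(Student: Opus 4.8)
\emph{Overall strategy.} The plan is to show that, conditionally on the projection directions $\Theta$, the test $\Delta$ is exactly a Monte Carlo randomization test over the symmetric group $S_N$ with $N = n+m$, and to invoke the classical finite-sample validity of such tests under exchangeability of the data; the unconditional bound then follows by averaging over $\sigma$, since $\Theta$ is sampled independently of $(\mathcal{Y}_n,\mathcal{Z}_m)$ and of the permutations $(\pi_b)_{b\le B}$.

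\emph{Step 1 (conditioning and exchangeability).} Fix $\Theta = \theta$. Under $H_0$ we have $\mu=\nu$, so the pooled sample $\mathcal{X}_{n+m}=(X_1,\dots,X_N)$ is i.i.d.\ from $\mu$; in particular its law is invariant under the action of $S_N$, i.e.\ $\mathcal{X}_{n+m}^{\pi}\overset{d}{=}\mathcal{X}_{n+m}$ for every $\pi\in S_N$. \emph{Step 2 (the statistic as a fixed function).} Write $T_\theta(x_1,\dots,x_N)\coloneqq \widehat{\operatorname{SW}}^{p}_p\big(\tfrac1n\sum_{i\le n}\delta_{x_i},\tfrac1m\sum_{i>n}\delta_{x_i}\big)$ evaluated with directions $\theta$, a fixed measurable map; then $\widehat{\operatorname{SW}}^{p,\pi_b}_p = T_\theta(\mathcal{X}_{n+m}^{\pi_b})$ for each $b$, and $\widehat{\operatorname{SW}}^{p}_p = T_\theta(\mathcal{X}_{n+m}) = T_\theta(\mathcal{X}_{n+m}^{\pi_{B+1}})$ since $\pi_{B+1}=\mathrm{id}$. \emph{Step 3 (exchangeability of the permuted statistics).} By the classical randomization argument \citep{hoeffding1952,romano2005exact,lehmann2005testing}, if $\pi_1,\dots,\pi_B$ are i.i.d.\ uniform on $S_N$ and $\pi_{B+1}=\mathrm{id}$, then the $S_N$-invariance from Step~1 implies that $\big(T_\theta(\mathcal{X}_{n+m}^{\pi_1}),\dots,T_\theta(\mathcal{X}_{n+m}^{\pi_{B+1}})\big)$ is exchangeable given $\theta$. \emph{Step 4 (quantile and counting).} Let $T_{(1)}\le\dots\le T_{(B+1)}$ be the order statistics of $\{T_\theta(\mathcal{X}_{n+m}^{\pi_b})\}_{b=1}^{B+1}$, so $\widehat{c}^B_{1-\alpha,N}=T_{(\lceil (B+1)(1-\alpha)\rceil)}$; by exchangeability the rank of $T_\theta(\mathcal{X}_{n+m}^{\pi_{B+1}})$ in this list is stochastically no larger than a uniform draw on $\{1,\dots,B+1\}$ (ties only help), so
\begin{align*}
\mathbb{P}\big(\widehat{\operatorname{SW}}^{p}_p > \widehat{c}^B_{1-\alpha,N}\ \big|\ \Theta=\theta\big)
&=\mathbb{P}\big(T_\theta(\mathcal{X}_{n+m}^{\pi_{B+1}})> T_{(\lceil (B+1)(1-\alpha)\rceil)}\big)\\
&\le \frac{B+1-\lceil (B+1)(1-\alpha)\rceil}{B+1}\le\alpha.
\end{align*}
Taking $\mathbb{E}$ over $\theta\sim\sigma$ yields the stated bound.

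\emph{Main obstacle.} The only non-mechanical step is Step~3: proving exchangeability of the $B+1$ permuted statistics despite the last permutation being the deterministic identity rather than a random draw. The standard resolution is a group-multiplication trick: introduce an auxiliary $g\sim r$ independent of the data and of $(\pi_b)_{b\le B}$; by Step~1, $\mathcal{X}_{n+m}\overset{d}{=}\mathcal{X}_{n+m}^{g}$, and since left-multiplication by a fixed element is a bijection of $S_N$, the tuple $(\pi_1 g,\dots,\pi_B g, g)$ is again $B$ i.i.d.\ uniform permutations followed by one more independent uniform permutation. Hence replacing the deterministic slot $\pi_{B+1}=\mathrm{id}$ by a uniform $g$ leaves the joint law of the statistics unchanged, and exchangeability among all $B+1$ coordinates follows. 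This argument crucially uses that $r$ is the uniform (Haar) measure on the finite group $S_N$ and that $H_0$ renders the pooled data $S_N$-invariant; everything else (measurability of $T_\theta$, the counting bound, handling ties, and exchanging conditioning with the average over $\sigma$) is routine. We also note this is precisely where bootstrap or subsampling calibration would fail, since those do not correspond to an exact group action on the sample.
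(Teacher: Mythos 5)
Your proposal is correct and follows essentially the same approach as the paper, which reduces the rejection event to a p-value inequality and then invokes Lemma~1 of \citet{romano2005exact} (via the chain of implications in \citet{schrab2023mmd}); your Steps~3--4 simply unpack the content of that cited lemma, namely exchangeability of the $B{+}1$ permuted statistics via the Haar-measure multiplication trick followed by a rank/counting bound. One small technical note: with the paper's convention $\widehat\mu_n^\pi=\tfrac1n\sum_i\delta_{X_{\pi(i)}}$, the relevant composition gives $(\mathcal X^g)^{\pi_b}=\mathcal X^{g\pi_b}$, so the permutation tuple is $(g\pi_1,\dots,g\pi_B,g)$ rather than $(\pi_1 g,\dots,\pi_B g,g)$ as you wrote; this does not affect the argument since both tuples are $B{+}1$ i.i.d.\ uniform permutations.
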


\subsection{Power of the test}
While Type I error control follows from standard permutation arguments, analyzing the test’s power is more challenging. The main difficulty is that the critical value is defined as a data-dependent permutation quantile of the sliced Wasserstein statistic. For MMD, existing analyses exploit its $U$- or $V$-statistic structure together with associated concentration inequalities \citep{schrab2023mmd,pmlr-v206-domingo-enrich23a}, but the sliced Wasserstein distance does not admit such a representation, preventing the direct use of these techniques. To address this issue, we leverage properties of the sliced Wasserstein distance under permutations, in particular, a permutation bounded-differences (McDiarmid-type) inequality and an optimal matching bound from \citet[Corollary~5]{bobkov2021simple}. These ingredients yield the non-asymptotic power bound stated in Theorem~5.




\begin{theorem}[Power Control]\label{maintheorem}
Let $\beta \in (0,1)$ and $\frac{1}{B+1} \le \alpha < 1$. The test described in Algorithm~\ref{algorithm} has power at least $1-\beta$ provided that
\begin{align*}
	\operatorname{SW}_p^p(\mu,\nu)  \geq \frac{C(D, p, \alpha, \beta, B)}{\sqrt{L \wedge n}},
\end{align*}
where the constant $C(D, p, \alpha, \beta, B)$ is made explicit in the proof.
\end{theorem}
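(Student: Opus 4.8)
The plan is to decompose the event that the test fails to reject into two parts: (i) the empirical (non-permuted) statistic $\widehat{\operatorname{SW}}^p_p$ falls too far below its population target $\operatorname{SW}_p^p(\mu,\nu)$, and (ii) the permutation critical value $\widehat{c}^B_{1-\alpha,N}$ is too large. For part (i), I would show that $\widehat{\operatorname{SW}}^p_p$ concentrates around $\operatorname{SW}_p^p(\mu,\nu)$ at rate $(L\wedge n)^{-1/2}$. This requires two separate bounds: a Monte Carlo error from replacing $\sigma$ by the $L$ sampled directions (controlled by a bounded-differences argument in the $\theta_\ell$'s, since each $\operatorname{W}_p^p(\Pi^{\theta_\ell}_\#\widehat{\mu}_n,\Pi^{\theta_\ell}_\#\widehat{\nu}_m)$ lies in $[0,(2D)^p]$), and a sampling error from replacing $\mu,\nu$ by $\widehat{\mu}_n,\widehat{\nu}_m$, where I would invoke the one-dimensional optimal matching bound of \citet[Corollary~5]{bobkov2021simple} integrated over $\theta$ to get an $\mathcal{O}(n^{-1/2})$ control on $\mathbb{E}|\widehat{\operatorname{SW}}^p_p-\operatorname{SW}^p_p(\mu,\nu)|$, combined with a permutation/bounded-differences concentration around that expectation (changing one sample point moves the statistic by $\mathcal{O}(D^p/n)$).

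For part (ii), the key is to bound the permutation quantile $\widehat{c}^B_{1-\alpha,N}$ from above. I would argue that under any permutation $\pi$, both $\widehat{\mu}^\pi_n$ and $\widehat{\nu}^\pi_m$ are empirical measures drawn (without replacement) from the pooled sample, hence each is close in $\operatorname{SW}_p^p$ to the pooled empirical measure $\widehat{\rho}_N:=\frac1N\sum_i\delta_{X_i}$ at rate $n^{-1/2}$, uniformly over $\pi$ up to a concentration term. More precisely, by the triangle-type inequality for $\operatorname{W}_p^p$ (or rather for $\operatorname{W}_p$, then $p$-th powers with a $2^{p-1}$ factor) applied slice-wise and integrated, $\widehat{\operatorname{SW}}^{p,\pi}_p \lesssim_p \operatorname{SW}_p^p(\widehat{\mu}^\pi_n,\widehat{\rho}_N)+\operatorname{SW}_p^p(\widehat{\rho}_N,\widehat{\nu}^\pi_m)$, and each term is an in-sample matching cost controllable by the same Bobkov–Ledoux-type bound for sampling without replacement plus a McDiarmid-type deviation in the permutation. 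Taking the $1-\alpha$ quantile and using $\alpha\ge 1/(B+1)$ (so the quantile index $\lceil(B+1)(1-\alpha)\rceil\le B+1$ is well-defined and the deviation bound is evaluated at a fixed confidence level depending on $\alpha,B$), this yields $\widehat{c}^B_{1-\alpha,N}\le C_1(D,p,\alpha,\beta,B)/\sqrt{L\wedge n}$ with probability at least $1-\beta/2$.

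Combining: on the intersection of the two good events (each of probability $\ge 1-\beta/2$), $\widehat{\operatorname{SW}}^p_p \ge \operatorname{SW}_p^p(\mu,\nu) - C_2/\sqrt{L\wedge n}$ and $\widehat{c}^B_{1-\alpha,N}\le C_1/\sqrt{L\wedge n}$, so the test rejects as soon as $\operatorname{SW}_p^p(\mu,\nu)\ge (C_1+C_2)/\sqrt{L\wedge n}=:C(D,p,\alpha,\beta,B)/\sqrt{L\wedge n}$, giving power at least $1-\beta$ by a union bound. I expect the main obstacle to be part (ii): controlling the permutation quantile rigorously, since the permuted statistics are dependent across $\pi$ and one cannot treat them as i.i.d.; the resolution is to obtain a \emph{uniform-over-$\pi$} high-probability bound via the permutation bounded-differences inequality (treating the statistic as a function of the i.i.d. pooled sample, with the worst-case over $\pi$ inside), rather than trying to describe the permutation law exactly. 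Making the constants' dependence on $B$ explicit — it enters only through the confidence level $1-\beta/(2(B+1))$ or similar in the quantile bound — is then a routine bookkeeping step.
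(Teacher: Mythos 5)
Your overall architecture matches the paper's: (i) a sufficient-separation condition in terms of the random critical value (the paper's Lemma~\ref{lem:type2}), driven by a concentration bound on $\widehat{\operatorname{SW}}^p_p$ around $\operatorname{SW}^p_p(\mu,\nu)$ split into a Monte-Carlo term (Hoeffding in the $\theta_\ell$'s) and a sampling term (McDiarmid in the data points); and (ii) a bound on the permutation quantile obtained by applying a permutation McDiarmid inequality around the $\pi$-expectation and controlling that expectation via the Bobkov--Ledoux optimal-matching bound after a triangle-inequality reduction through the pooled empirical measure (the paper's Lemma~\ref{Sensitivity}, Proposition~\ref{boundexpectation}, and Lemma~\ref{connectionwasserstein}). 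So the tools, the decomposition, and the source of the $(L\wedge n)^{-1/2}$ rate are all correct.

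Two points need tightening. First, your final-paragraph framing of part (ii) — ``a uniform-over-$\pi$ high-probability bound \ldots treating the statistic as a function of the i.i.d.\ pooled sample, with the worst-case over $\pi$ inside'' — would fail if executed literally: $\sup_\pi \widehat{\operatorname{SW}}^{p,\pi}_p$ is of constant order (take $\pi$ that sorts a projection and splits the two halves), so no $n^{-1/2}$ bound on the worst-case permutation is possible. What works, and what you in fact allude to earlier with ``a McDiarmid-type deviation in the permutation,'' is the opposite conditioning: fix the data and projections, treat $\pi$ as the random object, apply the permutation bounded-differences inequality (Lemma~\ref{mcdiarmid}) to concentrate $\widehat{\operatorname{SW}}^{p,\pi}_p$ around $\mathbb{E}_\pi[\widehat{\operatorname{SW}}^{p,\pi}_p]$, and then bound that conditional expectation via Bobkov--Ledoux. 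This gives a bound on a high quantile of the permutation law, not a uniform bound over $\pi$. Second, the step from the population permutation quantile $c_{1-\alpha,N}$ to the Monte-Carlo empirical quantile $\widehat{c}^B_{1-\alpha,N}$ (computed from $B$ sampled permutations plus the identity) is not just bookkeeping: you need a Romano-style argument — the paper invokes \citet[Lemma~6]{pmlr-v206-domingo-enrich23a} — which shows $\widehat{c}^B_{1-\alpha,N} \le c_{1-\alpha_1,N}$ with probability $\ge 1-\beta/2$ for a \emph{modified} level $\alpha_1 = \big(\tfrac{\beta/2}{\binom{B}{\lfloor\alpha(B+1)\rfloor}}\big)^{1/\lfloor\alpha(B+1)\rfloor}$. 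The condition $\alpha \ge 1/(B+1)$ is used to ensure $\lfloor\alpha(B+1)\rfloor \ge 1$ so that $\alpha_1$ is well defined, rather than to index an order statistic. A minor technical note: the Bobkov--Ledoux corollary is stated for $\operatorname{W}_1$, so after your $2^{p-1}$ triangle step you still need the pointwise reduction $\operatorname{W}_p^p \le (2D)^{p-1}\operatorname{W}_1$ on the projected interval $[-D,D]$ to land on the quantity the matching bound controls.
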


This result shows that the SW test achieves the parametric separation rate $n^{-1/2}$ up to constants. To match the sample size, the number of projections $L$ should scale proportionally to $n$, ensuring that projection variance does not limit power.

\subsection{Minimax optimality}
To assess the statistical optimality of the proposed test, we now study its minimax separation rate. Establishing minimax lower bounds in nonparametric testing typically involves constructing least favorable alternatives and adapting classical arguments, which is the approach we take here. These are the first minimax lower bounds for two-sample testing with respect to the sliced Wasserstein distance. Consider the class
\begin{align*}
\mathcal{P}_{\mathbb{R}^d}(D) := \bigl\{ \mu \in \mathcal{P}(\mathbb{R}^d) : \mathrm{diam}(\mathrm{supp}(\mu)) \leq D \bigr\},
\end{align*}
comprising all probability measures on $\mathbb{R}^d$ with support diameter at most $D>0$. Taking $L \geq cn$ for some absolute constant $c>0$, Theorem~\ref{maintheorem} implies an upper bound on the minimax separation rate (see Eq.~\eqref{eq:minimax_separation_def})
\begin{align*}
\epsilon_{n,m}^\dagger \leq C n^{-1/2}.
\end{align*}

We now establish lower bounds in two scenarios to assess the tightness of the upper bound. First, we provide a lower bound for the simpler case of multinomial distributions. We will then consider more generally distributions with bounded support. The following result is obtained via an adaptation of the classical lower bound technique from \citet{ingster1987minimax}.

\begin{proposition}\label{minimaxmultinomial}
Let $\alpha \in \left(\frac{1}{B+1},1\right)$ and $\beta \in (0,1-\alpha)$. For any integer $d \ge 1$, let $[d]\coloneqq\{1,\dots,d\}$ and denote by $\mathcal{P}^{(d)}_{\mathrm{Multi}}$
the class of multinomial distributions on $[d]$. For the two-sample testing problem over $\mathcal{P}^{(d)}_{\mathrm{Multi}}\times \mathcal{P}^{(d)}_{\mathrm{Multi}}$, the minimax separation rate satisfies
\begin{align*}
\epsilon_{n,m}^\dagger \geq \frac{C(d, \alpha, \beta)}{\sqrt{n}},
\end{align*}
where the constant $C(d, \alpha, \beta)$ is made explicit in the proof.
\end{proposition}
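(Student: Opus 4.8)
The plan is to instantiate the classical two-point (or Ingster-type) lower bound machinery on a carefully chosen subfamily of multinomial distributions so that the $\chi^2$-affinity between the mixture of alternatives and the null is controlled, while the sliced Wasserstein separation stays of order $n^{-1/2}$. Concretely, I would fix $\mu$ to be the uniform distribution on $[d]$ (viewed as $d$ fixed atoms in $\mathbb{R}^d$, e.g. the standard basis vectors, so $\mathrm{diam}(\mathrm{supp})$ is an absolute constant), and build a family of perturbations $\nu_\xi$ indexed by sign vectors $\xi \in \{-1,+1\}^{d}$ (or a subset thereof), with $\nu_\xi(j) = \tfrac1d(1 + \rho\,\xi_j)$ for a perturbation magnitude $\rho = \rho_n$ to be tuned. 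Each $\nu_\xi$ is a valid probability vector for $\rho \le 1$, and since it is supported on the same atoms as $\mu$, one has $\mathrm{diam}(\mathrm{supp}(\nu_\xi)) \le D$ automatically, so $(\mu,\nu_\xi) \in \mathcal{P}^{(d)}_{\mathrm{Multi}} \times \mathcal{P}^{(d)}_{\mathrm{Multi}}$.

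The first substantive step is a \emph{lower bound on the separation}: I must show $\operatorname{SW}_p(\mu,\nu_\xi) \gtrsim \rho$ for every admissible $\xi$. For this I would use that $\operatorname{SW}_p^p$ dominates (a constant times) $\operatorname{W}_p^p$ of the projections along \emph{any} fixed direction, or more robustly lower bound $\operatorname{SW}_p^p(\mu,\nu)$ by a constant times the squared total-variation or $\ell_1$ discrepancy of the atom weights times the minimal inter-atom gap raised to the power $p$ — a standard fact for finitely supported measures on a fixed point configuration. Since $\|\mu - \nu_\xi\|_1 = \rho$ regardless of $\xi$, this yields $\operatorname{SW}_p(\mu,\nu_\xi) \ge c(d,p)\,\rho$. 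Thus choosing $\rho = \varepsilon / (c(d,p))$ places all these pairs inside $\mathcal{P}_1(\varepsilon)$.

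The second step is the \emph{information-theoretic obstruction}. I would invoke the generic reduction (as in Ingster, or Baraud's non-asymptotic version): if $\mathbb{P}_0$ denotes the joint law of $(\mathcal{Y}_n,\mathcal{Z}_m)$ under $\mu=\nu=\text{Unif}([d])$ and $\bar{\mathbb{P}}_1 = \mathbb{E}_\xi \mathbb{P}_{\mu,\nu_\xi}$ the uniform mixture over the alternatives (with $\mu$ held fixed), then $R^\dagger_{n+m,\varepsilon} \ge 1 - \alpha - \tfrac12\sqrt{\chi^2(\bar{\mathbb{P}}_1 \| \mathbb{P}_0)}$, so it suffices to make $\chi^2(\bar{\mathbb{P}}_1\|\mathbb{P}_0) \le 4(1-\alpha-\beta)^2$. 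Because only the $\mathcal{Z}$-sample carries signal and the coordinates of $\xi$ are independent, the standard computation (Ingster's $\chi^2$ identity for product mixtures) gives $1 + \chi^2(\bar{\mathbb{P}}_1\|\mathbb{P}_0) = \mathbb{E}_{\xi,\xi'}\!\left[\prod_{j=1}^d \big(1 + \rho^2 \xi_j \xi_j'\big)^{m}\right]$ — schematically — which for $\xi,\xi'$ independent Rademacher is $\big(\mathbb{E}[(1+\rho^2 X)^m]\big)^d$ with $X$ Rademacher, bounded by $\exp(C d\, m^2 \rho^4)$ for small $\rho$. Setting $m^2\rho^4 \asymp 1/d$, i.e. $\rho \asymp (dm^2)^{-1/4}$, makes this constant. (If one instead uses a single well-chosen alternative with a large atom gap, one can push the dependence to $\rho \asymp n^{-1/2}$ directly; I would pick whichever perturbation family gives the $n^{-1/2}$ scaling advertised, possibly perturbing only \emph{two} atoms so the moment computation is one-dimensional and yields $\chi^2 \lesssim (1+\rho^2)^m/\ldots$ handled via $m\rho^2 \asymp 1$ — but the bounded-support constraint and the requirement $n \le m$ mean the binding rate is in $m$; I would then note $\varepsilon_{n,m}^\dagger$ as stated is in terms of $n$, which is the larger lower bound, so using the $\mathcal{Y}$-sample of size $n$ as the signal-bearing one, or symmetrizing, gives $\rho \gtrsim n^{-1/2}$.)

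\textbf{The main obstacle} I anticipate is the \emph{lower bound on $\operatorname{SW}_p$ in terms of the $\ell_1$ perturbation}, step one: unlike total variation, $\operatorname{SW}_p$ genuinely depends on the geometry of the atom locations, and a crude bound could lose a dimension-dependent factor that then degrades the final rate. The cleanest route is probably to lower-bound $\operatorname{SW}_p^p(\mu,\nu_\xi)$ by integrating only over a small spherical cap around a direction $\theta$ that separates the $d$ atoms into well-spaced one-dimensional clusters (possible for the standard-basis configuration), reducing to a one-dimensional Wasserstein computation between two discrete measures differing by mass $\pm\rho/d$ at known, gapped locations, which is $\Theta(\rho)$ with a constant depending only on $d$ and $p$. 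Reconciling the exact form of $C(d,\alpha,\beta)$ with the constants coming out of the $\chi^2$ bound (which must satisfy $\chi^2 \le 4(1-\alpha-\beta)^2$) is then bookkeeping, and the dependence on $B$ enters only through requiring $\alpha \ge 1/(B+1)$ so that the randomized-test lower bound $1-\alpha-\ldots$ is meaningful, matching the hypothesis.
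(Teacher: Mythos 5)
Your high-level strategy — Ingster's mixture method over sign-vector perturbations of the uniform multinomial, a $\chi^2$ second-moment bound, then a lower bound on the $\operatorname{SW}_p$ separation of the perturbations — is the route the paper takes, after first reducing the two-sample lower bound to a one-sample goodness-of-fit one (so the signal-bearing sample is of size $n$ by construction, which sidesteps the $n$-versus-$m$ juggling at the end of your sketch). Two points in your version need fixing.

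The obstacle you flag as the ``main obstacle'' disappears once you embed $[d]$ in $\mathbb{R}$ rather than in $\mathbb{R}^d$. The paper views a multinomial distribution on $[d]=\{1,\dots,d\}$ as a one-dimensional probability measure on the integers; then $\operatorname{SW}_p^p = \operatorname{W}_p^p$ outright (a one-dimensional measure has essentially one projection direction), and since distinct atoms are at distance at least $1$ while the excess mass transported from the $d/2$ heavier points to the $d/2$ lighter ones equals $\epsilon_n$, the transport cost is at least $\epsilon_n$ with no geometry to fight. Your spherical-cap workaround for the standard-basis embedding could eventually work but burns a $d$-dependent constant and a page of effort for nothing.

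Separately, your ``schematic'' $\chi^2$ identity $1+\chi^2 = \mathbb{E}_{\xi,\xi'}\bigl[\prod_j(1+\rho^2\xi_j\xi_j')^{m}\bigr]$ is the one for a coordinate-wise product (Poissonized) model, not multinomial sampling. With $\nu_\eta(k) = \tfrac{1}{d}(1+\rho\eta_k)$ and the balance constraint $\sum_k\eta_k=0$ — which you must enforce, both so that $\nu_\eta$ sums to $1$ and so that the linear cross terms cancel (your ``or a subset thereof'' should become this explicit choice) — each multinomial draw contributes a factor $1+\tfrac{\rho^2}{d}\langle\eta,\eta'\rangle$, and therefore $\mathbb{E}_{\nu_0}[L_n^2] \le \mathbb{E}_{\eta,\eta'}\exp\bigl(\tfrac{n\rho^2}{d}\langle\eta,\eta'\rangle\bigr) \le \exp\bigl(\tfrac{n^2\rho^4}{2d}\bigr)$ after a negative-association argument for the balanced signs and $\cosh x\le e^{x^2/2}$. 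Note that $d$ ends up in the denominator, the opposite of your $\exp(Cd\,m^2\rho^4)$; this only reshapes the constant $C(d,\alpha,\beta)$ and leaves the $n^{-1/2}$ rate intact, but the formula must be corrected before you match constants against $4(1-\alpha-\beta)^2$.
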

The above result establishes that no test can achieve faster than $n^{-1/2}$ separation in the simple multinomial setting.

Turning to the more general class of distributions with bounded support $\mathcal{P}_{\mathbb{R}^d}(D)$, we obtain the following lower bound using Le Cam’s two-point method~\citep{lecam1973convergence,le2012asymptotic}.

\begin{proposition}\label{lowerboundseperation}
Let $\alpha,\beta \in (0,1)$ with $\alpha + \beta < 0.5$. For the two-sample testing problem over $\mathcal{P}_{\mathbb{R}^d}(D)$, the minimax separation rate satisfies
\begin{align*}
\epsilon_{n,m}^\dagger 
\geq \frac{C(D, \alpha, \beta, p)}{\sqrt{n}}.
\end{align*}
\end{proposition}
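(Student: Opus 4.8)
\textbf{Proof proposal for Proposition~\ref{lowerboundseperation}.}

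The plan is to apply Le Cam's two-point method: exhibit two pairs of distributions, one under $H_0$ and one under $H_1$, whose $n$-fold (resp.\ $m$-fold) product measures are statistically indistinguishable, forcing any level-$\alpha$ test to have Type~II error bounded away from zero unless the SW-separation is at least of order $n^{-1/2}$. Concretely, I would fix a reference measure $\mu_0$ supported on $B_D$ (e.g.\ a point mass, or two symmetric point masses $\tfrac12\delta_{-v}+\tfrac12\delta_{v}$ with $\|v\|$ proportional to $D$, which gives more room to perturb). For the null pair I take $(\mu_0,\mu_0)$. For the alternative pair I take $(\mu_0,\nu_\delta)$ where $\nu_\delta$ is a small perturbation of $\mu_0$ obtained by shifting a fraction $\delta$ of the mass by a distance $\sim D$ along a fixed direction, tuned so that $\operatorname{SW}_p^p(\mu_0,\nu_\delta)\asymp \delta D^p$ on one side while the total-variation (or Hellinger) distance between $\mu_0$ and $\nu_\delta$ is of order $\delta$ as well. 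The key quantitative link is that one-dimensional $\operatorname{W}_p^p$ between two measures differing by an $O(\delta)$ mass transfer over length $O(D)$ is $\Theta(\delta D^p)$, and averaging over $\theta\in\mathbb S^{d-1}$ preserves this order (the perturbation direction has non-vanishing average squared inner product with a random $\theta$), so $\operatorname{SW}_p^p(\mu_0,\nu_\delta)\gtrsim c(d,p)\,\delta D^p$.

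The main steps, in order, are: (i) verify both $\mu_0$ and $\nu_\delta$ lie in $\mathcal P_{\mathbb R^d}(D)$ (immediate by the construction, choosing the shift length $\le D$); (ii) lower-bound $\operatorname{SW}_p(\mu_0,\nu_\delta)$ by $c_1(d,p)\,(\delta)^{1/p} D$, so that to be in $\mathcal P_1(\epsilon)$ it suffices that $\delta \ge (\epsilon/(c_1 D))^{p}$; (iii) upper-bound the statistical distance between the product experiments $\mathcal L(\mathcal Y_n,\mathcal Z_m)$ under $(\mu_0,\mu_0)$ and under $(\mu_0,\nu_\delta)$: since only the $\mathcal Z$-sample differs, this reduces to $\mathrm{TV}\big(\nu_\delta^{\otimes m},\mu_0^{\otimes m}\big)$, which by Pinsker and tensorization is at most $\sqrt{(m/2)\,\mathrm{KL}(\nu_\delta\Vert\mu_0)}\lesssim \sqrt{m}\,\delta$ (with the right parametrization; one uses a dominated perturbation so the KL is $O(\delta^2)$ — e.g.\ move mass between two already-occupied atoms rather than creating a singular part, or smooth $\mu_0$ slightly); (iv) invoke the Le Cam two-point inequality: for any test $\phi\in\Phi_{n+m,\alpha}$, $R_{n+m,\epsilon}(\phi)\ge 1-\alpha-\mathrm{TV}$, so if $\mathrm{TV}\le \tfrac12-\alpha-\beta$ then $R_{n+m,\epsilon}(\phi)>\beta$; (v) choose $\delta$ as large as the constraint $\sqrt{m}\,\delta \lesssim \tfrac12-\alpha-\beta$ allows, i.e.\ $\delta \asymp (\tfrac12-\alpha-\beta)/\sqrt m$, and read off that no $\alpha$-level test detects $\epsilon = c_1 D\,\delta^{1/p}$; since $n\le m$ we may also use $\delta \asymp 1/\sqrt n$ to get the cleaner bound $\epsilon_{n,m}^\dagger \ge C(D,\alpha,\beta,p)\,n^{-1/2}$ as stated. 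Collecting the constants from (ii) and (iii)–(v) gives the explicit $C(D,\alpha,\beta,p)$.

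I expect the main obstacle to be step (iii) combined with the $p$-dependence in step (ii): one must choose the perturbation so that $\operatorname{SW}_p^p(\mu_0,\nu_\delta)$ is \emph{large} (order $\delta$, not $\delta^{p}$ or smaller) while simultaneously $\mathrm{KL}(\nu_\delta\Vert\mu_0)$ is \emph{small} (order $\delta^2$, so that $m$ samples only yield $\mathrm{TV}=O(\sqrt m\,\delta)$ rather than $O(\sqrt{m\delta})$). A mass-transfer perturbation between two fixed, equally-weighted atoms (say $\mu_0=\tfrac12\delta_{-v}+\tfrac12\delta_{v}$ and $\nu_\delta=(\tfrac12-\delta)\delta_{-v}+(\tfrac12+\delta)\delta_{v}$, with $\|v\|=D/2$) achieves exactly this: absolute continuity is automatic, $\mathrm{KL}(\nu_\delta\Vert\mu_0)=O(\delta^2)$, and a direct computation of the one-dimensional optimal transport shows $\operatorname{W}_p^p(\Pi^\theta_\#\mu_0,\Pi^\theta_\#\nu_\delta)=\delta\,|\langle\theta,2v\rangle|^p$, whose average over $\sigma$ is $\delta\,(2\|v\|)^p\,\mathbb E_\sigma|\langle\theta,e\rangle|^p = c(d,p)\,\delta D^p>0$. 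Verifying this one-dimensional transport identity and the spherical moment $\mathbb E_\sigma|\langle\theta,e\rangle|^p$ (a Beta-type integral, bounded below by an explicit constant depending only on $d$ and $p$) is the only genuinely computational part; everything else is the standard Le Cam/Pinsker bookkeeping.
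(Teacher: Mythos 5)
Your overall strategy---Le Cam's two-point method with a two-atom mass-transfer perturbation on a common pair of support points, together with the closed-form one-dimensional Wasserstein formula and the spherical moment $\mathbb{E}_\sigma|\langle\theta,e\rangle|^p$---is essentially the same as the paper's, which also takes a pair $\mu_0=p_0\delta_x+(1-p_0)\delta_v$, $\nu_0=q_0\delta_x+(1-q_0)\delta_v$ with $\|x-v\|=D$ and reads off $\operatorname{SW}_p^p=|p_0-q_0|\,D^p\,c_{d,p}$. The only genuinely different technical choice is that the paper first reduces to the one-sample (goodness-of-fit) problem and then bounds TV via the Bretagnolle--Huber inequality, whereas you stay in the two-sample formulation and use Pinsker; both routes work and give the same rate with slightly different constants.

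There is, however, a real gap in step (v). You perturb the \emph{second} distribution, the one generating $\mathcal{Z}_m$, so the indistinguishability bound you obtain is $\mathrm{TV}(\nu_\delta^{\otimes m},\mu_0^{\otimes m})\lesssim\sqrt{m}\,\delta$, forcing $\delta\lesssim 1/\sqrt{m}$. Since $n\le m$ gives $1/\sqrt{m}\le 1/\sqrt{n}$, this yields only the \emph{weaker} lower bound $\epsilon^\dagger_{n,m}\gtrsim 1/\sqrt{m}$. Your assertion that ``since $n\le m$ we may also use $\delta\asymp 1/\sqrt n$'' goes in the wrong direction: the constraint caps $\delta$ at order $1/\sqrt m$, so you are not free to take the larger value $1/\sqrt n$, and a $1/\sqrt m$ lower bound does not imply a $1/\sqrt n$ lower bound. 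The fix is simple and brings you in line with the paper: either perturb the distribution of the \emph{smaller} sample (i.e.\ replace $\mu_0$ by $\mu_\delta$ and keep $\nu=\mu_0$, so the constraint becomes $\sqrt{n}\,\delta\lesssim\text{const}$), or reduce to the one-sample problem with $n$ observations as the paper does in its Section on goodness-of-fit reduction. Separately, be consistent about whether the separation $\epsilon$ measures $\operatorname{SW}_p$ or $\operatorname{SW}_p^p$; your steps (ii) and (v) use $\operatorname{SW}_p$ and produce $\epsilon\asymp\delta^{1/p}D$, while your final paragraph (correctly, and matching the paper's convention) works with $\operatorname{SW}_p^p\asymp\delta D^p c_{d,p}$, which is what gives the clean $n^{-1/2}$ rate.
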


Thus, the lower bound extends beyond discrete settings to general bounded-support distributions.

Together with Theorem~\ref{maintheorem}, these results imply that the permutation SW test achieves the minimax optimal separation rate $n^{-1/2}$ over both multinomial and bounded-support distributions, matching known optimal rates for MMD-based tests \citetext{\citealp[Position 4.4]{kim2022minimax}; \citealp[Section E.10.1]{kim2023differentially}}.

\section{NUMERICAL EXPERIMENTS}\label{sec:numerical_experiments}
To complement our theoretical findings, we empirically evaluate the performance of the SW test described in Algorithm~\ref{algorithm}. We focus on the balanced case $n = m$ and assess performance across various numbers of projections $L$. We compare against several baselines: the Projected Wasserstein (PW) test \citep{wang2021linear} and the MMD test \citep{gretton2012mmd}. For MMD, we consider linear, Gaussian, and Laplace kernels, with bandwidths for the latter two selected via the median heuristic \citep{garreau2017large}. For the PW test, we use the authors’ official implementation with default settings (projection dimension $3$ and a $50/50$ split for training and testing), as no practical guidance on these parameters was provided. All experiments were run on a machine with an Intel Core Ultra 9 185H CPU and 64\,GB of RAM.

As discussed in Section~\ref{sec:related}, several scalable variants of MMD have been developed for large-scale problems. However, these approximations do not increase statistical power beyond standard MMD, so we exclude them from our benchmarks and focus on contrasting how SW- and MMD-based tests distinguish distributions. Moreover, we also exclude MMD aggregation methods: while aggregation improves adaptivity and can outperform individual kernels, our goal is to compare discrepancies rather than kernel selection strategies, so we restrict attention to standard kernels.

In all experiments, we set the significance level to $\alpha = 0.05$ and use $B = 200$ permutations. Empirical power is estimated from $150$ independent repetitions, parallelized with Joblib. Error bars represent 95\% confidence intervals computed from the normal approximation of the empirical proportion (mean $\pm 1.96$ standard errors).

\subsection{Tests on Synthetic Datasets}\label{sec:Testsonsynthetic}
Our first experiment evaluates power under a Gaussian covariance shift: samples are drawn from $\mu = \mathcal{N}(0, I_{60})$ and $\nu = \mathcal{N}(0, \Sigma)$, where $\Sigma = \operatorname{diag}(\delta^2, \delta^2, 1, \dots, 1)$ with shift magnitude $\delta = 2.7$.

The second scenario examines geometric differences between distributions: the uniform distribution on the 5D unit sphere versus the uniform distribution on the 5D unit ball. This setting tests performance when one distribution lies on a lower-dimensional manifold embedded within the other’s support.

As shown in Figure~\ref{fig:empiricalresults}, the PW test achieves the highest power under covariance shifts, consistent with \citet{wang2021kernel}, while the SW test performs competitively with Gaussian and Laplace MMD. In the ball–vs.–sphere scenario, the MMD test with a Laplace kernel attains the best performance, but the SW test substantially outperforms both Gaussian MMD and PW, even with a limited number of projection directions. The weaker performance of Gaussian MMD likely stems from over-smoothing under the median heuristic, which blurs sharp radial differences, whereas the Laplace kernel’s slower decay preserves more of this contrast. By contrast, MMD with a linear kernel performs worst across both settings, since with $k(x,y)=x^\top y$ it reduces to a mean-difference test, which vanishes in these cases. Overall, while the SW test is not always the most powerful, it delivers consistently strong performance across benchmarks without requiring parameter tuning, demonstrating robustness to diverse distributional differences.

\begin{figure*}[t]
  \centering
  \begin{subfigure}{0.325\textwidth}
    \centering
    \includegraphics[width=\linewidth]{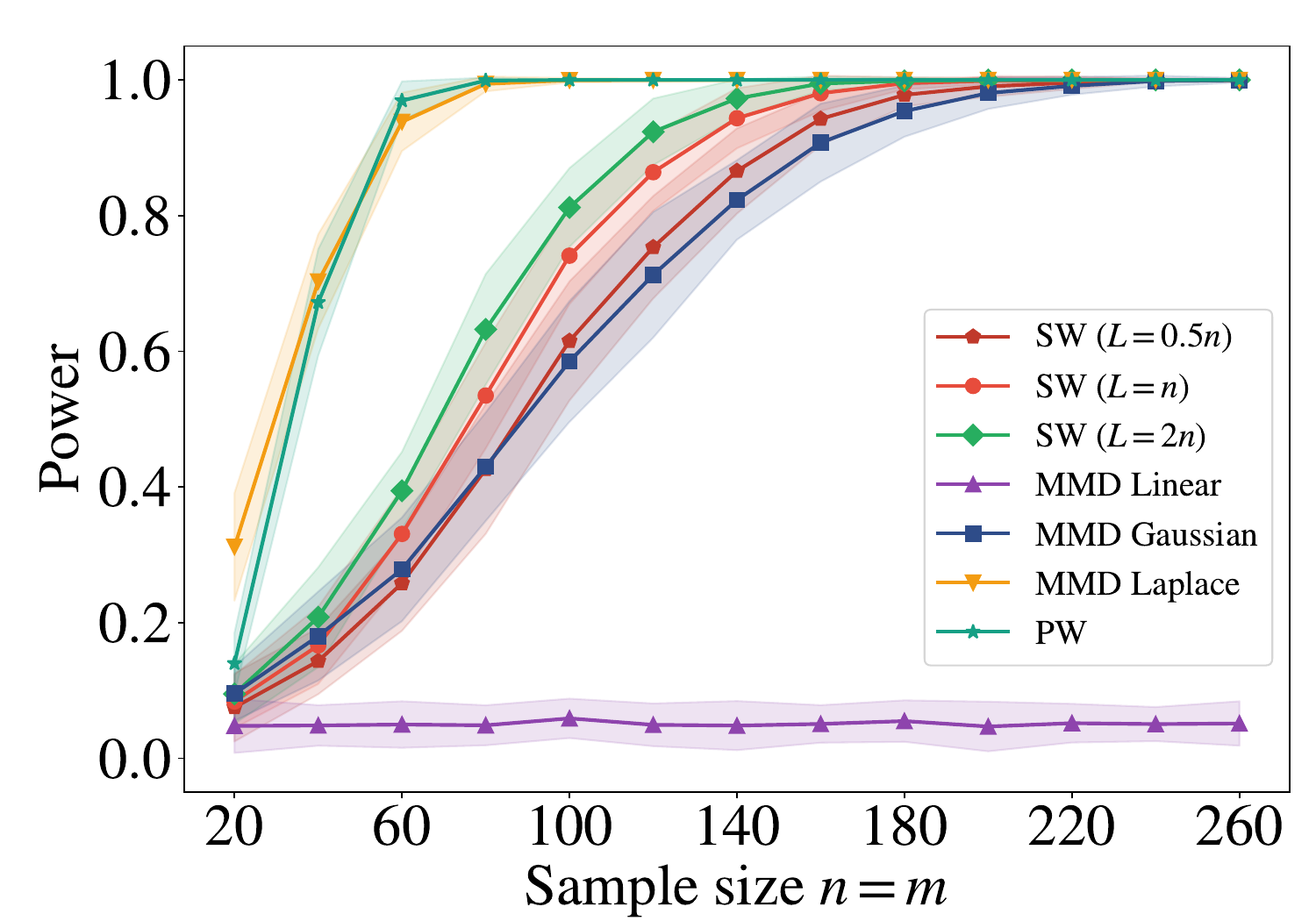}  
  \end{subfigure}\hspace{0.0005\textwidth}
  \begin{subfigure}{0.325\textwidth}
    \centering
    \includegraphics[width=\linewidth]{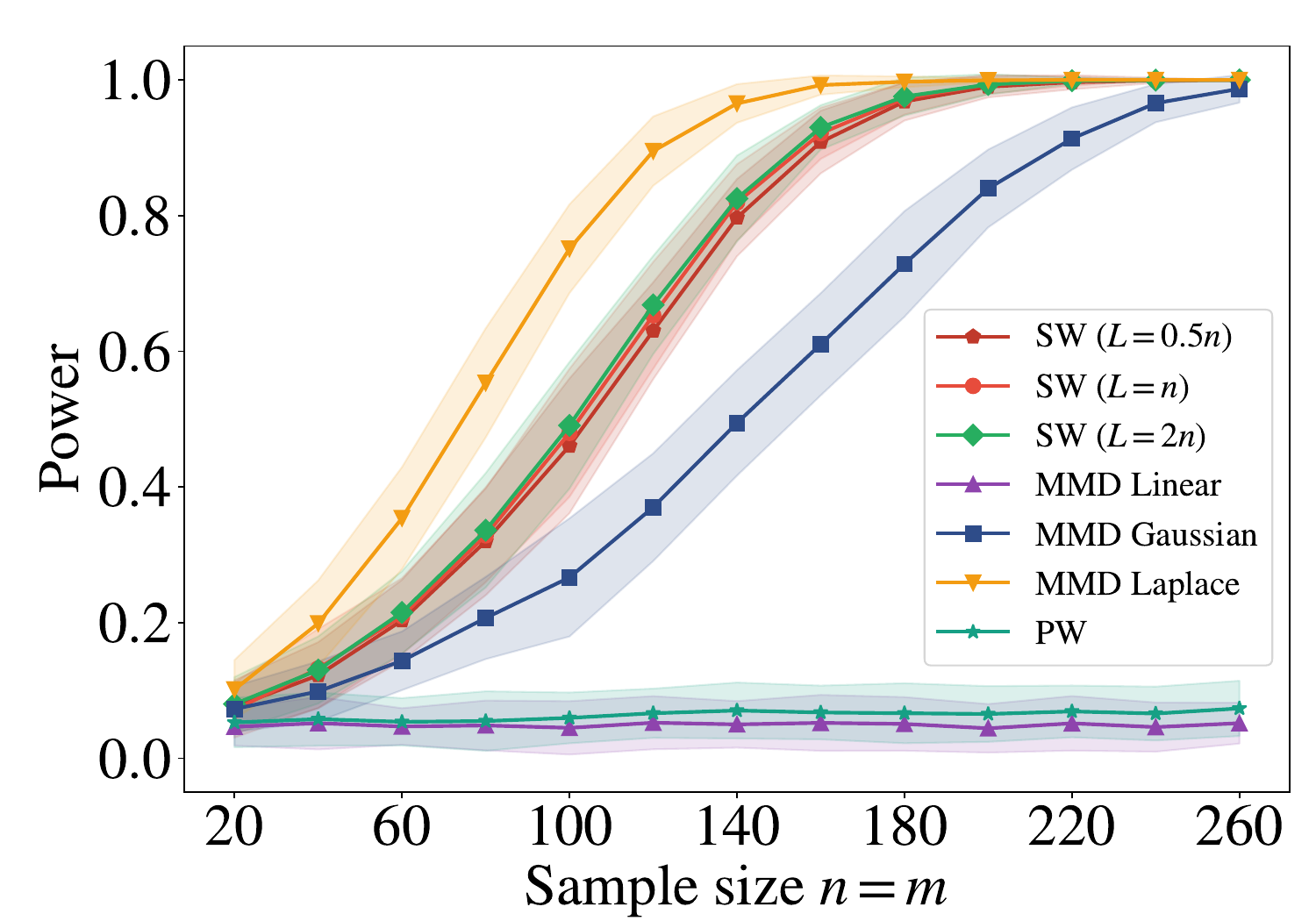}  
  \end{subfigure}\hspace{0.0005\textwidth}
  \begin{subfigure}{0.325\textwidth}
    \centering
    \includegraphics[width=\linewidth]{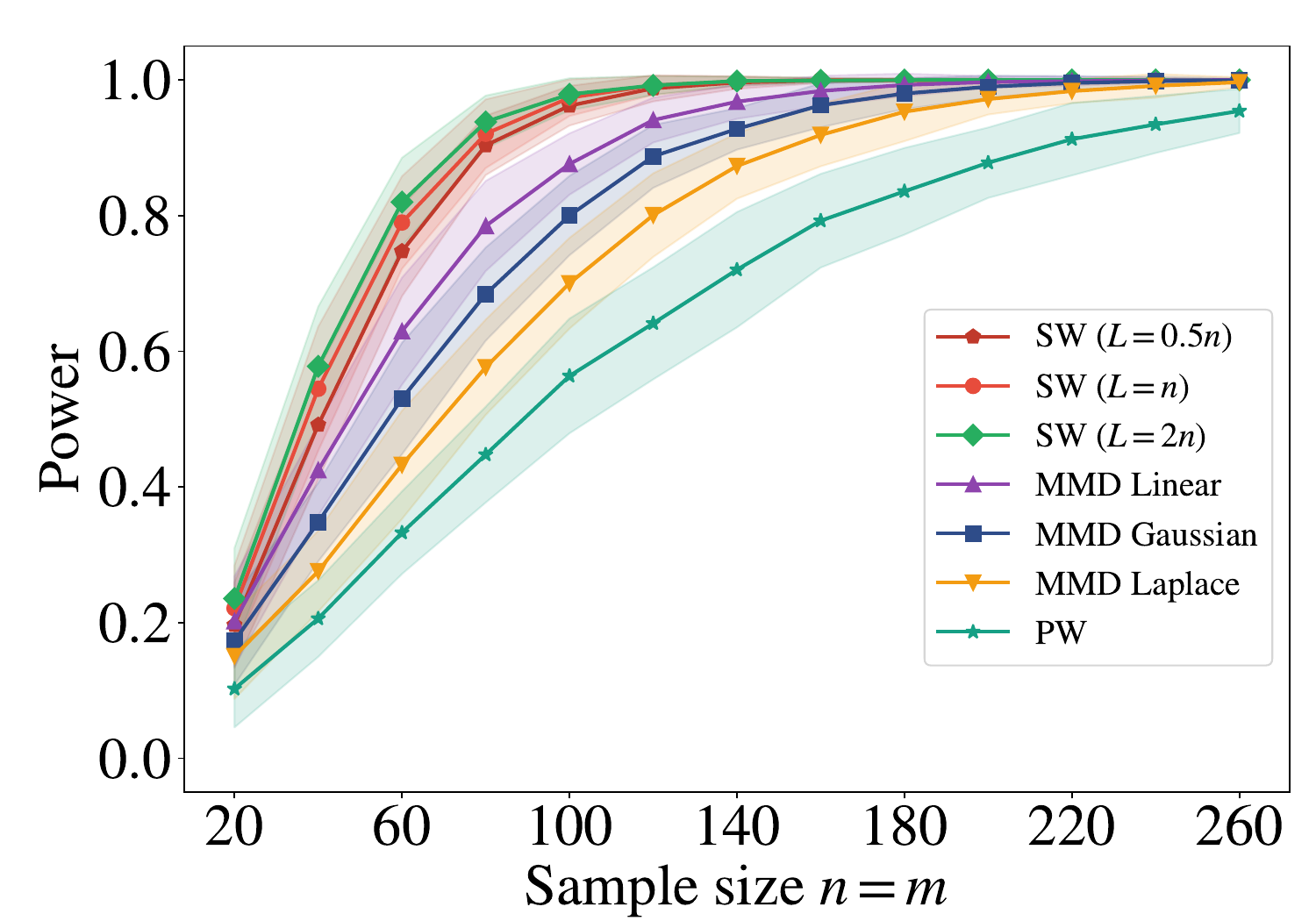}
  \end{subfigure}
  \caption{Power across three scenarios: Gaussian covariance shift, ball vs. sphere, and MNIST mixture.}
  \label{fig:empiricalresults} 
\end{figure*}

\subsection{Tests on MNIST Handwritten Digits}
We apply the two-sample testing procedures to the MNIST dataset \citep{lecun2012mnist}. Let $\mu_6$ denote the uniform distribution over images of the digit $6$ and $\mu_9$ the corresponding distribution for digit $9$. We define the mixture $\nu = 0.85 \mu_6 + 0.15 \mu_9$. Following \citet{wang2021kernel}, we preprocess the dataset by applying a sigmoid transformation to each image so that all pixel values lie within $[0,1]$. The SW test effectively discriminates between $\mu_6$ and $\nu$ by capturing localized intensity differences in 1D projections.

This experiment highlights the SW test’s ability to detect subtle distributional differences in high-dimensional image data without requiring kernel parameter tuning, confirming its practical applicability.

\subsection{Effect of the Number of Projections}\label{Sec:EffectOfTheNumberProjections}
While the SW test generally performs well on our three benchmark datasets, it underperforms relative to MMD tests with Gaussian, Laplace, and linear kernels in an additional Gaussian mean-shift experiment for a range of projection numbers $L \in {0.5n, n, 2n}$ (see Appendix). To better understand this behavior, we examine the effect of increasing the number of projections $L$ on both power and computation time. Fixing $n=m=140$ in the mean-shift setting, Figure~\ref{fig:improvepower} shows that power improves steadily with larger $L$, though at the cost of increased computation time (Figure~\ref{ballsphere_timing}).
\begin{figure}[h]
  \centering
  \includegraphics[width=0.9\linewidth]{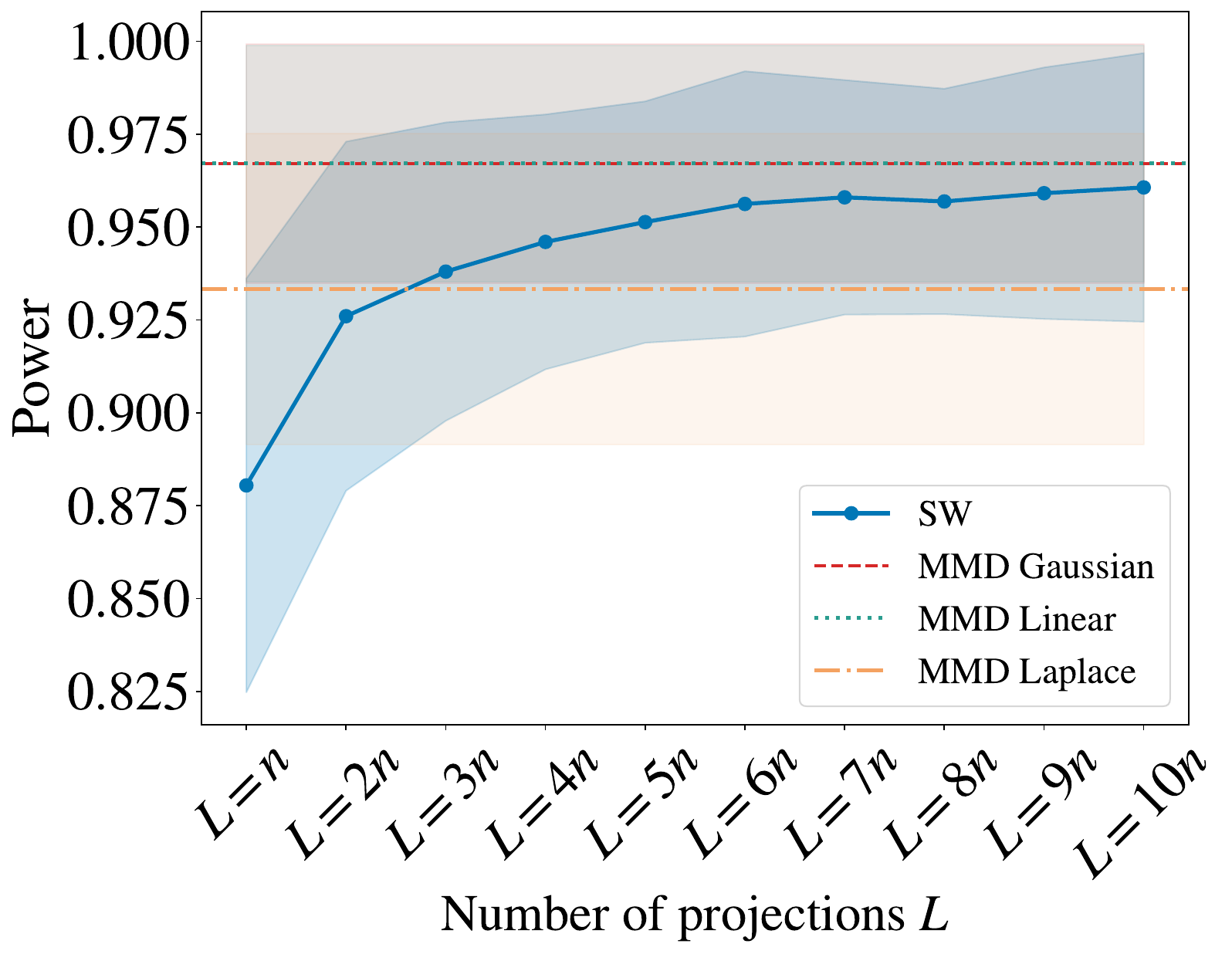}
  \caption{SW Test power vs. number of projections (fixed sample sizes $n=m=140$)}
  \label{fig:improvepower}
\end{figure}

Fortunately, the computational bottleneck of permutation-based SW testing is mitigated by its projection structure: since it computes many independent 1D projections, the workload is naturally amenable to GPU parallelization. This allows scaling the number of projections with only moderate increases in wall-clock time, given sufficient GPU resources. In contrast, MMD computations hinge on the kernel matrix, whose size is fixed once the sample size is given, limiting further scaling through additional parallelism. In practice, we find that SW implementations can better exploit hardware parallelism and thus often scale better than naive MMD implementations under similar resource constraints. 

To validate this, we compared computation times using two sets of $140$ samples from a $60$-dimensional mean-shifted Gaussian distribution. We measured the average execution time over 100 repetitions on a local CPU, Google Colab’s T4, and A100 GPUs. Results in Figure~\ref{fig:improvetime} confirm substantial speedups for the SW test on GPUs, though larger-scale studies would be valuable to fully assess its scalability in practice.

\begin{figure}[t]
  \centering
  \includegraphics[width=0.95\linewidth]{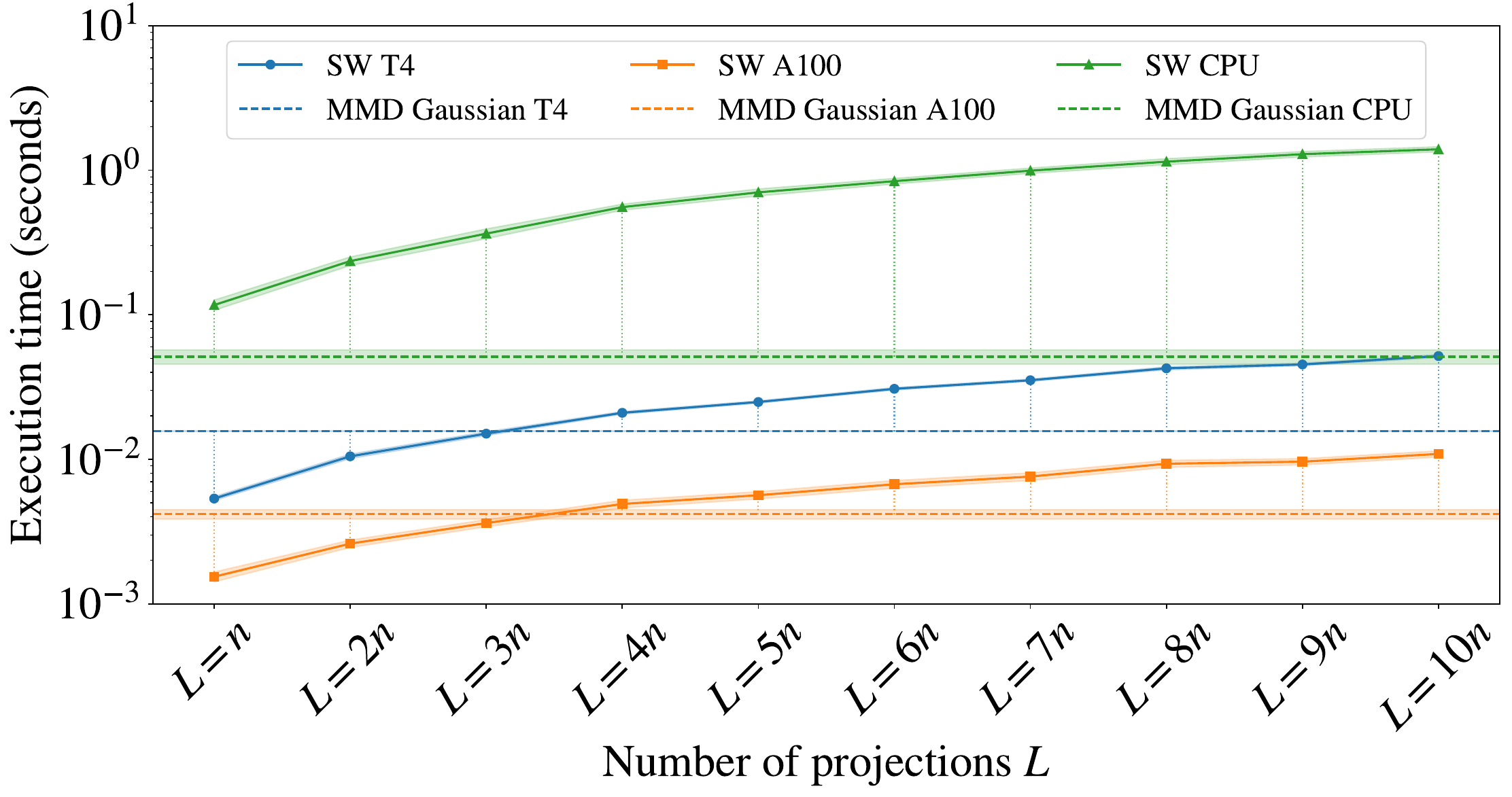}
  \caption{Computation time (log scale) of SW tests}
  \label{fig:improvetime}
\end{figure}


Finally, to illustrate Theorem~\ref{controltypeI}, we assessed the empirical Type I error of the SW test. In each of 2000 repetitions, two independent samples of size $50$ were drawn from $\mathcal{N}(0,I_{60})$. The results, reported in Table~\ref{tab:typeIerror} for three choices of $L$, show that the permutation-based SW test effectively controls the Type I error rate even at small sample sizes.

\begin{table}[H]
  \centering
  \caption{Type I error for the SW test}
  \label{tab:typeIerror}
  \begin{tabular}{cc} 
    \toprule
    \textbf{Test Statistics} & \textbf{Type I Error} \\ 
    \midrule
    SW $(L=0.5n)$ & $0.04982 \pm 0.00543$ \\
    SW $(L=n)$    & $0.04844 \pm 0.00501$ \\ 
    SW $(L=2n)$   & $0.04961 \pm 0.00461$ \\ 
    \bottomrule
  \end{tabular}
\end{table}

\begin{figure}[h]
  \centering
  \includegraphics[width=0.9\linewidth]{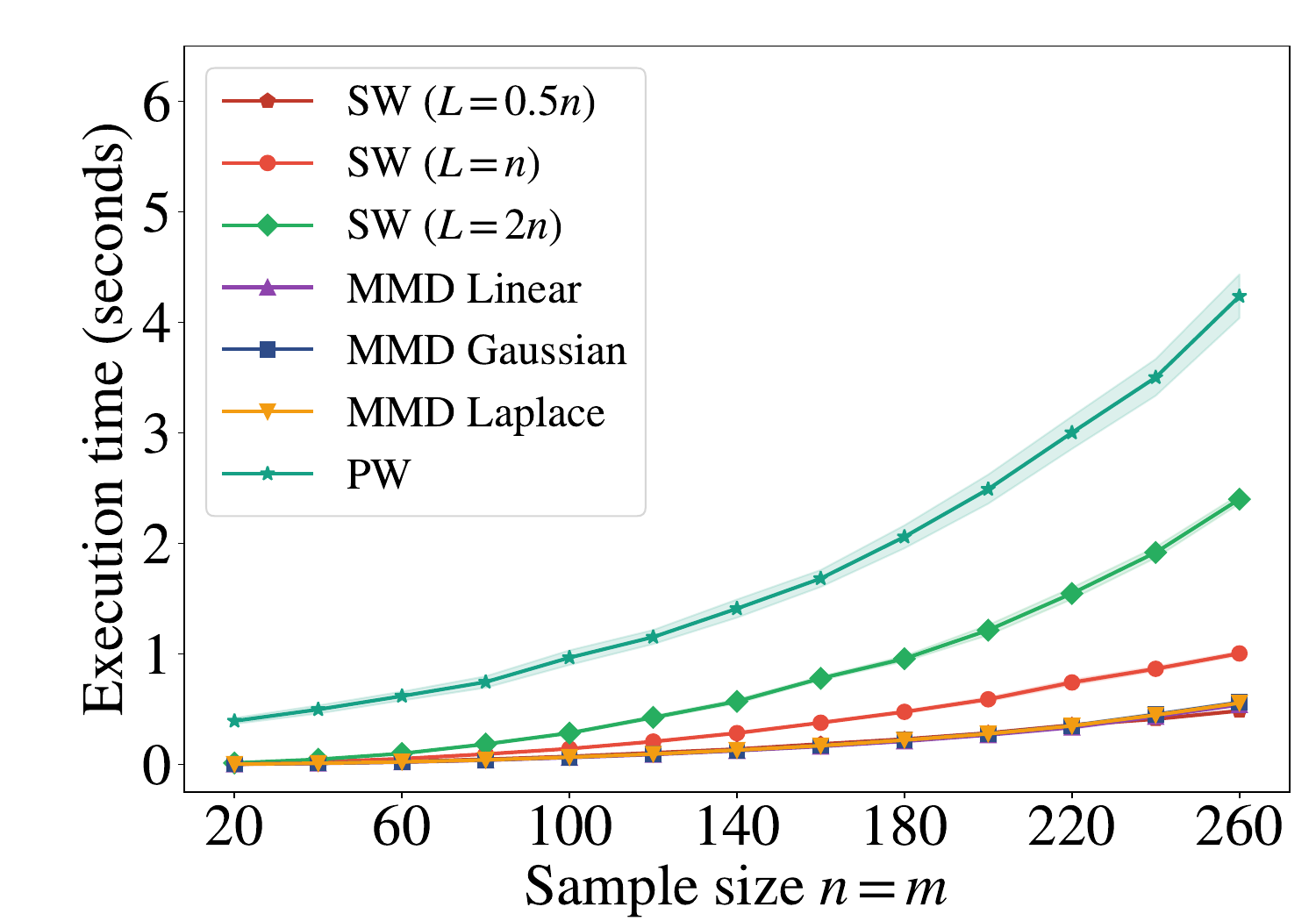}
  \caption{Computation time for ball vs. sphere \protect\footnotemark}
  \label{ballsphere_timing}
\end{figure}
\footnotetext{To leverage available CPU resources, we perform independent repetitions in parallel across all CPU cores and report the mean wall-clock time per repetition. Due to heterogeneous core performance, the average time can exceed that observed when using a single core. All methods are timed on the same inputs within each repetition, so their relative ranking is unaffected.}

In summary, our experiments demonstrate that the SW test offers a robust, parameter-free alternative to kernel-based methods, balancing statistical power and computational efficiency, especially when leveraging parallel hardware. 

\section{Conclusion}
We proposed a permutation-based two-sample test using the sliced Wasserstein distance, establishing finite-sample validity, non-asymptotic power bounds, and minimax optimality over multinomial and bounded-support alternatives. To our knowledge, this is the first Wasserstein-based test with finite-sample guarantees.
Our analysis quantified the trade-off between the number of projections and statistical power, and experiments showed that the test achieves consistently strong performance across benchmarks without parameter tuning, while remaining scalable on parallel hardware. Future directions include extending minimax optimality results to broader smoothness classes and exploring adaptive or non-uniform slicing strategies (e.g., generalized projections \citep{kolouri2019gsw}).

\subsubsection*{Acknowledgements}
The authors would like to thank Pierre Bizeul for pointing out an interesting reference and Mohamed Hebiri for helpful discussions.

\bibliography{references}

\onecolumn
\aistatstitle{Supplementary Material\\
Minimax-Optimal Two-Sample Test with Sliced Wasserstein}
	
	
	

\section*{OVERVIEW}
This supplementary material contains additional discussions, detailed proofs, and an extra experiment that were omitted from the main text due to page constraints. The remaining sections are organized as follows.
\begin{itemize}
    \item In Section~\ref{Sec:Usefulconcentrations}, we recall several key concentration inequalities that serve as important tools for our proofs.
    \item In Section~\ref{Sec:OptimalTransportResult}, we derive intermediate technical lemmas for permuted sliced Wasserstein statistics using results from optimal transport theory. A core contribution of this section is the derivation of a novel concentration inequality and an expectation bound for the permuted sliced Wasserstein statistic. These results are fundamental for controlling the random permutation threshold in our power analysis.
    \item In Section~\ref{sec:SWcomplexity}, we present a result characterizing the sample complexity of the sliced Wasserstein distance.
    \item Our main theoretical contributions are presented in Sections~\ref{sec:upperboundproofs} and~\ref{sec:Lowerboundproof}. These sections provide the complete proofs of our main theorems concerning the test’s performance (Theorems~\ref{controltypeI} and~\ref{maintheorem}) and its minimax optimality (Propositions~\ref{minimaxmultinomial} and~\ref{lowerboundseperation}).
    \item In Section~\ref{sec:PermutationApproach}, we elaborate on why we chose a permutation-based framework for two-sample testing with the sliced Wasserstein distance, highlighting the intractability and practical challenges that prevent the use of the statistic's asymptotic null distribution.
    \item Section~\ref{sec:Technicalreview} offers a technical review of existing methods for analyzing permutation tests. We explain why common techniques developed for U-statistics and other test statistics are not directly applicable to the sliced Wasserstein distance, thereby underscoring the novelty of our analytical approach.
    \item Finally, Section~\ref{sec:addtionalexperiment} provides results for the Gaussian mean shift experiment mentioned in Section~\ref{sec:numerical_experiments}.
\end{itemize}
\paragraph{Additional Notation.} Throughout this supplementary material, we use an additional set of notation, described below.
\begin{itemize}
    \item For $x \in \mathbb{R}$, $\lceil x \rceil$ denotes the smallest integer greater than or equal to $x$, and $\lfloor x \rfloor$ denotes the largest integer less than or equal to $x$.
    \item We write $d_{\mathrm{TV}}(P,Q)$ for the total variation (TV) distance between $P$ and $Q$, and $D_{\mathrm{KL}}(P\|Q)$ for the Kullback--Leibler (KL) divergence. More details on those metrics can be found in, e.g., \citet[Section~2.4]{tsybakov2008introduction}.
    \item For any integer $K \geq 1$, we denote by $[K]$ the set of the first $K$ integers $\{1, \dots, K\}$.
    \item We use $\mathbb{P}_{\mu \times \nu}$ to denote probability with respect to $Y_1,\dots,Y_n \sim \mu$ and $Z_1,\dots,Z_m \sim \nu$. When additional randomness is present, we extend the notation accordingly, e.g., $\mathbb{P}_{\mu \times \nu \times r \times \sigma}$. 
    \item Finally, given two random variables $U$ and $V$, we write $U\stackrel{(d)}{=}V$ to denote that $U$ and $V$ have the same distribution.
\end{itemize}
\newpage
\section{SOME USEFUL CONCENTRATION INEQUALITIES}\label{Sec:Usefulconcentrations}
We begin this section by recalling two classical concentration inequalities: McDiarmid's inequality \citep{mcdiarmid1989method} and Hoeffding's inequality \citep{hoeffding1963probability}.
\begin{lemma}[McDiarmid's Inequality]\label{lem:MCDiarmid} Let $N\geq1$.
Let $X_1,\dots,X_N$ be independent random variables taking values in a set $\mathcal{X}$, 
and let $f:\mathcal{X}^N \to \mathbb{R}$.
Assume that $f$ satisfies the \emph{bounded difference property}, i.e., 
there exist constants $c_1,\dots,c_N \geq 0$ such that for all $i\in \{1,\dots,N\}$ and for all $x_1,\dots,x_N,x_i'\in \mathcal{X}$,
\begin{align*}
\big| f(x_1,\dots,x_i,\dots,x_N) - f(x_1,\dots,x_i',\dots,x_N)\big| \leq c_i.
\end{align*}
Then, for any $t > 0$,
\begin{align*}
\mathbb{P}\!\left(\,\big|f(X_1,\dots,X_N)-\mathbb{E}[f(X_1,\dots,X_N)]\big|\ge t\,\right) 
\le 2\exp\!\left(-\dfrac{2t^2}{\sum_{i=1}^{N}c_i^2}\right).
\end{align*}
\end{lemma}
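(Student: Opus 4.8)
The plan is to prove the one-sided bound $\mathbb{P}\big(f(X_1,\dots,X_N) - \mathbb{E}[f(X_1,\dots,X_N)] \ge t\big) \le \exp\big(-2t^2/\sum_i c_i^2\big)$ via the method of bounded martingale differences, and then recover the two-sided statement by applying the same argument to $-f$ and taking a union bound. Write $X_1^k := (X_1,\dots,X_k)$ and introduce the Doob martingale $Z_k := \mathbb{E}[f(X_1^N)\mid X_1^k]$ for $k=0,\dots,N$, so that $Z_0 = \mathbb{E}[f]$ and $Z_N = f(X_1^N)$. The increments $D_k := Z_k - Z_{k-1}$ then satisfy $\sum_{k=1}^N D_k = f(X_1^N) - \mathbb{E}[f]$ and $\mathbb{E}[D_k \mid X_1^{k-1}] = 0$.

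The key structural step is to show that, conditionally on $X_1^{k-1}$, each increment $D_k$ takes values in an interval of length at most $c_k$. Using independence of the $X_i$, one has $Z_k = g_k(X_1^k)$ with $g_k(x_1,\dots,x_k) := \mathbb{E}[f(x_1,\dots,x_k,X_{k+1},\dots,X_N)]$, hence $D_k = g_k(X_1^k) - \mathbb{E}[g_k(X_1^{k-1},X_k)\mid X_1^{k-1}]$. The bounded difference hypothesis implies that for fixed $x_1^{k-1}$ the map $x \mapsto g_k(x_1^{k-1},x)$ has oscillation at most $c_k$, since it is an average over the law of $(X_{k+1},\dots,X_N)$ of maps each of oscillation at most $c_k$. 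Consequently $D_k$, conditionally on $X_1^{k-1}$, lies in a random interval $[L_k, L_k + c_k]$.

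With this in hand I would invoke Hoeffding's lemma: a mean-zero random variable supported on an interval of width $w$ has moment generating function bounded by $\exp(\lambda^2 w^2/8)$. Applied conditionally, this gives $\mathbb{E}[e^{\lambda D_k}\mid X_1^{k-1}] \le \exp(\lambda^2 c_k^2/8)$ for every $\lambda>0$. Peeling the conditional expectations one index at a time via the tower property yields $\mathbb{E}\big[\exp\big(\lambda\sum_k D_k\big)\big] \le \exp\big(\lambda^2 \sum_k c_k^2/8\big)$, i.e.\ $f(X_1^N) - \mathbb{E}[f]$ is sub-Gaussian with variance proxy $\sum_k c_k^2/4$. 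A Chernoff bound $\mathbb{P}(f - \mathbb{E}[f] \ge t) \le e^{-\lambda t}\,\mathbb{E}[e^{\lambda(f-\mathbb{E}[f])}]$ optimized at $\lambda = 4t/\sum_k c_k^2$ gives the claimed exponential bound, and symmetrizing finishes the proof.

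The main obstacle is the structural step: carefully justifying that the conditional oscillation of $D_k$ is at most $c_k$, which is precisely where independence of $X_1,\dots,X_N$ (to write $Z_k$ as the averaged function $g_k$) and the bounded-difference property are both used. Everything after that — Hoeffding's lemma, the tower-property peeling, and the Chernoff optimization — is routine; since Hoeffding's inequality is itself recalled just below in the excerpt, Hoeffding's lemma may be quoted directly or re-derived in one line from the convexity of $\lambda \mapsto \log\mathbb{E}[e^{\lambda V}]$.
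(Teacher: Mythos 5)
The paper does not prove this lemma: it is stated purely as a recalled classical result (with a citation to McDiarmid, 1989), so there is no in-paper proof to compare against. Your proposal is the standard Doob-martingale (Azuma--Hoeffding) proof of McDiarmid's inequality, and it is correct: the decomposition into increments $D_k = Z_k - Z_{k-1}$ of the Doob martingale, the observation that independence lets you write $Z_k = g_k(X_1^k)$ so that conditionally on $X_1^{k-1}$ each $D_k$ is a centered random variable confined to an interval of length at most $c_k$, the conditional application of Hoeffding's lemma followed by peeling via the tower property, and the Chernoff optimization at $\lambda = 4t/\sum_i c_i^2$ all go through exactly as you describe, with the two-sided bound obtained by symmetrization. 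This is precisely what a self-contained proof of the cited lemma would look like.
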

\begin{lemma}[Hoeffding's Inequality]\label{lem:hoeffding}
Let $N \geq 1$. Let $X_1,\dots,X_N$ be independent random variables  such that $a_i\le X_i \le b_i$ for $i=1,\dots,N$. Then, for any $t>0$,
\begin{align*}
\mathbb{P}\left[\left|\frac{1}{N}\sum_{i=1}^{N}X_i-\mathbb{E}\left(\frac{1}{N}\sum_{i=1}^{N}X_i\right)\right|\ge t\right]\le 2\exp\left(-\frac{2N^2t^2}{\sum_{i=1}^N(b_i-a_i)^2}\right).
\end{align*}
\end{lemma}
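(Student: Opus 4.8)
This is the classical Hoeffding bound, and the plan is the standard Chernoff-plus-Hoeffding's-lemma argument. First I would reduce to the centered case: set $Y_i := X_i - \mathbb{E}[X_i]$, so $\mathbb{E}[Y_i]=0$ and $Y_i$ takes values in an interval of length $b_i-a_i$. Writing $S := \sum_{i=1}^N Y_i$, the asserted inequality is exactly $\mathbb{P}(|S|\ge Nt)\le 2\exp\!\big(-2N^2t^2/\sum_{i=1}^N(b_i-a_i)^2\big)$; by applying the bound below to both $S$ and $-S$ and adding, it suffices to control the one-sided probability $\mathbb{P}(S\ge Nt)$ and pick up the factor $2$ at the end.

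Next, for any $s>0$, Markov's inequality applied to $e^{sS}$ together with independence gives $\mathbb{P}(S\ge Nt)\le e^{-sNt}\,\mathbb{E}[e^{sS}] = e^{-sNt}\prod_{i=1}^N \mathbb{E}[e^{sY_i}]$. The key ingredient is then Hoeffding's lemma: if $Y$ is mean-zero and supported in an interval of length $\ell$, then $\mathbb{E}[e^{sY}]\le \exp(s^2\ell^2/8)$. I would prove this by studying $\psi(s):=\log\mathbb{E}[e^{sY}]$, checking that $\psi(0)=0$, $\psi'(0)=\mathbb{E}[Y]=0$, and that $\psi''(s)$ equals the variance of $Y$ under the exponentially tilted law $dP_s\propto e^{sy}\,dP$, which is at most $\ell^2/4$ since any random variable confined to an interval of length $\ell$ has variance at most $\ell^2/4$ (Popoviciu's inequality). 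A second-order Taylor expansion with integral remainder then yields $\psi(s)\le s^2\ell^2/8$. Plugging this in gives $\mathbb{P}(S\ge Nt)\le \exp\!\big(-sNt + \tfrac{s^2}{8}\sum_{i=1}^N(b_i-a_i)^2\big)$.

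Finally I would optimize the quadratic exponent in $s$: the minimizer is $s^\star = 4Nt/\sum_{i=1}^N(b_i-a_i)^2$, which produces the exponent $-2N^2t^2/\sum_{i=1}^N(b_i-a_i)^2$. Combining the two tails gives the stated factor of $2$. The only step that is more than bookkeeping is Hoeffding's lemma, and within it the uniform bound $\psi''(s)\le \ell^2/4$; the rest is a routine Chernoff bound and a one-variable optimization. Since the lemma is invoked only as an off-the-shelf tool, I would in fact just cite \citet{hoeffding1963probability} and sketch at most the two lines above.
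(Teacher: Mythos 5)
Your sketch is a correct and complete rendering of the standard Chernoff-plus-Hoeffding's-lemma proof: the centering, the reduction to a one-sided tail, the MGF bound $\mathbb{E}[e^{sY_i}]\le\exp(s^2(b_i-a_i)^2/8)$ via $\psi''\le(b_i-a_i)^2/4$, and the optimization at $s^\star=4Nt/\sum_i(b_i-a_i)^2$ all check out. The paper itself does not prove this lemma at all — it is stated as a recalled classical fact with a citation to \citet{hoeffding1963probability} — so the comparison is between a correct proof and no proof; your closing remark that you would in practice simply cite the reference is exactly what the paper does.
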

We shall also rely on a permutation-based version of the classical McDiarmid's inequality to control the empirical quantile $\widehat{c}^B_{1-\alpha,N}$ introduced in Algorithm~\ref{algorithm}. A statement of this result can be found in \citet[Theorem~6]{tolstikhin2017concentration} and \citet[Lemma~2]{el2009transductive}. 
To prepare for the statement of that result, we first introduce the following definition.
\begin{definition}[$(n,m)$-symmetric function]\label{def:symmetricfunction}
Let $N \geq 1$ and let $n$ be a positive integer such that $n<N$. Set $m\coloneqq  N-n$. 
A function $f: S_N \to \mathbb{R}$, defined on the symmetric group over $\{1,\dots,N\}$, is called $(n,m)$-symmetric if it remains invariant under the change of order of the first $n$ coordinates and/or last $m$ coordinates of any permutation $\pi
\in S_{N}$.
\end{definition}
We are now ready to state a permutation-based version of McDiarmid's inequality.
\begin{lemma}[Permutation McDiarmid's inequality]\label{mcdiarmid} Let $n$ and $N$ be positive integers such that $n < N$.
Let $\pi
$ be a random permutation drawn uniformly from the symmetric group over $\{1, \dots, N\}$, and let $f(\pi
)$ be an $(n,N-n)$-symmetric function such that there exists a constant $b > 0$ satisfying
\begin{align*}
|f(\pi
) - f(\pi
^{i,j})| \le b,
\end{align*}
for all $\pi
$, $i \in \{1,\dots,n\}$, $j\in \{n+1,\dots,N\}$, where $\pi
^{i,j}$ is the permutation obtained by transposing the $i$-th and $j$-th entries of $\pi
$. Then, for any $\epsilon > 0$,
\begin{align*}
\mathbb{P}_{{\pi}
}\left( f({\pi}
) - \mathbb{E}_{\pi}[f({\pi}
)] \geq \epsilon \right)
\leq
\exp\left( -\frac{2\epsilon^2}{n b^2} \cdot \frac{N - \frac{1}{2}}{N - n} \cdot \left(1 - \frac{1}{2 \max\{n, N - n\}} \right) \right).
\end{align*}
\end{lemma}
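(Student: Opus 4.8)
The plan is to reduce the statement to a concentration inequality for a function of a uniformly random $n$-subset of $\{1,\dots,N\}$, and then to obtain that inequality from a Doob-martingale argument run along the sampling-without-replacement process, the sharp constants coming from the finite-population (Serfling-type) correction in the final Azuma--Hoeffding step.

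\emph{Step 1 (reduction to random subsets).} Since $f$ is $(n,N-n)$-symmetric, the value $f(\pi)$ depends on $\pi$ only through the unordered set $A_\pi\coloneqq\{\pi(1),\dots,\pi(n)\}$; write $f(\pi)=g(A_\pi)$ for a well-defined function $g$ on the $n$-subsets of $\{1,\dots,N\}$. When $\pi$ is uniform on $S_N$, $A_\pi$ is uniform over the $\binom{N}{n}$ subsets of size $n$. Moreover, for $i\le n<j$ the transposition $\pi^{i,j}$ removes $\pi(i)\in A_\pi$ and inserts $\pi(j)\notin A_\pi$, so the hypothesis becomes the single-swap bounded-difference property: $|g(A)-g(A')|\le b$ whenever $A,A'$ are $n$-subsets with $|A\cap A'|=n-1$. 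It therefore suffices to prove the stated bound for $g$ evaluated at a uniform random $n$-subset under this property, after which $\mathbb{E}_\pi[f(\pi)]$ is just $\mathbb{E}[g(A_\pi)]$.

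\emph{Step 2 (Doob martingale).} Realize the random subset by drawing $X_1,\dots,X_n$ without replacement from $\{1,\dots,N\}$, so that $(X_1,\dots,X_n)\stackrel{(d)}{=}(\pi(1),\dots,\pi(n))$ and $A_\pi=\{X_1,\dots,X_n\}$. With $\mathcal{F}_k\coloneqq\sigma(X_1,\dots,X_k)$ and $M_k\coloneqq\mathbb{E}[g(\{X_1,\dots,X_n\})\mid\mathcal{F}_k]$, the sequence $(M_k)_{k=0}^n$ is a martingale with $M_0=\mathbb{E}_\pi[f(\pi)]$ and $M_n=g(A_\pi)=f(\pi)$. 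One then controls the conditional range of each increment: conditioned on $\mathcal{F}_{k-1}$, the label $X_k$ is uniform over the $N-k+1$ not-yet-drawn values and the remaining $n-k$ draws form an exchangeable completion, so a coupling argument—replace the realized value of $X_k$ by an alternative admissible value and complete both runs with the same remaining sample—bounds, via the single-swap property, the essential range of $M_k$ given $\mathcal{F}_{k-1}$ by some $w_k\le b$. The finite-population structure forces $\sum_{k=1}^n w_k^2$ to be strictly below $nb^2$; applying Azuma--Hoeffding gives $\mathbb{P}(M_n-M_0\ge\epsilon)\le\exp\bigl(-2\epsilon^2/\sum_{k=1}^n w_k^2\bigr)$, and carrying the constants (equivalently, invoking the without-replacement version of the method of bounded differences of \citet{mcdiarmid1989method} with the Serfling reduction) yields $\sum_{k=1}^n w_k^2\le nb^2\,\tfrac{N-n}{N-1/2}\,\bigl(1-\tfrac{1}{2\max\{n,N-n\}}\bigr)^{-1}$, which is exactly the claimed exponent. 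Since this is precisely the content of \citet[Theorem~6]{tolstikhin2017concentration} and \citet[Lemma~2]{el2009transductive} applied to $g$, one may alternatively finish by citing those results directly after Step~1.

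\emph{Main obstacle.} The only delicate part is the constant in Step~2: a black-box Azuma estimate only gives $\exp(-2\epsilon^2/(nb^2))$, and recovering the sharp factor $\tfrac{N-1/2}{N-n}\bigl(1-\tfrac{1}{2\max\{n,N-n\}}\bigr)$ requires the careful without-replacement bookkeeping of the conditional increment ranges (the Serfling-type finite-population correction). The reduction in Step~1 and the martingale construction in Step~2 are otherwise routine.
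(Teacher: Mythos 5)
The paper does not prove Lemma~\ref{mcdiarmid}: it is imported verbatim by citation, with the sentence preceding the statement pointing to \citet[Theorem~6]{tolstikhin2017concentration} and \citet[Lemma~2]{el2009transductive}. So there is no in-paper proof to compare against, and your fallback — ``one may alternatively finish by citing those results directly after Step~1'' — is exactly what the paper does.

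Your Step~1 (reduce to a function $g$ of a uniform random $n$-subset via the $(n,N-n)$-symmetry, translating the transposition condition into a single-swap bounded-difference property) is correct and is indeed the natural preprocessing in the cited works. Your Step~2, however, is not a proof: you assert that a Doob-martingale/Azuma argument along the without-replacement sampling yields $\sum_{k=1}^n w_k^2 \le nb^2\,\tfrac{N-n}{N-1/2}\bigl(1-\tfrac{1}{2\max\{n,N-n\}}\bigr)^{-1}$, which is precisely the nontrivial content of the lemma, and you never derive it. The coupling you describe only gives the trivial bound $w_k\le b$ on each increment, hence $\exp(-2\epsilon^2/(nb^2))$; getting the finite-population correction requires showing that the conditional increment ranges actually decay (roughly like $b(N-n)/(N-k)$), summing the squares, and matching the resulting expression to the stated exponent. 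You correctly flag this as ``the main obstacle,'' but flagging it does not close it. As a from-scratch argument your Step~2 is therefore a sketch with an acknowledged hole; as a citation-based argument it coincides with the paper's treatment.

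If you wanted a self-contained proof, the missing piece is the explicit bound on the $k$-th conditional increment of the Doob martingale (or, equivalently, following the inductive moment-generating-function argument in \citet[Lemma~2]{el2009transductive}), which is where the factors $(N-\tfrac12)/(N-n)$ and $1-\tfrac{1}{2\max\{n,N-n\}}$ actually come from; without that computation the claimed $\sum w_k^2$ estimate is simply a restatement of the target.
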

\section{PROPERTIES OF THE PERMUTED SLICED WASSERSTEIN STATISTIC}\label{Sec:OptimalTransportResult}
Our main objective in this section is to obtain a deterministic upper bound the (random) permuted test statistics 
$\widehat{\operatorname{SW}}^{p,\pi}_p$ introduced in~\eqref{eq:empericalpermu}. To this end, we exploit the properties of the sliced Wasserstein distance under permutation via a permutation-based version of McDiarmid’s inequality (recalled in Lemma~\ref{mcdiarmid}). 
It yields a high-probability bound relating the permuted statistics to its expectation. We then control the expectation term by applying the optimal matching bound from \citet[Corollary~5]{bobkov2021simple}.

We recall that all results are established under the assumption that the samples $\mathcal{Y}_n = (y_1, \dots, y_n)$ and $\mathcal{Z}_m = (z_1, \dots, z_m)$ satisfy $\|y_i\| \le D$ and $\|z_j\| \le D$ for all $i = 1, \dots, n$ and $j = 1, \dots, m$. Moreover, $\Theta = (\theta_1, \dots, \theta_L)$ denotes the projection directions 
drawn from the uniform distribution $\sigma$ on the unit sphere $\mathbb{S}^{d-1}$.

We first establish a high-probability concentration inequality for the permuted empirical sliced Wasserstein distance $\widehat{\operatorname{SW}}^{p,\pi}_p$ introduced in \eqref{eq:empericalpermu}. It is the first step towards proving our main theorem in Section~\ref{sec:Proofofmaintheorem}. 
\begin{lemma}\label{Sensitivity}
Let $p\ge 1$. Let $\pi$ be a permutation drawn uniformly from the symmetric group  on $\{1,\dots,n+m\}$, and let $\pi^{i,j}$ denote the permutation obtained from $\pi$ by exchanging its $i$-th and $j$-th elements, as defined in Lemma~\ref{mcdiarmid}. Then, for any $\epsilon > 0$, 
\begin{align*}
\mathbb{P}_{\pi}\Bigg(
    \widehat{\operatorname{SW}}^{p,\pi}_p 
    - \mathbb{E}_{\pi}\!\left[\widehat{\operatorname{SW}}^{p,\pi}_p\right]
    \;\ge\; \epsilon 
    \,\Big|\, \mathcal{Y}_n, \mathcal{Z}_m, \Theta
\Bigg)
&\;\le\; 
\exp\!\left(
    -\frac{2\epsilon^2}{n\Big(\tfrac{(2D)^p}{n}+\tfrac{(2D)^p}{m}\Big)^2}
    \cdot \frac{n+m-\tfrac{1}{2}}{m}
    \left(1-\frac{1}{2\max\{m,n\}}\right)
\right).
\end{align*}
\end{lemma}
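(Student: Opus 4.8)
The plan is to apply the permutation McDiarmid inequality (Lemma~\ref{mcdiarmid}) to the function $\pi \mapsto \widehat{\operatorname{SW}}^{p,\pi}_p$, conditionally on the data $\mathcal{Y}_n, \mathcal{Z}_m$ and the projections $\Theta$. The two things I need to verify are: (i) that $\widehat{\operatorname{SW}}^{p,\pi}_p$ is an $(n,m)$-symmetric function of $\pi$ in the sense of Definition~\ref{def:symmetricfunction}, and (ii) that swapping one index between the first block $\{1,\dots,n\}$ and the second block $\{n+1,\dots,n+m\}$ changes the statistic by at most $b \coloneqq \frac{(2D)^p}{n} + \frac{(2D)^p}{m}$. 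Once these are established, the claimed bound follows immediately by substituting $b$ into the conclusion of Lemma~\ref{mcdiarmid} with $N = n+m$.

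\textbf{Symmetry} is straightforward: $\widehat{\mu}^\pi_n = \frac{1}{n}\sum_{i=1}^n \delta_{X_{\pi(i)}}$ depends only on the \emph{multiset} $\{X_{\pi(1)},\dots,X_{\pi(n)}\}$, and likewise $\widehat{\nu}^\pi_m$ depends only on the multiset of the last $m$ entries; since the one-dimensional Wasserstein distance between empirical measures depends only on the underlying multisets of points, permuting within the first $n$ or the last $m$ coordinates leaves every $\operatorname{W}_p^p(\Pi^{\theta_\ell}_\#\widehat{\mu}^\pi_n, \Pi^{\theta_\ell}_\#\widehat{\nu}^\pi_m)$ unchanged, hence so is their average.

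\textbf{The bounded-difference step} is the main point. Fix $i \in \{1,\dots,n\}$, $j \in \{n+1,\dots,n+m\}$, and compare $\pi$ with $\pi^{i,j}$. Going from $\pi$ to $\pi^{i,j}$ replaces one atom of $\widehat{\mu}^\pi_n$ (namely $\frac{1}{n}\delta_{X_{\pi(i)}}$) by $\frac{1}{n}\delta_{X_{\pi(j)}}$, and simultaneously replaces one atom of $\widehat{\nu}^\pi_m$ by a different point. For each fixed direction $\theta_\ell$, I would bound the change in $\operatorname{W}_p^p$ using the one-dimensional structure: since $\theta_\ell \in \mathbb{S}^{d-1}$ and $\|X_k\| \le D$ for all $k$, every projected point $\langle \theta_\ell, X_k\rangle$ lies in $[-D, D]$, so the transport cost between any two projected atoms is at most $(2D)^p$. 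Perturbing a single atom of weight $1/n$ in the first empirical measure changes $\operatorname{W}_p^p$ by at most $\frac{(2D)^p}{n}$ (couple the old and new measures by transporting only the perturbed mass, which incurs cost at most $\frac{1}{n}(2D)^p$, and use that $\operatorname{W}_p^p$ is sub-additive under such a decomposition, or more simply the triangle-type bound $|\operatorname{W}_p^p(a_1,b) - \operatorname{W}_p^p(a_2,b)| \le$ transport cost between $a_1$ and $a_2$ when the perturbation is a single atom); similarly the change induced by perturbing one atom of the second measure is at most $\frac{(2D)^p}{m}$. Adding the two contributions and then averaging over $\ell = 1,\dots,L$ (which does not inflate the bound) gives $|\widehat{\operatorname{SW}}^{p,\pi}_p - \widehat{\operatorname{SW}}^{p,\pi^{i,j}}_p| \le \frac{(2D)^p}{n} + \frac{(2D)^p}{m}$.

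\textbf{The anticipated obstacle} is making the single-atom perturbation bound for $\operatorname{W}_p^p$ fully rigorous: one must be slightly careful because changing an index via the transposition $\pi^{i,j}$ perturbs \emph{both} marginals at once, and the naive triangle inequality $|\operatorname{W}_p^p(\mu_1,\nu_1) - \operatorname{W}_p^p(\mu_2,\nu_2)| \le \operatorname{W}_p^p(\mu_1,\mu_2) + \operatorname{W}_p^p(\nu_1,\nu_2)$ is false for $\operatorname{W}_p^p$ when $p>1$ (it holds for $\operatorname{W}_p$ but not its $p$-th power). The clean fix is to argue directly at the level of couplings: take an optimal coupling for $(\widehat{\mu}^\pi_n, \widehat{\nu}^\pi_m)$ and modify it on the single mass unit $1/n$ (resp.\ $1/m$) that moved, bounding the cost change on that unit by $\frac{(2D)^p}{n}$ (resp.\ $\frac{(2D)^p}{m}$) since all projected points lie in an interval of length $2D$; this yields a valid (not necessarily optimal) coupling for $(\widehat{\mu}^{\pi^{i,j}}_n, \widehat{\nu}^{\pi^{i,j}}_m)$ whose cost exceeds the optimum by at most $b$, and symmetry of the argument gives the two-sided bound. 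With $b$ in hand, plugging into Lemma~\ref{mcdiarmid} with $N = n+m$ and $\max\{n, N-n\} = \max\{m,n\}$ produces exactly the stated inequality.
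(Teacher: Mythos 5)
Your proposal is correct and takes essentially the same approach as the paper: verify $(n,m)$-symmetry, then establish the bounded-difference constant $b = \frac{(2D)^p}{n} + \frac{(2D)^p}{m}$ by re-using the optimal coupling for the pre-transposition measures as a feasible coupling for the post-transposition ones, with the excess cost confined to the mass in the affected row (total $1/n$) and column (total $1/m$), each unit costing at most $(2D)^p$. Your ``anticipated obstacle'' paragraph correctly identifies that a naive triangle-inequality argument fails for $\operatorname{W}_p^p$ when $p>1$, and the ``clean fix'' you propose is exactly the coupling-modification argument the paper carries out.
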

\begin{proof}
Given the samples $\mathcal{Y}_n, \mathcal{Z}_m$ and a set of projection directions $\Theta = (\theta_1,\dots,\theta_L)$ on the unit sphere $\mathbb{S}^{d-1}$, for brevity, we set 
\begin{align*}
f(\pi) \coloneqq  \widehat{\operatorname{SW}}^{p,\pi}_p, \qquad \pi \in S_{n+m},    
\end{align*}
where $\widehat{\operatorname{SW}}^{p,\pi}_p$ is the permuted empirical quantity introduced in \eqref{eq:empericalpermu}.

By construction, $f$ is $(n,m)$-symmetric. In order to apply Lemma~\ref{mcdiarmid} we need to get an upper bound on the sensitivity of $f$ to transpositions involving elements from opposite halves of the permutation.

Without loss of generality, suppose that $\pi=(1,2,3,\dots,n+m)$ and $\pi^{i,j}=(1,2,\dots, i-1, j, i+1, \dots,n,\dots, j-1, i, j+1, \dots n+m)$ for $1 \leq i  \leq n$ and $n+1 \leq j \leq n+m$.

Let $\ell \in \{1, \dots L\}$. Denote the projected samples on the direction $\theta_\ell$ as 
\begin{align*}
\tilde{\mathcal{X}} \coloneqq  (\tilde{x}_1,\dots,\tilde{x}_n,\tilde{x}_{n+1},\dots,\tilde{x}_{n+m})
= (\langle y_1,\theta_\ell\rangle,\dots,\langle y_n,\theta_\ell\rangle,
   \langle z_1,\theta_\ell\rangle,\dots,\langle z_m,\theta_\ell\rangle),    
\end{align*}
and accordingly,
\begin{align*}
\tilde{Y} \coloneqq  (\tilde{x}_1,\dots,\tilde{x}_n), 
\qquad 
\tilde{Z} \coloneqq (\tilde{x}_{n+1},\dots,\tilde{x}_{n+m}).  
\end{align*}

By definition of the $p$-Wasserstein distance  and the permutation $\pi$, we have
\begin{align*}
\operatorname{W}_p^p\!\Big( \Pi^{\theta_\ell}_{\#}\widehat{\mu}^\pi_n,\,
\Pi^{\theta_\ell}_{\#}\widehat{\nu}^{\pi}_m \Big) 
&= \min_{\substack{\gamma \geq 0, \\ \sum_k \gamma_{kl} = \tfrac{1}{m}, \\ \sum_l \gamma_{kl} = \tfrac{1}{n}}}
\left( \sum_{k=1}^n \sum_{l=1}^m 
\gamma_{kl}\, \big| \tilde{x}_{\pi(k)} - \tilde{x}_{\pi(n+l)} \big|^p \right)\\
&=\min_{\substack{\gamma \geq 0, \\ \sum_k \gamma_{kl} = \tfrac{1}{m}, \\ \sum_l \gamma_{kl} = \tfrac{1}{n}}}
\left( \sum_{k=1}^n \sum_{l=1}^m 
\gamma_{kl}\, \big| \tilde{y}_{k} - \tilde{z}_{l} \big|^p \right)\\
&=\sum_{k,l}\gamma^*_{kl}\big| \tilde{y}_k - \tilde{z}_l\big|^p,
\end{align*}
where $\gamma^*$ denotes the minimizer of the above Monge problem. 

We now consider the permutation $\pi^{i, j}$. Noting that that $\gamma^*$ is a valid coupling, we have 
\begin{align*}
\operatorname{W}_p^p\!\Big( \Pi^{\theta_\ell}_{\#}\widehat{\mu}^{\pi^{i,j}}_n,\,
\Pi^{\theta_\ell}_{\#}\widehat{\nu}^{\pi^{i,j}}_m \Big)
&= \min_{\substack{\gamma \geq 0, \\ \sum_k \gamma_{kl} = \tfrac{1}{m}, \\ \sum_l \gamma_{kl} = \tfrac{1}{n}}}
\left( \sum_{k=1}^n \sum_{l=1}^m 
\gamma_{kl}\, \big| \tilde{x}_{\pi^{i,j}(k)} - \tilde{x}_{\pi^{i,j}(n+l)} \big|^p \right)\\
&\le \sum_{k=1}^n \sum_{l=1}^m \gamma^*_{kl}\, \big| \tilde{x}_{\pi^{i,j}(k)} - \tilde{x}_{\pi^{i,j}(n+l)} \big|^p\\
&\le \sum_{k=1}^n \sum_{l=1}^m \gamma^*_{kl}\, \big|\tilde{y}_k - \tilde{z}_l\big|^p 
+ \sum_{l=1}^m \gamma^*_{ik}\, \big|\tilde{x}_{\pi^{i,j}(i)}- \tilde{x}_{\pi^{i,j}(n+l)}\big|^p  + \sum_{k=1}^n \gamma^*_{kj}\, \big|\tilde{x}_{\pi^{i,j}(k)} - \tilde{x}_{\pi^{i,j}(j)}\big|^p \\
&\le \operatorname{W}_p^p\!\Big( \Pi^{\theta_\ell}_{\#}\widehat{\mu}^\pi_n,\,
\Pi^{\theta_\ell}_{\#}\widehat{\nu}^\pi_m \Big)
+ \frac{(2D)^p}{n} + \frac{(2D)^p}{m},
\end{align*}
where the last inequality follows from the triangle inequality and the  fact that every projected sample satisfies 
\begin{align*}
    \big|\tilde{x}_h\big| = \big|\langle x_h,\theta_\ell\rangle\big| \le \|x_h\|\,\|\theta_\ell\| \le D,
\end{align*}
for $1\le h\le n+m$.
As a result, we obtain
\begin{align*}
f(\pi)-f(\pi^{i,j})
\le \frac{(2D)^p}{n}+\frac{(2D)^p}{m}.
\end{align*}
Following the same steps, we can prove that
\begin{align*}
f(\pi^{i,j})- f(\pi)
\le \frac{(2D)^p}{n}+\frac{(2D)^p}{m},
\end{align*}
which is sufficient to conclude that
\begin{align*}
\left|f(\pi)-f(\pi^{i,j})\right|
\le \frac{(2D)^p}{n}+\frac{(2D)^p}{m}.
\end{align*}
Applying Lemma~\ref{mcdiarmid} with $b=\frac{(2D)^p}{n}+\frac{(2D)^p}{m}$ and $f(\pi)=\operatorname{\widehat{SW}}^{p,\pi}_{p}$ completes the proof.
\end{proof}
Having successfully related the permuted test statistic $\widehat{\operatorname{SW}}^{p,\pi}_p$ (introduced in~\eqref{eq:empericalpermu}) to its expectation, we now turn to controlling this expectation term.
To proceed, we next present several auxiliary lemmas that will be used later in the proof.

Our argument begins with the following result, stated by \citet[ Corollary~5]{bobkov2021simple}.
\begin{lemma}[Corollary 5 in \cite{bobkov2021simple}]\label{lem:bobkov}
Let $n$ and $N$ be integers such that $1 \le n \le N$. Let $x_1,\dots,x_N \in [0,1]^d$. 
Denote by $\mathcal{G}_n$ the collection of all subsets $\tau \subset \{1,\dots,N\}$ of cardinality $|\tau| = n$, equipped with the uniform probability measure $\pi_n$. 
With every $\tau \in \mathcal{G}_n$, we associate the empirical measure
\begin{align*}
\mu_\tau \coloneqq  \frac{1}{n} \sum_{j \in \tau} \delta_{x_j}, \quad \tilde{\mu} \coloneqq  \frac{1}{N} \sum_{j=1}^N \delta_{x_j}.    
\end{align*}
Then the empirical measures $\mu_{\tau}$ satisfy
\begin{align*}
\mathbb{E}_{\pi_n} \left[ W_1(\mu_\tau, \tilde{\mu}) \right] \le 
\begin{cases}
\sqrt{\dfrac{2}{n}} & \text{if } d = 1, \\
8 \sqrt{\dfrac{1 + \log(2n)}{n}} & \text{if } d = 2, \\
\dfrac{13\sqrt{d}}{n^{1/d}} & \text{if } d \ge 3.
\end{cases}
\end{align*}
\end{lemma}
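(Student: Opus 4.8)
The statement is quoted verbatim as \citet[Corollary~5]{bobkov2021simple}, so in the paper the ``proof'' is simply an invocation of that reference; what follows is a sketch of the underlying argument, mainly to make transparent where the three regimes come from. The strategy is the classical multiscale (dyadic) transport argument, combined with the elementary observation that, conditionally on the full empirical measure $\tilde\mu$, the count of indices of $\tau$ falling into any fixed set is hypergeometrically distributed. Since all points lie in $[0,1]^d$, the transport can be organized scale by scale over dyadic subcubes, and on each scale the discrepancy between $\mu_\tau$ and $\tilde\mu$ is controlled by a variance computation for sampling without replacement.

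\emph{Step 1 (multiscale domination).} For $k\ge 0$, let $\mathcal{D}_k$ denote the partition of $[0,1]^d$ into the $2^{dk}$ dyadic subcubes of side $2^{-k}$. A standard hierarchical transport inequality, built by matching cube masses one scale at a time (the cost of rearranging mass inside a cube of $\mathcal{D}_k$ being at most its diameter $\sqrt{d}\,2^{-k}$), yields a constant $c_d$ of order $\sqrt{d}$ such that for any two probability measures $\mu,\nu$ on $[0,1]^d$,
\[
W_1(\mu,\nu)\;\le\; c_d\sum_{k\ge 0} 2^{-k}\sum_{Q\in\mathcal{D}_k}\bigl|\mu(Q)-\nu(Q)\bigr|
\]
(possibly with the discrepancies taken relative to parent cubes, which only affects constants). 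Applying this with $\mu=\mu_\tau$, $\nu=\tilde\mu$ and taking $\mathbb{E}_{\pi_n}$ reduces the problem to bounding $\mathbb{E}_{\pi_n}|\mu_\tau(Q)-\tilde\mu(Q)|$ for each dyadic cube $Q$.

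\emph{Step 2 (per-cube hypergeometric bound).} Fix $Q$ and set $p_Q\coloneqq\tilde\mu(Q)$. Then $n\,\mu_\tau(Q)$ counts how many of the $n$ uniformly chosen indices land among the $Np_Q$ indices $j$ with $x_j\in Q$, hence is hypergeometric with mean $np_Q$ and variance $np_Q(1-p_Q)\tfrac{N-n}{N-1}\le np_Q$. Therefore $\mathbb{E}_{\pi_n}|\mu_\tau(Q)-\tilde\mu(Q)|\le\sqrt{\operatorname{Var}_{\pi_n}(\mu_\tau(Q))}\le\sqrt{p_Q/n}$. Summing over $Q\in\mathcal{D}_k$ with Cauchy--Schwarz and $\sum_{Q\in\mathcal{D}_k}p_Q=1$ gives $\sum_{Q\in\mathcal{D}_k}\mathbb{E}_{\pi_n}|\mu_\tau(Q)-\tilde\mu(Q)|\le \tfrac{1}{\sqrt n}\sqrt{|\mathcal{D}_k|}=2^{dk/2}/\sqrt n$, while trivially this sum is also $\le 2$. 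Plugging into Step 1,
\[
\mathbb{E}_{\pi_n}\!\left[W_1(\mu_\tau,\tilde\mu)\right]\;\le\; c_d\, n^{-1/2}\sum_{k\ge 0} 2^{k(d/2-1)},
\]
and for $d\ge 2$ one truncates the series at scale $k^\ast$ with $2^{k^\ast}\asymp n^{1/d}$, using the bound $2^{dk/2}/\sqrt n$ for $k\le k^\ast$ and the trivial bound $2$ for $k>k^\ast$. For $d=1$ the series converges outright, giving the $n^{-1/2}$ rate; for $d\ge 3$ the head contributes $n^{-1/2}2^{k^\ast(d/2-1)}\asymp n^{-1/d}$ and the tail $2\cdot 2^{-k^\ast}\asymp n^{-1/d}$, giving the $n^{-1/d}$ rate.

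\emph{Main obstacle.} The borderline dimension $d=2$ is the delicate point: the crude two-regime split above produces $O((\log n)/\sqrt n)$ rather than the claimed sharp $O(\sqrt{(\log n)/n})$, and likewise this route does not deliver the explicit constants $\sqrt 2$, $8$, $13\sqrt d$. Recovering the $\sqrt{\log n}$ factor requires the finer analysis of \citet{bobkov2021simple}: interpolating the per-cube estimate between $\sqrt{p_Q/n}$ and $2p_Q$ and optimizing the scale at which the regimes switch (equivalently, their Fourier/entropy-type control of the discrepancy). Since this is exactly the Ajtai--Koml\'os--Tusn\'ady matching phenomenon and the statement is quoted verbatim, the cleanest course in the paper is to cite \citet[Corollary~5]{bobkov2021simple} directly, which is what we do.
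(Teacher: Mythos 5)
The paper itself does not prove this lemma; it simply invokes \citet[Corollary~5]{bobkov2021simple} as a black box, exactly as you note at the outset, so your proposal matches the paper's treatment. Your accompanying sketch of the dyadic multiscale argument with the hypergeometric variance bound for sampling without replacement is accurate as far as it goes, and you correctly identify that the sharp $\sqrt{\log n / n}$ rate at $d=2$ (versus the $\log n/\sqrt{n}$ that the crude truncation yields) and the explicit constants $\sqrt{2}$, $8$, $13\sqrt{d}$ require the finer analysis of the cited reference rather than the back-of-envelope version.
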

\begin{remark}
The result of Corollary~5 in \citet{bobkov2021simple} is stated for empirical measures supported on the unit cube $[0,1]^d$. We now extend this result to the case where the data points lie in a general cube $[-D,D]^d$ for some $D>0$.
To relate these two settings, consider the affine transformation
\begin{align*}
T \colon [0,1]^d \to [-D,D]^d, 
\qquad 
T(x) = 2D(x - \mathbf{1}_d),
\end{align*}
where $\mathbf{1}_d \coloneqq (1, 1, \dots, 1) \in \mathbb{R}^d$. Given data points $x_1, \dots, x_N \in [0,1]^d$, we define 
$$
x_i' = T(x_i), \quad i = 1, \dots, N,
$$
so that each $x_i'$ lies in the cube $[-D,D]^d$.

Let $\mu'_\tau$ and $\tilde{\mu}'$ denote the empirical measures defined analogously to $\mu_\tau$ and $\tilde{\mu}$ in Lemma~\ref{lem:bobkov} but based on points $(x_i')_{i=1}^N$, that is
\begin{align*}
\mu_\tau' = \frac{1}{n}\sum_{j\in\tau}\delta_{x_j'}, 
\qquad 
\tilde{\mu}' = \frac{1}{N}\sum_{j=1}^N \delta_{x_j'}.
\end{align*}
By construction, we have $T_\# \mu_\tau = \mu_\tau'$ and $T_\# \tilde{\mu} = \tilde{\mu}'$. Moreover, for any $x,y \in [0,1]^d$, we have
\begin{align*}
\|T(x)-T(y)\|_2=2D\|x-y\|_2.
\end{align*}
Consequently, we have the scaling relation
\begin{align*}
W_1(\mu'_\tau, \tilde{\mu}') = 2D \, W_1(\mu_\tau, \tilde{\mu}).   
\end{align*}
Applying Corollary~5 in \citet{bobkov2021simple} to $\mu_\tau$ and $\tilde{\mu}$ then yields
\begin{align*}
\mathbb{E}_{\pi_n}\!\left[ W_1(\mu'_\tau, \tilde{\mu}') \right]
=\mathbb{E}_{\pi_n} \left[(2D)W_1(\mu_\tau, \tilde{\mu}) \right] \le 
\begin{cases}
2D \sqrt{\dfrac{2}{n}} & \text{if } d = 1, \\
16D \sqrt{\dfrac{1 + \log(2n)}{n}} & \text{if } d = 2, \\
\dfrac{26D\sqrt{d}}{n^{1/d}} & \text{if } d \ge 3.
\end{cases}
\end{align*}
\end{remark}
\begin{lemma}[Relation between $1$-Wasserstein and $p$-Wasserstein distances]\label{connectionwasserstein}
Let $P$, $Q$, and $M$ be three probability measures supported on the compact set $[-D,D]^d \subset \mathbb{R}^d$. Let $p \ge 1$, then the following inequality holds:
\begin{align*}
\operatorname{W}_p^p(P,Q) \le (4D\sqrt{d})^{p-1} \left( \operatorname{W}_1(P,Q) + \operatorname{W}_1(Q, M) \right).
\end{align*}
\end{lemma}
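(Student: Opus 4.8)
The plan is to reduce the statement to the elementary observation that, on a set of bounded Euclidean diameter, the $p$-th power of the distance is dominated by the distance itself times the diameter raised to the power $p-1$, and then to bound the $\operatorname{W}_p^p$ infimum by plugging in a $W_1$-optimal coupling instead of a $W_p$-optimal one. Concretely, since $P$ and $Q$ are supported on $[-D,D]^d$, every pair $x,y$ in their supports satisfies $\|x-y\| \le \operatorname{diam}([-D,D]^d) = 2D\sqrt d \le 4D\sqrt d$, and as $p \ge 1$ this yields the pointwise bound $\|x-y\|^{p-1} \le (4D\sqrt d)^{p-1}$.

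I would then take an optimal coupling $\gamma^\star \in \Gamma(P,Q)$ for $\operatorname{W}_1(P,Q)$; such a coupling exists because $[-D,D]^d$ is compact and $(x,y)\mapsto\|x-y\|$ is continuous (alternatively one may use an $\varepsilon$-optimal coupling and let $\varepsilon \downarrow 0$). Since $\gamma^\star$ is supported on $[-D,D]^d \times [-D,D]^d$ and is in particular admissible in the definition of $\operatorname{W}_p^p$, the pointwise bound gives
\[
\operatorname{W}_p^p(P,Q) \;\le\; \int \|x-y\|^p \, d\gamma^\star \;=\; \int \|x-y\|^{p-1}\|x-y\|\, d\gamma^\star \;\le\; (4D\sqrt d)^{p-1}\!\int \|x-y\|\, d\gamma^\star \;=\; (4D\sqrt d)^{p-1}\,\operatorname{W}_1(P,Q),
\]
where the last equality is the optimality of $\gamma^\star$. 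Adding the nonnegative term $(4D\sqrt d)^{p-1}\operatorname{W}_1(Q,M) \ge 0$ to the right-hand side produces the claimed inequality.

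There is essentially no obstacle here: the argument actually delivers the sharper constant $2D\sqrt d$ and makes no use of the third measure $M$. The measure $M$ and the looser factor $4D\sqrt d$ are retained only so that the statement is already in the shape needed in the sequel, where $\operatorname{W}_1(P,Q)$ will itself be split through an auxiliary (pooled empirical) measure via the triangle inequality for $W_1$ before invoking the matching bound of Lemma~\ref{lem:bobkov} on each term. The one point deserving a line of care is simply the existence (or $\varepsilon$-approximation) of the $W_1$-optimal coupling used in the display.
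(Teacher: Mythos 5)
Your proof is correct for the lemma as literally printed, and it is genuinely different from (and sharper than) the paper's argument, but there is a small mismatch you should be aware of between the printed statement and what the paper actually derives and uses.

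The paper's proof first applies the triangle inequality to $\operatorname{W}_p$ through the third measure $M$, giving $\operatorname{W}_p(P,Q) \le \operatorname{W}_p(P,M) + \operatorname{W}_p(M,Q)$ (the displayed $\operatorname{W}_p(P,Q)$ on the right-hand side in the paper is clearly a typo for $\operatorname{W}_p(P,M)$), then invokes the convexity bound $(a+b)^p \le 2^{p-1}(a^p+b^p)$, and only afterwards converts each $\operatorname{W}_p^p(\cdot,M)$ term to $\operatorname{W}_1(\cdot,M)$ by the pointwise bound $\|x-y\|^p \le (2D\sqrt d)^{p-1}\|x-y\|$. The product $2^{p-1}(2D\sqrt d)^{p-1} = (4D\sqrt d)^{p-1}$ is exactly where the looser constant comes from, and the resulting bound reads $\operatorname{W}_p^p(P,Q) \le (4D\sqrt d)^{p-1}\bigl(\operatorname{W}_1(P,M)+\operatorname{W}_1(Q,M)\bigr)$ — with $\operatorname{W}_1(P,M)$, not $\operatorname{W}_1(P,Q)$, in the first slot. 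That version (with $P,M$) is also what is invoked in the proof of Proposition~\ref{boundexpectation}, with $M = \tilde\mu$ the pooled empirical measure, so the printed statement appears to carry a typo inherited from the first display of the proof.

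Your argument is cleaner: by plugging a $\operatorname{W}_1$-optimal (or $\varepsilon$-optimal) coupling into the $\operatorname{W}_p^p$ objective and applying the same pointwise bound, you obtain $\operatorname{W}_p^p(P,Q) \le (2D\sqrt d)^{p-1}\operatorname{W}_1(P,Q)$ directly, with no loss from the $(a+b)^p$ step and hence a constant smaller by a factor $2^{p-1}$. You then recover the printed statement by appending a nonnegative term. To recover the version the paper actually needs, you would instead apply the $\operatorname{W}_1$ triangle inequality $\operatorname{W}_1(P,Q) \le \operatorname{W}_1(P,M)+\operatorname{W}_1(M,Q)$ — exactly the step you already flag in your closing remark — yielding the strictly sharper bound $\operatorname{W}_p^p(P,Q) \le (2D\sqrt d)^{p-1}\bigl(\operatorname{W}_1(P,M)+\operatorname{W}_1(Q,M)\bigr)$. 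So your approach both subsumes the paper's conclusion and improves the constant; the only caveat is to make sure, when you cite this lemma later, that you use the $(P,M),(Q,M)$ form and not the $(P,Q),(Q,M)$ form as printed.
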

\begin{proof}
By applying the triangle inequality for the $p$-Wasserstein distance, we get
\begin{align*}\operatorname{W}_p(P,Q) \le \operatorname{W}_p(P, Q) + \operatorname{W}_p(Q, M).\end{align*}
Using the inequality $(a + b)^p \le 2^{p-1}(a^p + b^p)$, we obtain
\begin{align*}
\operatorname{W}_p^p(P,Q) \le 2^{p-1} \left(\operatorname{W}_p^p(P,M) + \operatorname{W}_p^p(Q,M) \right).
\end{align*}
For $x, y \in [-D,D]^d$, we have
\begin{align*}\|x - y\|_2 \le 2D\sqrt{d}.\end{align*}
Moreover, for $p\ge 1$, we have
\begin{align*}\|x - y\|_2^p \le (2D\sqrt{d})^{p-1}\|x-y\|_2.\end{align*}
Hence, since all three measures are supported on $[-D,D]^d$, by the definition of Wasserstein distances, we obtain
\begin{align*}\operatorname{W}_p^p(P,M) \le (2D\sqrt{d})^{p-1}\operatorname{W}_1(P,M), \quad \operatorname{W}_p^p(Q,M) \le (2D\sqrt{d})^{p-1}\operatorname{W}_1(Q,M).\end{align*}
Combining all the inequalities gives the stated result.
\end{proof}
\begin{remark}
We recall that our standing assumption is that all samples $\mathcal{Y}_n = (y_1, \dots, y_n)$ and $\mathcal{Z}_m = (z_1, \dots, z_m)$ satisfy $\|y_i\| \le D$ and $\|z_j\| \le D$ for all $i = 1, \dots, n$ and $j = 1, \dots, m$. 
Meanwhile, the results in Lemma~\ref{lem:bobkov} and Lemma~\ref{connectionwasserstein} are stated for the case where all data points lie in the cube $[-D, D]^d$. 
Since the Euclidean ball $\{x \in \mathbb{R}^d : \|x\| \le D\}$ is contained in this cube, those results remain valid in our setting.
\end{remark}
Using the results of Lemma~\ref{lem:bobkov} and Lemma~\ref{connectionwasserstein}, along with their accompanying remarks, the following proposition establishes an upper bound for the expectation term conditional on the given samples and the projection directions.
\begin{proposition}\label{boundexpectation} Let $n$ and $m$ be positive integers such that $n \leq m$, $p \ge 1$ and set $N\coloneqq  n+m$. Given a collection of points $x_1,\dots,x_{N}$ with $\|x_i\|\leq D$ for all $1 \leq i \leq N$, 
and a set of projection directions $\Theta = (\theta_1,\dots,\theta_{L})$ on the unit sphere $\mathbb{S}^{d-1}$. Let $\pi$ be a random permutation drawn uniformly from the symmetric group over $\{1,\dots,N\}$. Then, with $p\ge 1$, we have
\begin{align*}
\mathbb{E}_{\pi}\left[\widehat{\operatorname{SW}}^{p,\pi}_p\right]=\frac{1}{L}\sum_{\ell =1}^{L}\mathbb{E}_{\pi}\left[\operatorname{W}_p^p\!\left(\Pi^{\theta_\ell}_{\#}\widehat{\mu}^{\pi}_n,\,
\Pi^{\theta_\ell}_{\#}\widehat{\nu}^{\pi}_m \right)\right]\le (4D)^{\,p} \frac{\sqrt{2}}{\sqrt{n}}.
\end{align*}
Here, the expectation $\mathbb{E}_\pi$ is taken with respect to the random permutation $\pi$, conditional on the samples and the projection directions.
\end{proposition}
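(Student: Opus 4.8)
The plan is to bound each term $\mathbb{E}_\pi\big[\operatorname{W}_p^p(\Pi^{\theta_\ell}_\#\widehat\mu_n^\pi,\Pi^{\theta_\ell}_\#\widehat\nu_m^\pi)\big]$ uniformly in $\ell\in\{1,\dots,L\}$ and then average, using the definition of $\widehat{\operatorname{SW}}_p^{p,\pi}$ and linearity of expectation. The key device is to insert the pooled empirical measure $\widetilde\mu\coloneqq\frac1N\sum_{i=1}^N\delta_{x_i}$ as a common pivot and to reduce everything to dimension one via the projection. Fix $\ell$ and set $\tilde x_i\coloneqq\langle x_i,\theta_\ell\rangle$; since $\|x_i\|\le D$ and $\|\theta_\ell\|=1$, all $\tilde x_i$ lie in $[-D,D]\subset\mathbb{R}$, so the relevant dimension is $d=1$. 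The measures $\Pi^{\theta_\ell}_\#\widehat\mu_n^\pi$, $\Pi^{\theta_\ell}_\#\widehat\nu_m^\pi$, and $M\coloneqq\Pi^{\theta_\ell}_\#\widetilde\mu$ are all supported on $[-D,D]$, and Lemma~\ref{connectionwasserstein} applied with $d=1$ and pivot $M$ gives
\begin{align*}
\operatorname{W}_p^p\!\big(\Pi^{\theta_\ell}_\#\widehat\mu_n^\pi,\Pi^{\theta_\ell}_\#\widehat\nu_m^\pi\big)\le(4D)^{p-1}\Big(\operatorname{W}_1\!\big(\Pi^{\theta_\ell}_\#\widehat\mu_n^\pi,M\big)+\operatorname{W}_1\!\big(\Pi^{\theta_\ell}_\#\widehat\nu_m^\pi,M\big)\Big).
\end{align*}

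Next I would observe that, since $\pi$ is uniform on $S_N$, the index set $\{\pi(1),\dots,\pi(n)\}$ is a uniformly random $n$-subset of $\{1,\dots,N\}$, so $\Pi^{\theta_\ell}_\#\widehat\mu_n^\pi=\frac1n\sum_{j\in\tau}\delta_{\tilde x_j}$ is exactly the random empirical measure $\mu'_\tau$ from Lemma~\ref{lem:bobkov} and its rescaling remark (with data points $\tilde x_j\in[-D,D]$), while $M=\tilde\mu'$; the same holds for $\{\pi(n+1),\dots,\pi(N)\}$ with subset size $m$. Invoking the $d=1$, $[-D,D]$-scaled version of the optimal matching bound of Lemma~\ref{lem:bobkov} gives
\begin{align*}
\mathbb{E}_\pi\big[\operatorname{W}_1(\Pi^{\theta_\ell}_\#\widehat\mu_n^\pi,M)\big]\le 2D\sqrt{\tfrac2n},\qquad\mathbb{E}_\pi\big[\operatorname{W}_1(\Pi^{\theta_\ell}_\#\widehat\nu_m^\pi,M)\big]\le 2D\sqrt{\tfrac2m}\le 2D\sqrt{\tfrac2n},
\end{align*}
the last step using $n\le m$. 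Taking expectations in the displayed comparison inequality and inserting these two estimates yields $\mathbb{E}_\pi[\operatorname{W}_p^p(\cdots)]\le(4D)^{p-1}\cdot 4D\sqrt{2/n}=(4D)^p\sqrt2/\sqrt n$; since this bound is independent of $\ell$, averaging over $\ell$ preserves it and gives the claim.

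I do not expect a hard obstacle: the argument is essentially the right bookkeeping once the right objects are in place. The one genuinely nonroutine point is the identification in the second step — recognizing that a uniform permutation induces a uniformly random subsample, so that the sharp matching bound of Lemma~\ref{lem:bobkov} applies, and that after projection the supports become one-dimensional intervals, which is precisely what produces the parametric $n^{-1/2}$ rate rather than the dimension-dependent rate a direct application in $\mathbb{R}^d$ would give. A small care point is that the same pivot $M$ must serve both the $n$-block and the $m$-block for the triangle-inequality step to close, and that expectations must be taken only after the (deterministic) comparison inequality.
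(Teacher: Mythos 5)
Your proof is correct and follows essentially the same route as the paper: fix a direction, reduce to one dimension, compare both permuted empirical measures to the pooled projected empirical measure via Lemma~\ref{connectionwasserstein}, identify the permuted blocks as uniformly random $n$- and $m$-subsets to invoke the rescaled $d=1$ matching bound of Lemma~\ref{lem:bobkov}, and finish with $n\le m$. The only cosmetic difference is that the paper writes the permutation expectation as an explicit sum over $\binom{N}{n}$ subsets before applying the matching bound, whereas you state the identification directly; the mathematical content is identical.
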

\begin{proof}
Fix a projection direction $\theta_{\ell}$, $1 \leq \ell \leq L$. We want to upper bound
$\mathbb{E}_{\pi}\left[\operatorname{W}_p^p\!\left( \Pi^{\theta_\ell}_{\#}\widehat{\mu}^{\pi}_n,\,
\Pi^{\theta_\ell}_{\#}\widehat{\nu}^{\pi}_m \right)\right].$

We introduce the empirical measure
\begin{align*}
\tilde{\mu}\coloneqq  \frac{1}{N}\sum_{j=1}^{N}\delta_{\langle x_j, \theta_\ell\rangle}.
\end{align*}
Note that each projected sample satisfies $\lvert \langle x_j, \theta_\ell\rangle \rvert \leq D$ for all $1 \le j \le N$ by the Cauchy--Schwarz inequality.

Adapting the notations from Lemma~\ref{lem:bobkov}, we first define 
$\mathcal{G}_n$ and $\mathcal{G}_m$ as the collections of all subsets of 
$\{1, \dots, n+m\}$ of cardinalities $n$ and $m$, respectively. 
We denote by $\pi_n$ and $\pi_m$ the uniform distributions on 
$\mathcal{G}_n$ and $\mathcal{G}_m$. To every $\tau^n \in \mathcal{G}_n$ and 
$\tau^m \in \mathcal{G}_m$, we associate two empirical measures
\begin{align*}
\mu_{\tau^n}\coloneqq \frac{1}{n}\sum_{j \in \tau^n} 
   \delta_{\langle \theta_{\ell}, x_j \rangle}, 
\qquad 
\mu_{\tau^m}\coloneqq  \frac{1}{m}\sum_{j \in \tau^m} 
   \delta_{\langle \theta_{\ell}, x_j \rangle}.
\end{align*}
It follows that
\begin{align*}
\mathbb{E}_{\pi}\left[\operatorname{W}_p^p\!\left( \Pi^{\theta_\ell}_{\#}\widehat{\mu}^{\pi}_n,\,
\Pi^{\theta_\ell}_{\#}\widehat{\nu}^{\pi}_m \right)\right]&=\frac{1}{(n+m)!}\sum_{\pi \in S_{n+m}}\operatorname{W}^p_p\left(\frac{1}{n} \sum_{i=1}^{n} \delta_{\langle x_{\pi(i)},\theta_\ell\rangle}, \frac{1}{m} \sum_{i=1}^{m} \delta_{\langle x _{\pi(n+i)},\theta_\ell \rangle}\right)\\
&=\frac{m!.n!}{(m+n)!} \sum_{K\subset\{x_{1},\dots,x_{n+m}\},|K|=n} \operatorname{W}^p_p\left(\frac{1}{n}\sum_{x \in K}\delta_{\langle x,\theta_\ell\rangle},\frac{1}{m}\sum_{x \in K^{c}} \delta_{\langle x,\theta_\ell\rangle}\right)\\
&\overset{(i)}{\le} \frac{(4D)^{p-1}}{\binom{n+m}{n}} \sum_{K \subset \{x_1,\dots,x_{n+m}\},|K|=n}\left[\operatorname{W}_1\left(\frac{1}{n}\sum_{x \in K}\delta_{\langle x,\theta_\ell\rangle},\tilde{\mu}\right)+\operatorname{W}_1\left(\frac{1}{m}\sum_{x \in K^c}\delta_{\langle x,\theta_\ell\rangle},\tilde{\mu}\right)\right]\\
& = (4D)^{p-1} \left[\mathbb{E}_{\pi_m}(\operatorname{W}_1(\mu_{\tau^m},\tilde{\mu}))+\mathbb{E}_{\pi_n}(\operatorname{W}_1(\mu_{\tau^n},\tilde{\mu}))\right]\\
&\overset{(ii)}{\le} 2D \cdot (4D)^{\,p-1} \left(\frac{\sqrt{2}}{\sqrt{n}}+\frac{\sqrt{2}}{\sqrt{m}}\right) \\
&\overset{(iii)}{\le} 4D \cdot (4D)^{\,p-1} \frac{\sqrt{2}}{\sqrt{n}},
\end{align*}
where $(i)$ follows from Lemma~\ref{connectionwasserstein}, and $(ii)$ and $(iii)$ follow from Lemma~\ref{lem:bobkov} and from the assumption that $n\le m$, respectively.

Finally, by linearity of the expectation, 
\begin{align*}
\mathbb{E}_{\pi}\left[\widehat{\operatorname{SW}}^{p,\pi}_p\right]=\frac{1}{L}\sum_{\ell =1}^{L}\mathbb{E}_{\pi}\left[\operatorname{W}_p^p\!\left( (\Pi^{\theta_\ell})_{\#}\widehat{\mu}^{\pi}_n,\,
(\Pi^{\theta_\ell})_{\#}\widehat{\nu}^{\pi}_m \right)\right]\le (4D)^{\,p} \frac{\sqrt{2}}{\sqrt{n}},
\end{align*}
and the proof is concluded.
\end{proof}
\section{SLICED WASSERSTEIN SAMPLE COMPLEXITY}\label{sec:SWcomplexity}

Let $\mu$ and $\nu$ be probability distributions on $\mathbb{R}^d$ whose supports lie in a common ball centered at the origin with radius $D$. Their corresponding empirical measures are denoted by $\widehat{\mu}_n$ and $\widehat{\nu}_m$. The quantities $\operatorname{SW}^p_p(\mu,\nu)$ and $\widehat{\operatorname{SW}}^p_p(\widehat{\mu}_n,\widehat{\nu}_m)$ are defined in Eqs.~\eqref{definitionSlicedWasserstein} and~\eqref{teststatistic}, respectively.

To the best of our knowledge, recent works have established bounds on quantities such as $\mathbb{E}\left[\operatorname{SW}_p(\widehat{\mu}_n,\mu)\right]$ (see, e.g., \citet[Theorem~1]{nietert2022statistical}), $\mathbb{E}\left[\operatorname{SW}^p_p(\widehat{\mu}_n,\widehat{\nu}_n)-\operatorname{SW}^p_p(\mu,\nu)\right]$ (see, e.g., \citet[Theorem~2]{ohana2023shedding}), and $\mathbb{E}\left[\big|\widehat{\operatorname{SW}}^p_p(\widehat{\mu}_n,\widehat{\nu}_m)-\operatorname{SW}^p_p(\mu,\nu)\big|\right]$ (see, e.g., \citet[Proposition~5]{nietert2022statistical}), as well as high-probability bounds for $\big|\operatorname{SW}^p_p(\mu,\nu)-\widehat{\operatorname{SW}}^p_p(\mu,\nu)\big|$ (see, e.g., \citet[Proposition~4]{xu2022central}). These results, however, do not exactly yield the type of bound required for the subsequent step—specifically, in the proof of Lemma~\ref{lem:type2}.

Under the assumption that both distributions $\mu$ and $\nu$ are supported on a common bounded ball, we derive the following high-probability bound for $\big|\widehat{\operatorname{SW}}^p_p(\widehat{\mu}_n,\widehat{\nu}_m)
- \operatorname{SW}^p_p(\mu,\nu)\big|$ by applying standard concentration inequalities—namely McDiarmid’s inequality and Hoeffding’s inequality (recalled in Lemma~\ref{lem:MCDiarmid} and Lemma~\ref{lem:hoeffding}, respectively). The resulting statement is presented in the following lemma.
\begin{lemma}\label{lem:concentration}
Let $\beta \in (0,1)$ and $p \ge 1$. 
Suppose that $\mu$ and $\nu$ are probability distributions supported 
on the centered ball of radius $D > 0$ in $\mathbb{R}^d$. 
We consider two independent samples drawn from $\mu$ and $\nu$:
\begin{equation*}
\mathcal{Y}_n\coloneqq(Y_1, \dots, Y_n) \overset{\text{i.i.d.}}{\sim} \mu, 
\qquad 
\mathcal{Z}_m\coloneqq (Z_1, \dots, Z_m) \overset{\text{i.i.d.}}{\sim} \nu,
\end{equation*}
with an assumption that $n \le m$.  
Let $\Theta\coloneqq (\theta_1, \dots, \theta_L)$ denote $L$ i.i.d.\ projection directions drawn from 
the uniform distribution $\sigma$ on the unit sphere $\mathbb{S}^{d-1}$. 
Then, with probability at least $1 - \tfrac{\beta}{2}$, the following inequality holds: 
\begin{align*}
\big|\widehat{\operatorname{SW}}^p_p(\widehat{\mu}_n,\widehat{\nu}_m)
- \operatorname{SW}^p_p(\mu,\nu)\big| \;<\; (2D)^p \left(
\sqrt{\frac{\log(8/\beta)}{2L}}
     + \sqrt{\frac{\log(8/\beta)}{n}}
   \right).
\end{align*}
\end{lemma}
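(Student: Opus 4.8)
The plan is a triangle-inequality split into a Monte-Carlo (number-of-projections) term and a sampling (sample-size) term, each handled by one of the two classical inequalities recalled above. Introduce the intermediate quantity $\widehat{\operatorname{SW}}^p_p(\mu,\nu)\coloneqq \frac1L\sum_{\ell=1}^L \operatorname{W}_p^p(\Pi^{\theta_\ell}_{\#}\mu,\Pi^{\theta_\ell}_{\#}\nu)$, the Monte-Carlo average of \emph{exact} one-dimensional Wasserstein distances, and write
\[
\big|\widehat{\operatorname{SW}}^p_p(\widehat{\mu}_n,\widehat{\nu}_m)-\operatorname{SW}^p_p(\mu,\nu)\big|
\le
\underbrace{\big|\widehat{\operatorname{SW}}^p_p(\widehat{\mu}_n,\widehat{\nu}_m)-\widehat{\operatorname{SW}}^p_p(\mu,\nu)\big|}_{=:\,(\mathrm{A})}
+
\underbrace{\big|\widehat{\operatorname{SW}}^p_p(\mu,\nu)-\operatorname{SW}^p_p(\mu,\nu)\big|}_{=:\,(\mathrm{B})}.
\]
I would then show $(\mathrm{A})<(2D)^p\sqrt{\log(8/\beta)/n}$ and $(\mathrm{B})<(2D)^p\sqrt{\log(8/\beta)/(2L)}$, each with probability at least $1-\beta/4$, so that a union bound yields the claim with probability at least $1-\beta/2$.

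For $(\mathrm{B})$: since $\mathrm{supp}(\mu)$ and $\mathrm{supp}(\nu)$ lie in the ball of radius $D$, each projection maps both measures into $[-D,D]$, so every summand $\operatorname{W}_p^p(\Pi^{\theta_\ell}_{\#}\mu,\Pi^{\theta_\ell}_{\#}\nu)$ takes values in $[0,(2D)^p]$; the summands are i.i.d.\ (the $\theta_\ell$ are i.i.d.\ $\sim\sigma$) with common mean $\operatorname{SW}^p_p(\mu,\nu)$ by Definition~\ref{slicedwassersteindistance}. Hoeffding's inequality (Lemma~\ref{lem:hoeffding}) gives $\mathbb{P}((\mathrm{B})\ge t)\le 2\exp(-2Lt^2/(2D)^{2p})$, and solving $2\exp(-2Lt^2/(2D)^{2p})=\beta/4$ yields exactly $t=(2D)^p\sqrt{\log(8/\beta)/(2L)}$.

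For $(\mathrm{A})$: condition on $\Theta$ and regard $\widehat{\operatorname{SW}}^p_p(\widehat{\mu}_n,\widehat{\nu}_m)$ as a function of the $n+m$ independent observations. The bounded-difference constants are obtained exactly as in the proof of Lemma~\ref{Sensitivity}: replacing one $Y_i$ (resp.\ one $Z_j$) perturbs each one-dimensional $\operatorname{W}_p^p$ by at most $(2D)^p/n$ (resp.\ $(2D)^p/m$), since the optimal 1D coupling of the unperturbed pair stays feasible for the perturbed pair while only a mass $1/n$ (resp.\ $1/m$) is displaced, at a per-unit cost at most $(2D)^p$ as all projected points lie in $[-D,D]$; averaging over the $L$ directions preserves these constants. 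Hence $\sum_i c_i^2=(2D)^{2p}(1/n+1/m)\le 2(2D)^{2p}/n$, and McDiarmid's inequality (Lemma~\ref{lem:MCDiarmid}) gives $\mathbb{P}((\mathrm{A})\ge t)\le 2\exp(-nt^2/(2D)^{2p})$, so $t=(2D)^p\sqrt{\log(8/\beta)/n}$ makes the bound $\beta/4$. Adding the two thresholds reproduces the stated constant.

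The step I expect to require the most care is the \emph{centering} of $(\mathrm{A})$: McDiarmid controls $\widehat{\operatorname{SW}}^p_p(\widehat{\mu}_n,\widehat{\nu}_m)$ around its conditional mean, which equals $\widehat{\operatorname{SW}}^p_p(\mu,\nu)$ plus the plug-in bias of the one-dimensional empirical Wasserstein distance. This bias is nonnegative (joint convexity of $\operatorname{W}_p^p$ together with Jensen) and of order $(2D)^p n^{-1/2}$ on bounded support, via the matching bound of Lemma~\ref{lem:bobkov} combined with Lemma~\ref{connectionwasserstein}, so it is absorbed into the $n^{-1/2}$ term up to adjusting the numerical constant. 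Alternatively—and this is all the downstream use in the proof of Lemma~\ref{lem:type2} needs—the favorable sign of the bias means the one-sided lower deviation $\widehat{\operatorname{SW}}^p_p(\widehat{\mu}_n,\widehat{\nu}_m)\ge \operatorname{SW}^p_p(\mu,\nu)-(\text{the two thresholds})$ follows from the lower tails of McDiarmid and Hoeffding with no bias correction at all.
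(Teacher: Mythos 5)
Your argument is correct and reaches the stated threshold, but it pivots through a genuinely different intermediate quantity: you insert $\widehat{\operatorname{SW}}^p_p(\mu,\nu)$, the Monte-Carlo average of the \emph{population} projected Wasserstein distances, whereas the paper's proof inserts $\operatorname{SW}^p_p(\widehat{\mu}_n,\widehat{\nu}_m)$, the exact spherical integral evaluated on the \emph{empirical} measures. The two splits are mirror images: each produces one Hoeffding term over the $L$ directions and one McDiarmid term over the $n+m$ observations, with identical bounded-difference constants, hence the same numerical threshold $(2D)^p\bigl(\sqrt{\log(8/\beta)/(2L)}+\sqrt{\log(8/\beta)/n}\bigr)$. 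What each buys is which half comes out perfectly centered. In your version, $(\mathrm{B})$ is exact, since $\operatorname{W}_p^p(\Pi^{\theta_\ell}_\#\mu,\Pi^{\theta_\ell}_\#\nu)$ is an unbiased estimate of $\operatorname{SW}^p_p(\mu,\nu)$, and the plug-in bias lives in $(\mathrm{A})$. In the paper's version, $A$ is exact, since $\mathbb{E}_\sigma[\widehat{\operatorname{SW}}^p_p(\widehat{\mu}_n,\widehat{\nu}_m)\mid\mathcal{Y}_n,\mathcal{Z}_m]=\operatorname{SW}^p_p(\widehat{\mu}_n,\widehat{\nu}_m)$, and the bias lives in $B$, whose McDiarmid step silently replaces $\operatorname{SW}^p_p(\mu,\nu)$ by $\mathbb{E}_{\mu,\nu}[\operatorname{SW}^p_p(\widehat{\mu}_n,\widehat{\nu}_m)]$ without comment. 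Your closing paragraph is therefore the sharpest part of the proposal: you correctly identify the centering issue (which is present in \emph{both} decompositions, and which the paper's proof does not flag), you get the sign of the bias right via joint convexity of $\operatorname{W}_p^p$ and Jensen, and you correctly observe that the one-sided lower-tail deviation, which is all Lemma~\ref{lem:type2} actually uses, holds with the stated constant and no bias correction. Absorbing the bias into a two-sided bound would indeed inflate the numerical constant (using, e.g., Lemmas~\ref{lem:bobkov} and \ref{connectionwasserstein} as in Proposition~\ref{boundexpectation}), so if the lemma is to be read literally as two-sided, this is worth a remark; as used downstream, both proofs are fine.
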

\begin{proof}
We decompose the total error into two contributions:
\begin{align*}
\big|\widehat{\operatorname{SW}}^p_p(\widehat{\mu}_n,\widehat{\nu}_m)
- \operatorname{SW}^p_p(\mu,\nu)\big|
\;\le\; A + B,
\end{align*}
where
\begin{align*}
A = \big|\widehat{\operatorname{SW}}^p_p(\widehat{\mu}_n,\widehat{\nu}_m) 
- \operatorname{SW}^p_p(\widehat{\mu}_n,\widehat{\nu}_m)\big|,
\quad 
B = \big|\operatorname{SW}^p_p(\widehat{\mu}_n,\widehat{\nu}_m) 
- \operatorname{SW}^p_p(\mu,\nu)\big|.
\end{align*}
\paragraph{Control of $A$.}

Given the datasets $\mathcal{Y}_n, \mathcal{Z}_m$, the Monte Carlo estimator $\widehat{\operatorname{SW}}^p_p(\widehat{\mu}_n,\widehat{\nu}_m)$ is an average of $L$ i.i.d. bounded random variables 
\begin{align*}
R_\ell = \operatorname{W}_p^p\!\big(\Pi^{\theta_\ell}_\#\widehat{\mu}_n, \Pi^{\theta_\ell}_\#\widehat{\nu}_m\big), 
\qquad \ell = 1, \dots, L.
\end{align*}
Since $\|\theta_\ell\|_2=1$ and the supports of $\mu$ and $\nu$ are contained in the centered ball with radius $D$, the push-forward measures $(\Pi^{\theta_\ell})_{\#}\widehat{\mu}_n$ and $(\Pi^{\theta_\ell})_{\#}\widehat{\nu}_m$ 
are supported on the interval $[-D,D]$. 
Consequently,
\begin{align*}
0 \;\le\; R_\ell \;\le\; (2D)^p, 
\qquad \forall\, 1 \le \ell \le L.
\end{align*}
Then, Hoeffding's inequality (recalled in Lemma~\ref{lem:hoeffding}) yields that, for any $t_A>0$,
\begin{align*}
\mathbb{P}_\sigma\!\left(A \ge t_A \;\middle|\; \mathcal{Y}_n,\mathcal{Z}_m\right)
\;\le\; 2\exp\!\left(-\dfrac{2L t_A^2}{(2D)^{2p}}\right).
\end{align*}
Since the right-hand side does not depend on the samples, the same bound holds unconditionally, namely
\begin{align*}
\mathbb{P}_{\mu,\nu,\sigma}(A \ge t_A) \;\le\; 2\exp\!\left(-\dfrac{2L t_A^2}{(2D)^{2p}}\right).
\end{align*}
\paragraph{Control of $B$.}
Fix the projection directions $\theta_1, \dots, \theta_L$.

The function  $(y_1,\dots,y_n,z_1,\dots,z_m) \mapsto \operatorname{SW}^p_p(\widehat{\mu}_n,\widehat{\nu}_m)$ satisfies the bounded difference property, whose definition is recalled in Lemma~\ref{lem:MCDiarmid}.
Indeed, changing a single observation $y_i$ modifies $\widehat{\mu}_n$ by at most $1/n$ of its mass. 
Since all points are supported in a set of diameter $2D$, the value of $\operatorname{SW}^p_p(\widehat{\mu}_n,\widehat{\nu}_m)$ changes by at most $(2D)^p/n$. 
Similarly, replacing one sample $z_j$ alters $\widehat{\nu}_m$ by $1/m$ of its mass, 
and hence $\operatorname{SW}^p_p(\widehat{\mu}_n,\widehat{\nu}_m)$ changes by at most $(2D)^p/m$.
McDiarmid’s inequality (recalled in Lemma~\ref{lem:MCDiarmid}) guarantees that, for any $t_B>0$,
\begin{align*}
\mathbb{P}_{\mu,\nu}\left(B\ge t_B\mid \theta_1,\dots,\theta_L\right)\le 2\exp\!\left(-\dfrac{2t_B^2}{(2D)^{2p}(1/n+1/m)}\right).
\end{align*}
Since the right-hand side is independent of the projection directions, the bound also holds without conditioning, that is,
\begin{align*}
\mathbb{P}_{\mu,\nu,\sigma}\left(B\ge t_B\right)\;\le\; 2\exp\!\left(-\dfrac{2t_B^2}{(2D)^{2p}(1/n+1/m)}\right).
\end{align*}
Moreover, since $n\le m$, then 
\begin{align*}\mathbb{P}_{\mu,\nu,\sigma}\!\left(B \ge t_B\right)
\;\le\; 2 \exp\!\left(
-\,\frac{nt_B^2}{(2D)^{2p}\,}
\right).\end{align*}

A simple union bound gives
\begin{align*}
\mathbb{P}_{\mu,\nu,\sigma}\!\left(A+B \;\ge\; t_A + t_B\right)
\;\le\; \mathbb{P}_{\mu,\nu,\sigma}(A \ge t_A)
+ \mathbb{P}_{\mu,\nu,\sigma}(B \ge t_B).
\end{align*}
Hence,
\begin{align*}
\mathbb{P}_{\mu,\nu,\sigma}\!\left(
\big|\widehat{\operatorname{SW}}_p^p(\widehat{\mu}_n,\widehat{\nu}_m) - \operatorname{SW}_p^p(\mu,\nu)\big|
\;\ge\; t_A + t_B
\right)
\;\le\;
2\exp\!\left(-\frac{2L t_A^2}{(2D)^{2p}}\right)
+2\exp\!\left(-\frac{n t_B^2}{(2D)^{2p}}\right).
\end{align*}
We obtain the stated result by setting
\begin{align*} 
	t_A \;=\; (2D)^p \sqrt{\frac{\log(8/\beta)}{2L}} \quad \text{ and } \quad 
	t_B \;=\; (2D)^p \sqrt{\frac{\log(8/\beta)}{n}}.
\end{align*}
\end{proof}
\section{LEVEL AND POWER GUARANTEES FOR ALGORITHM~\ref{algorithm}}\label{sec:upperboundproofs}
 \allowdisplaybreaks
\subsection{Proof of Theorem~\ref{controltypeI}}
Following the proof of \citet[Proposition~1]{schrab2023mmd}, we obtain the following chain of implications
\begin{align*}
\Delta\left(\mathcal{Y}_n,\mathcal{Z}_m,\mathbb{Z}_{B},\Theta\right) = 1 
&\Rightarrow \widehat{\operatorname{SW}}^p_p> \widehat{c}^{B}_{1-\alpha,N} \\
&\Rightarrow \widehat{\operatorname{SW}}^p_p > \widehat{\operatorname{SW}}^{p, r_{\bullet \lceil (B+1)(1-\alpha) \rceil}}_p \\
&\Rightarrow\sum_{b=1}^{B+1} \mathbf{1}\left( \widehat{\operatorname{SW}}^{p,\pi_b}_p < \widehat{SW}^{p,\pi_{B+1}}_p \right) \ge \lceil (B+1)(1-\alpha) \rceil \\
&\Rightarrow B+1 - \sum_{b=1}^{B+1} \mathbf{1}\left( \widehat{\operatorname{SW}}^{p,\pi_b}_p < \widehat{\operatorname{SW}}^{p,\pi_{B+1}}_p \right) \le B+1 - \lceil (B+1)(1-\alpha) \rceil \\
&\Rightarrow
\sum_{b=1}^{B+1} \mathbf{1}\left( \widehat{\operatorname{SW}}^{p,\pi_b}_p \ge \widehat{\operatorname{SW}}^{p,\pi_{B+1}}_p \right) \le \lfloor \alpha(B+1) \rfloor
\\
&\Rightarrow \sum_{b=1}^{B+1} \mathbf{1}\left( \widehat{\operatorname{SW}}^{p,\pi_b}_p \ge \widehat{SW}^{p,\pi_{B+1}}_p \right) \le \alpha(B+1) \\
&\Rightarrow \frac{1}{B+1} \left[1 + \sum_{b=1}^{B} \mathbf{1}\left( \widehat{\operatorname{SW}}^{p,\pi_b}_p \ge \widehat{\operatorname{SW}}^{p,\pi_{B+1}}_p \right) \right] \le \alpha,
\end{align*}
where the fifth deduction follows from the fact that
$B+1-\lceil (1-\alpha)(B+1) \rceil=\lfloor \alpha(B+1) \rfloor
$. Besides, the notations $\widehat{\operatorname{SW}}_p^p$
and $\widehat{\operatorname{SW}}_p^{p,\pi}$
refer respectively to the statistics defined in
\eqref{teststatistic} and~\eqref{eq:empericalpermu}.

Moreover, Lemma~1 in \cite{romano2005exact} guarantees that:
\begin{align*}
\mathbb{P}_{\mu \times \mu \times r\times \sigma} \left\{ \frac{1}{B+1} \left[ 1 + \sum_{b=1}^{B} \mathbf{1}\left( \widehat{\operatorname{SW}}^{p,\pi_b}_p \ge \widehat{\operatorname{SW}}^{p,\pi_{B+1}}_p \right) \right] \le \alpha \right\} \le \alpha.
\end{align*}
As a consequence, we have
\begin{align*}\mathbb{P}_{\mu \times \mu \times r\times \sigma}\left(\Delta\left(\mathcal{Y}_n,\mathcal{Z}_m,\mathbb{Z}_{B},\Theta\right)=1\right)\le \mathbb{P}_{\mu \times \mu \times r\times \sigma} \left\{ \frac{1}{B+1} \left[ 1 + \sum_{b=1}^{B} \mathbf{1}\left( \widehat{\operatorname{SW}}^{p,\pi_b}_p \ge \widehat{\operatorname{SW}}^{p,\pi_{B+1}}_p \right) \right] \le \alpha \right\} \le \alpha.
\end{align*}
Hence, the test controls the Type~I error at level $\alpha$.
\subsection{Proof of Theorem~\ref{maintheorem}}\label{sec:Proofofmaintheorem}

Let $r$ denote the uniform distribution over the symmetric group $S_{n+m}$, i.e., the set of all permutations of $\{1,\dots,n+m\}$. Let $\pi$ be a permutation drawn from $r$. Moreover, we defined $c_{1-\alpha,N}$ as the $(1-\alpha)$-quantile (with respect to the randomness of $\pi$) of the test statistics $\widehat{\operatorname{SW}}^{p,\pi}_p$ (introduced in Eq.~\eqref{eq:empericalpermu}), that is,
\begin{align}\label{recalquantile1}
c_{1-\alpha,N}=\inf\left\{t:\mathbb{P}_{\pi}\left(\widehat{\operatorname{SW}}^{p,\pi}_p\ge t\,\Big|\, \mathcal{Y}_n, \mathcal{Z}_m, \Theta\right)\le\alpha\right\}.
\end{align}

As explained in Remark~\ref{remark:MonteCarlopermutation}, in practice we draw $B$ independent permutations $(\pi_{b})_{1\le b \le B}$ from the uniform distribution $r$ on $S_{n+m}$ and set $\pi_{B+1} := \text{id}$ to denote the identity permutation. We then compute the corresponding statistics
$\widehat{\operatorname{SW}}^{p,\pi_b}_p$ for $b=1,\dots,B+1$ and estimate the $(1-\alpha)$ empiraical quantile of the permutation distribution by
\begin{align*}
\widehat{c}^B_{1-\alpha,N} := \inf \left\{ t : \frac{1}{B+1} \sum_{i=1}^{B+1} \mathbf{1}\{\widehat{\operatorname{SW}}^{p,\pi_i}_p \le t\} \ge 1-\alpha \right\}.
\end{align*}
In accordance with the convention introduced earlier, we use the notation $\mathbb{P}_{r}$ to denote the probability with respect to the $B$ random permutations $\pi_1, \dots, \pi_B \stackrel{\text{i.i.d.}}{\sim} r$.

With these preliminaries in place, the proof of Theorem~\ref{maintheorem} proceeds as follows. Taking inspiration from \citet[Lemma~4]{schrab2023mmd}, we first provide a condition on the separation between the distributions $\mu$ and $\nu$ that guarantees a desired level of test power (see Lemma~\ref{lem:type2}). This condition relates the separation to the (random) empirical quantile $\widehat{c}^B_{1-\alpha,N}$. Next, we control $\widehat{c}^B_{1-\alpha,N}$ by linking it to the (deterministic) quantile $c_{1-\alpha,N}$, conditional on the samples and the projection directions (see Lemma~\ref{lem:connection}). Then, the quantile $c_{1-\alpha,N}$ is bounded using the auxiliary results prepared in Section~\ref{Sec:OptimalTransportResult} (see Proposition~\ref{quantilebound}). Finally, we derive the bound for $\widehat{c}^B_{1-\alpha,N}$ stated in Proposition~\ref{boundempiricalquantile}. We now develop this program in detail.

\subsubsection*{Main ingredients for the proof}
We begin with the following lemma, which is a straightforward adaptation of \citet[Lemma~4]{schrab2023mmd}. It provides a sufficient condition on the separation between distributions $\mu$ and $\nu$ to ensure a desired level of test power.
\begin{lemma}\label{lem:type2}
Let $\beta \in (0,1)$ and $p\ge 1$. Consider the setting of Lemma~\ref{lem:concentration}. Let \begin{align*}
\gamma(n,p,\beta,D,L)
\coloneqq  (2D)^p\left(
\sqrt{\frac{\log(8/\beta)}{2L}}
+ \sqrt{\frac{\log(8/\beta)}{n}}
\right).    
\end{align*}
The test defined in Algorithm~\ref{algorithm} achieves power at least $1-\beta$ provided that
\begin{equation}\label{eq:type2-cond}
\mathbb{P}_{\mu\times \nu\times r \times \sigma}\left(
\operatorname{SW}^p_p(\mu,\nu)\ge
\gamma(n,p,\beta,D,L) + \widehat{c}^{B}_{1-\alpha,N}
\right) > 1-\tfrac{\beta}{2}.
\end{equation}
\end{lemma}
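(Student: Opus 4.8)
The plan is to control the Type~II error of the test $\Delta$ directly, by showing that under the stated separation condition, the event $\{\widehat{\operatorname{SW}}^p_p \le \widehat{c}^B_{1-\alpha,N}\}$ is unlikely. First I would write the Type~II error as $\mathbb{P}_{\mu\times\nu\times r\times\sigma}(\widehat{\operatorname{SW}}^p_p \le \widehat{c}^B_{1-\alpha,N})$ and introduce the deterministic target $\operatorname{SW}^p_p(\mu,\nu)$ as a reference point. The key observation is that on the intersection of the two ``good'' events --- namely $E_1 \coloneqq \{|\widehat{\operatorname{SW}}^p_p(\widehat\mu_n,\widehat\nu_m) - \operatorname{SW}^p_p(\mu,\nu)| < \gamma(n,p,\beta,D,L)\}$ (from Lemma~\ref{lem:concentration}) and $E_2 \coloneqq \{\operatorname{SW}^p_p(\mu,\nu) \ge \gamma(n,p,\beta,D,L) + \widehat{c}^B_{1-\alpha,N}\}$ (the hypothesis of the lemma) --- one has $\widehat{\operatorname{SW}}^p_p > \operatorname{SW}^p_p(\mu,\nu) - \gamma \ge \widehat{c}^B_{1-\alpha,N}$, so the test rejects. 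Hence $\{E_1 \cap E_2\} \subseteq \{\Delta = 1\}$, and the failure event is contained in $E_1^c \cup E_2^c$.

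The second step is a union bound: $\mathbb{P}(\Delta = 0) \le \mathbb{P}(E_1^c) + \mathbb{P}(E_2^c)$. By Lemma~\ref{lem:concentration}, $\mathbb{P}(E_1^c) \le \beta/2$; the lemma's hypothesis \eqref{eq:type2-cond} states precisely that $\mathbb{P}(E_2) > 1 - \beta/2$, i.e. $\mathbb{P}(E_2^c) < \beta/2$. Combining, $\mathbb{P}_{\mu\times\nu\times r\times\sigma}(\Delta = 0) < \beta$, which is the desired power guarantee. I would make sure the probability spaces match: Lemma~\ref{lem:concentration} is stated over $\mathbb{P}_{\mu,\nu,\sigma}$ (samples and projections), while $\widehat{c}^B_{1-\alpha,N}$ additionally depends on the permutation randomness $r$; since $E_1$ does not involve the permutations, its probability is unchanged under the enlarged product measure $\mu\times\nu\times r\times\sigma$, so the union bound goes through on the common space.

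The argument is essentially a standard ``concentration plus union bound'' decomposition, so there is no serious obstacle; the only point requiring minor care is the direction of the inequality in the definition of $\widehat{c}^B_{1-\alpha,N}$ and of the test (strict versus non-strict), and ensuring that the high-probability bound from Lemma~\ref{lem:concentration} is invoked with the matching constant $\gamma(n,p,\beta,D,L)$ and failure probability $\beta/2$. Since this lemma is described in the excerpt as ``a straightforward adaptation of \citet[Lemma~4]{schrab2023mmd},'' I expect the proof to be short, with the real work deferred to the subsequent steps (Lemma~\ref{lem:connection}, Proposition~\ref{quantilebound}, Proposition~\ref{boundempiricalquantile}) that actually bound $\widehat{c}^B_{1-\alpha,N}$ and thereby turn \eqref{eq:type2-cond} into the explicit separation rate of Theorem~\ref{maintheorem}.
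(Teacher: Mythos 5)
Your proposal is correct and follows essentially the same argument as the paper: identify the concentration event from Lemma~\ref{lem:concentration} and the separation event \eqref{eq:type2-cond}, observe that on their intersection the test must reject, and conclude via a union bound that the Type~II error is at most $\beta$. The paper packages the decomposition slightly differently (it bounds $\mathbb{P}(\mathcal{A}\cap\mathcal{B})\le\beta/2$ and then applies the law of total probability with $\mathbb{P}(\mathcal{B}^c)<\beta/2$, where $\mathcal{A}=\{\widehat{\operatorname{SW}}^p_p\le\widehat c^B_{1-\alpha,N}\}$ and $\mathcal{B}$ is your $E_2$), but this is the same union bound in disguise, and your remarks about the probability spaces and the strict/non-strict inequalities are exactly the minor points that the paper also handles implicitly.
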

\begin{proof}
Lemma~\ref{lem:concentration} guarantees
\begin{equation}\label{eq6}
\mathbb{P}_{\mu\times\nu\times r \times \sigma}\!\left(
\big|\widehat{\operatorname{SW}}^p_p(\widehat{\mu}_n,\widehat{\nu}_m)
   - \operatorname{SW}^p_p(\mu,\nu)\big|
   \;\ge\; \gamma(n,p,\beta,D,L)
\right) 
\;\le\; \frac{\beta}{2},
\end{equation}
where
\begin{align*}
\gamma(n,p,\beta,D,L)
\coloneqq  (2D)^p\left(
\sqrt{\frac{\log(8/\beta)}{2L}}
+ \sqrt{\frac{\log(8/\beta)}{n}}
\right).    
\end{align*}

Define the events 
\begin{align*}
\mathcal{A}\coloneqq\{\widehat{\operatorname{SW}}^p_p(\widehat{\mu}_n,\widehat{\nu}_m) \le \widehat{c}^B_{1-\alpha,N}\} \quad \text{ and }
\mathcal{B}\coloneqq\{\operatorname{SW}^p_p(\mu,\nu)\ge \gamma(n,p,\beta,D,L)+\widehat{c}^B_{1-\alpha,N}\}.
\end{align*}
Let us show that $\mathbb{P}_{\mu \times \nu \times r\times \sigma}\left(\mathcal{A}\right)\le \beta$. By definition of the events and Eq.~\eqref{eq6}, we have  
\begin{align*}
\mathbb{P}_{\mu \times \nu \times r \times \sigma}(\mathcal{A} \cap \mathcal{B}) &= \mathbb{P}_{\mu \times \nu \times r \times \sigma}\left(\widehat{\operatorname{SW}}^p_p(\widehat{\mu}_n,\widehat{\nu}_m)\le \widehat{c}^B_{1-\alpha,N},\widehat{c}^B_{1-\alpha,N}\le \operatorname{SW}^p_p(\mu,\nu)-\gamma(n,p,\beta,D,L)\right)\\
&\le \mathbb{P}_{\mu \times \nu \times r\times \sigma}\left(\widehat{\operatorname{SW}}^p_p(\widehat{\mu}_n,\widehat{\nu}_m)-\operatorname{SW}^p_p(\mu,\nu) \le -\gamma(n,p,\beta,D,L)\right)\\
& \le \mathbb{P}_{\mu \times \nu \times r\times \sigma}\left(\left|\widehat{\operatorname{SW}}^p_p(\widehat{\mu}_n,\widehat{\nu}_m)-\operatorname{SW}^p_p(\mu,\nu)\right| \ge  \gamma(n,p,\beta,D,L)\right)\\
& \le \frac{\beta}{2}.
\end{align*}
To conclude, note that whenever $\mathbb{P}_{\mu\times \nu \times r \times \sigma}(\mathcal{B})> 1-\frac{\beta}{2}$, the theorem of total probability implies 
\begin{align*}
\mathbb{P}_{\mu \times \nu \times r \times \sigma}(\mathcal{A})
&= \mathbb{P}_{\mu \times \nu \times r \times \sigma}(\mathcal{A} \cap \mathcal{B})
  + \mathbb{P}_{\mu \times \nu \times r \times \sigma}(\mathcal{A} \cap \mathcal{B}^c) \\
&= \mathbb{P}_{\mu \times \nu \times r \times \sigma}\!\left(\mathcal{A} \cap \mathcal{B}\right)
  + \mathbb{P}_{p \times q \times r \times \sigma}\!\left(\mathcal{A}\mid \mathcal{B}^c\right)
    \mathbb{P}_{\mu \times \nu \times r \times \sigma}\!\left(\mathcal{B}^c\right) \\
&\le \frac{\beta}{2} + \frac{\beta}{2} \cdot 1 \\
&= \beta.
\end{align*}
\end{proof}
The condition for achieving the desired test power, given in Lemma~\ref{lem:type2}, involves two main components.: the  term~$\gamma(n,p,\beta,D,L)$, which corresponds to the sample complexity bound (see Lemma~\ref{lem:concentration}), and the random empirical quantile threshold $\widehat{c}^{B}_{1-\alpha,N}$, which constitutes the main difficulty in the theoretical analysis of permutation-based testing. To address this issue, we first rely on \citet[Lemma~6]{pmlr-v206-domingo-enrich23a}, which establishes a connection between the (random) empirical quantile threshold $\widehat{c}^{B}_{1-\alpha,N}$ and the (deterministic) quantile $c_{1-\alpha,N}$ of the permuted test statistic, conditional on the samples and the projection directions.


\begin{lemma}[Relation between population and empirical quantiles]\label{lem:connection} Let $\alpha, \beta \in (0,1)$, and let $N \ge 1$. 
Denote by $B$ the number of sampled permutations as described in Algorithm~\ref{algorithm}.
Given the datasets $\mathcal{Y}_n = (y_1,\dots,y_n)$ and $\mathcal{Z}_m = (z_1,\dots,z_m)$, whose points lie in the ball centered at the origin with radius $D$, as well as a set of projection directions $\Theta = (\theta_1,\dots,\theta_L)$ on the unit sphere $\mathbb{S}^{d-1}$, we have
\begin{align*}
\mathbb{P}_{r}\!\left( 
    \widehat{c}^{B}_{1-\alpha,N} \leq c_{1-\alpha_1,N} 
    \;\middle|\; \mathcal{Y}_n, \mathcal{Z}_m, \Theta
\right) > 1 - \tfrac{\beta}{2},
\end{align*}
where $\alpha_1 \coloneqq  \left( \dfrac{\beta/2}{\binom{B}{\lfloor \alpha(B+1)\rfloor}} \right)^{1 / \lfloor \alpha(B+1) \rfloor}.$
\end{lemma}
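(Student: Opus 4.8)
The plan is to adapt the standard quantile-comparison argument of \citet[Lemma~6]{pmlr-v206-domingo-enrich23a}. Throughout I would condition on the data $(\mathcal{Y}_n,\mathcal{Z}_m,\Theta)$, so that the only randomness left is that of the $B$ i.i.d.\ permutations $\pi_1,\dots,\pi_B\sim r$; write $T_b\coloneqq\widehat{\operatorname{SW}}^{p,\pi_b}_p$ for $b\in\{1,\dots,B\}$ and $T_{B+1}\coloneqq\widehat{\operatorname{SW}}^{p,\pi_{B+1}}_p$ with $\pi_{B+1}=\text{id}$, this last value being deterministic given the data. Set $k_0\coloneqq\lfloor\alpha(B+1)\rfloor$ and observe $1\le k_0\le B$: the lower bound uses $\alpha(B+1)\ge1$ (the regime of interest, cf.\ the hypothesis $\alpha\ge\frac{1}{B+1}$ in Theorem~\ref{maintheorem}), the upper bound follows from $\alpha<1$; in particular $\alpha_1$ is well-defined, with $\alpha_1\in(0,1)$ since $\alpha_1^{k_0}=(\beta/2)/\binom{B}{k_0}<1$.

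First I would reduce the quantile comparison to a counting event. Since $\widehat{c}^{B}_{1-\alpha,N}$ equals the $k$-th smallest of $T_1,\dots,T_{B+1}$ with $k=\lceil(B+1)(1-\alpha)\rceil=(B+1)-k_0$, the event $\{\widehat{c}^{B}_{1-\alpha,N}>c_{1-\alpha_1,N}\}$ occurs exactly when at most $k-1$ of the $T_i$ lie at or below $c_{1-\alpha_1,N}$, equivalently when at least $(B+1)-(k-1)=k_0+1$ of them strictly exceed $c_{1-\alpha_1,N}$. As $T_{B+1}$ contributes at most one unit to that count, this forces $M\ge k_0$, where $M\coloneqq\#\{b\le B:T_b>c_{1-\alpha_1,N}\}$. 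Hence
\[
\mathbb{P}_{r}\!\left(\widehat{c}^{B}_{1-\alpha,N}>c_{1-\alpha_1,N}\mid\mathcal{Y}_n,\mathcal{Z}_m,\Theta\right)\le\mathbb{P}_{r}\!\left(M\ge k_0\mid\mathcal{Y}_n,\mathcal{Z}_m,\Theta\right).
\]

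Next I would bound the tail of $M$. Conditionally on the data, the variables $\mathbf{1}\{T_b>c_{1-\alpha_1,N}\}$, $b\le B$, are i.i.d.\ Bernoulli with parameter $q\coloneqq\mathbb{P}_{\pi}(\widehat{\operatorname{SW}}^{p,\pi}_p>c_{1-\alpha_1,N}\mid\mathcal{Y}_n,\mathcal{Z}_m,\Theta)$, so $M\sim\mathrm{Bin}(B,q)$. From the definition of $c_{1-\alpha_1,N}$ as the $(1-\alpha_1)$-permutation quantile (Eq.~\eqref{recalquantile1} with $\alpha_1$ in place of $\alpha$), together with the monotonicity and left-continuity of the survival function $t\mapsto\mathbb{P}_{\pi}(\widehat{\operatorname{SW}}^{p,\pi}_p\ge t)$, one gets $q\le\alpha_1$. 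A union bound over the $\binom{B}{k_0}$ index subsets of size $k_0$ then yields $\mathbb{P}_{r}(M\ge k_0\mid\cdot)\le\binom{B}{k_0}q^{k_0}\le\binom{B}{k_0}\alpha_1^{k_0}$, and substituting $\alpha_1=\big((\beta/2)/\binom{B}{k_0}\big)^{1/k_0}$ gives $\binom{B}{k_0}\alpha_1^{k_0}=\beta/2$. Combining with the previous display yields $\mathbb{P}_{r}(\widehat{c}^{B}_{1-\alpha,N}\le c_{1-\alpha_1,N}\mid\mathcal{Y}_n,\mathcal{Z}_m,\Theta)\ge1-\beta/2$; the strict inequality in the statement comes for free in all non-degenerate cases (the union bound is strict when $q\in(0,1)$ and $B>k_0$, and the degenerate cases $q=0$ or $B=k_0$ are checked directly).

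The argument is elementary; the only delicate point will be the combinatorial bookkeeping in the reduction step — correctly identifying $\widehat{c}^{B}_{1-\alpha,N}$ as the $((B+1)-k_0)$-th order statistic, translating $\{\widehat{c}^{B}_{1-\alpha,N}>c_{1-\alpha_1,N}\}$ into $\{M\ge k_0\}$ while accounting properly for the deterministic identity term $T_{B+1}$, and checking $1\le k_0\le B$ so that $\alpha_1$ is meaningful. The binomial union bound and the quantile inequality $q\le\alpha_1$ are routine.
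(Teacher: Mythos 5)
The paper does not prove this lemma itself: it invokes it directly from \citet[Lemma~6]{pmlr-v206-domingo-enrich23a}, so there is strictly speaking no ``paper's own proof'' to compare against. Your blind proof is nonetheless a correct, self-contained argument, and it clearly recovers the mechanism behind the cited result: the binomial coefficient $\binom{B}{k_0}$ and the $1/k_0$ exponent in $\alpha_1$ are exactly what falls out of a union bound over size-$k_0$ subsets applied to the $\mathrm{Bin}(B,q)$ tail, so your reconstruction is almost certainly the same route taken in the reference.

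The reduction from the quantile event to $\{M\ge k_0\}$ is careful and right (the bookkeeping of the deterministic $\pi_{B+1}=\mathrm{id}$ term, the identity $\lceil(B+1)(1-\alpha)\rceil=(B+1)-\lfloor\alpha(B+1)\rfloor$, and the bound $q\le\alpha_1$ from right-continuity of the permutation CDF are all correct), and you correctly flag that $k_0\ge1$ is needed for $\alpha_1$ to be well-defined, which is implicitly supplied by the $\alpha\ge\tfrac{1}{B+1}$ hypothesis appearing downstream. The one place where you are slightly optimistic is the closing claim that the strict inequality ``comes for free in all non-degenerate cases.'' When $B>k_0$ and $q>0$ the union bound over overlapping events is indeed strict, and when $q=0$ the probability is zero; but in the corner case $k_0=B$ (possible when $\alpha\ge B/(B+1)$) the union has a single term, so the bound reads $\mathbb{P}(M\ge B)=q^B\le\alpha_1^B=\beta/2$, and strictness would additionally require $q<\alpha_1$, which your argument does not establish. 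This is a genuine (if very minor) gap relative to the stated ``$>1-\beta/2$''; the substantive conclusion $\ge1-\beta/2$, which is all that is used in Proposition~\ref{boundempiricalquantile}, is fully justified by your argument.
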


Consequently, obtaining an upper bound for $c_{1-\alpha_1, N}$ immediately implies an upper bound for the empirical quantile threshold $\widehat{c}^{\,B}_{1-\alpha, N}$. The next proposition establishes such an upper bound for the population quantile.


\begin{proposition}[Upper bound for $c_{1-\alpha,N}$]\label{quantilebound}
Let $p \ge 1$ and $\alpha \in (0,1)$. Under the same assumptions as in Lemma~\ref{lem:connection}, and for brevity, we denote $N\coloneqq n+m$. Moreover, without loss of generality, we assume that $n \le m$. Then the $(1-\alpha)$-quantile of the permutation distribution, $c_{1-\alpha,N}$ introduced in \eqref{eq4}, satisfies
\begin{equation}\label{bound1}
c_{1-\alpha,N}
\;\le\; \sqrt{\frac{8(2D)^{2p}\log(1/\alpha)}{3n}} + \frac{(4D)^p\sqrt{2}}{\sqrt{n}}.
\end{equation}
\end{proposition}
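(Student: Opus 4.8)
The plan is to exhibit the explicit threshold
$t^\star \coloneqq \sqrt{\tfrac{8(2D)^{2p}\log(1/\alpha)}{3n}} + \tfrac{(4D)^p\sqrt{2}}{\sqrt{n}}$
and show that it already satisfies the defining property of the quantile, namely
$\mathbb{P}_{\pi}\big(\widehat{\operatorname{SW}}^{p,\pi}_p \ge t^\star \,\big|\, \mathcal{Y}_n,\mathcal{Z}_m,\Theta\big) \le \alpha$.
Since $c_{1-\alpha,N}$ is the infimum of all thresholds with this property, this gives $c_{1-\alpha,N} \le t^\star$, which is precisely \eqref{bound1}. All probabilities and expectations below are conditional on the fixed data $\mathcal{Y}_n,\mathcal{Z}_m$ and directions $\Theta$, so the resulting bound on $c_{1-\alpha,N}$ — itself a deterministic function of these — holds pointwise.

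First I would center the permuted statistic. Write $\overline{m}_\pi \coloneqq \mathbb{E}_{\pi}[\widehat{\operatorname{SW}}^{p,\pi}_p]$ and recall from Proposition~\ref{boundexpectation} that $\overline{m}_\pi \le (4D)^p\sqrt{2}/\sqrt{n}$. Put $\epsilon \coloneqq \sqrt{8(2D)^{2p}\log(1/\alpha)/(3n)}$, which is strictly positive because $\alpha \in (0,1)$. Then $t^\star - \overline{m}_\pi \ge \epsilon$, so the event $\{\widehat{\operatorname{SW}}^{p,\pi}_p \ge t^\star\}$ is contained in $\{\widehat{\operatorname{SW}}^{p,\pi}_p - \overline{m}_\pi \ge \epsilon\}$, and it suffices to bound the probability of the latter by $\alpha$ via Lemma~\ref{Sensitivity}.

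It remains to simplify the exponent in Lemma~\ref{Sensitivity} at this $\epsilon$. Using $n \le m$, the variance proxy satisfies $n\big(\tfrac{(2D)^p}{n}+\tfrac{(2D)^p}{m}\big)^2 \le n\cdot\big(\tfrac{2(2D)^p}{n}\big)^2 = \tfrac{4(2D)^{2p}}{n}$, so the leading factor $\tfrac{2\epsilon^2}{n(\cdots)^2}$ is at least $\tfrac{\epsilon^2 n}{2(2D)^{2p}}$. For the correction factor, $\max\{m,n\}=m$ with $1 \le n \le m$ gives $n+m-\tfrac12 \ge m+\tfrac12$, hence $\tfrac{n+m-1/2}{m}\big(1-\tfrac{1}{2m}\big) \ge \big(1+\tfrac{1}{2m}\big)\big(1-\tfrac{1}{2m}\big) = 1-\tfrac{1}{4m^2} \ge \tfrac34$. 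Multiplying, the exponent is at least $\tfrac{3}{8}\cdot\tfrac{\epsilon^2 n}{(2D)^{2p}} = \log(1/\alpha)$ by the choice of $\epsilon$, so Lemma~\ref{Sensitivity} yields a bound of $\exp(-\log(1/\alpha)) = \alpha$, completing the argument.

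I do not anticipate a genuine obstacle: the proof is a direct composition of the conditional expectation bound (Proposition~\ref{boundexpectation}) with the conditional permutation McDiarmid-type inequality (Lemma~\ref{Sensitivity}), followed by elementary algebra. The only place requiring care is pinning down the constant $3/8$ from the two reductions of the exponent, where the standing hypotheses $1 \le n \le m$ are exactly what is used; one should also note that showing $\mathbb{P}_\pi(\widehat{\operatorname{SW}}^{p,\pi}_p \ge t^\star) \le \alpha$ (rather than the weaker strict-inequality version) is harmless and in fact immediately upper-bounds $c_{1-\alpha,N}$ under the convention in \eqref{eq4}.
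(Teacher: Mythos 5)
Your proof is correct and follows essentially the same route as the paper: invoke Lemma~\ref{Sensitivity} with a well-chosen deviation $\epsilon$, bound the permutation expectation via Proposition~\ref{boundexpectation}, and simplify the exponent using $1\le n\le m$ to obtain the constant $3/8$. The only difference is cosmetic — the paper simplifies the exponent by substituting $x=m/n$ and invoking monotonicity of the rational function $\phi(x)=\tfrac{(2x-1)(x+1/2)}{(1+x)^2}$, whereas you reach the identical $3/8$ via the two elementary bounds $n b^2 \le 4(2D)^{2p}/n$ and $\tfrac{n+m-1/2}{m}(1-\tfrac{1}{2m})\ge 1-\tfrac{1}{4m^2}\ge\tfrac34$.
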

\begin{proof}
For readability, we denote 
\begin{equation*}
h(n,m)\coloneqq\frac{2\left(n+m-\frac{1}{2}\right)}{nm\left(\dfrac{(2D)^p}{n}+\dfrac{(2D)^p}{m}\right)^2}\cdot\left(1-\frac{1}{2\max\{m,n\}}\right).
\end{equation*}

Setting $\epsilon \coloneqq \sqrt{\ln\left(\dfrac{1}{\alpha}\right)\dfrac{1}{h(n,m)}}$, Lemma~\ref{Sensitivity} guarantees
\begin{align*}\mathbb{P}_{\pi}\left[\widehat{\operatorname{SW}}^{p,\pi}_p\ge \sqrt{\ln\left(\frac{1}{\alpha}\right)\frac{1}{h(n,m)}} + \mathbb{E}_{\pi}(\widehat{\operatorname{SW}}^{p,\pi}_p)\,\Big|\, \mathcal{Y}_n, \mathcal{Z}_m, \Theta\right]\le \alpha.
\end{align*}
It implies, by definition of $c_{1-\alpha,N}$ (recalled in Eq.~\eqref{recalquantile1}), that
\begin{align}\label{firstboundforquantile}
c_{1-\alpha,N}\le \sqrt{\ln\left(\frac{1}{\alpha}\right).\frac{1}{h(n,m)}}+\mathbb{E}_{\pi}(\widehat{\operatorname{SW}}^{p,\pi}_p).
\end{align}

We now derive a more explicit and tractable expression for $h(n, m)$. Since $1\le n \le m$, we have 
\begin{align*}
h(n,m) =\frac{n(2m-1)(n+m-\tfrac12)}{(2D)^{2p}(n+m)^2}.
\end{align*}
Setting $x\coloneqq \dfrac{m}{n}\ge 1$, we obtain
\begin{align*}
h(n,m) &= \frac{n\left(2x-\tfrac{1}{n}\right)\left(1+x-\tfrac{1}{2n}\right)}{(2D)^{2p}(1+x)^2} \\
& \ge \frac{n\left(2x-\tfrac{1}{n}\right)\left(1+x-\tfrac{1}{2n}\right)}{(2D)^{2p}(1+x)^2},
\end{align*}
where the last inequality follows from the facts that $2x - \dfrac{1}{n} \ge 2x - 1$ and $1 + x - \dfrac{1}{2n} \ge x + \dfrac{1}{2}$.

Moreover, since the function $\phi(x)=\dfrac{(2x-1)\left(x+\frac{1}{2}\right)}{\left(1+x\right)^2}$ is increasing for $x\ge 1$, it follows that
\begin{align}\label{boundforh}
h(n,m) \ge \frac{3n}{8(2D)^{2p}}.    
\end{align}
In Proposition~\ref{boundexpectation}, we have shown that, given the samples $\mathcal{Y}_n$, $\mathcal{Z}_m$ and the projection directions $\Theta=(\theta_1,\dots\theta_{L})$, the following bound holds:
\begin{align}\label{boundforexpectationterm}
\mathbb{E}_{\pi}\left[\widehat{\operatorname{SW}}^{p,\pi}_p\right]\le (4D)^{\,p} \frac{\sqrt{2}}{\sqrt{n}}.    
\end{align}
Combining inequalities~\eqref{firstboundforquantile},\eqref{boundforh}, and \eqref{boundforexpectationterm}, we obtain
\begin{equation*}
c_{1-\alpha,N}
\;\le\; \sqrt{\frac{8(2D)^{2p}\log(1/\alpha)}{3n}} + \frac{(4D)^p\sqrt{2}}{\sqrt{n}}.
\end{equation*}

\end{proof}
Having established an upper bound for $c_{1-\alpha,N}$, we now rely on Lemma~\ref{lem:connection} to derive the following proposition, which provides a bound for the empirical quantile $\widehat{c}^B_{1-\alpha,N}$.
\begin{proposition}[Bound on the empirical quantile]\label{boundempiricalquantile}
Let $\beta > 0$ and $\tfrac{1}{B+1} \le \alpha < 1$, where $B$ denotes the number of permutations as described in Algorithm~\ref{algorithm}. 
Set $\omega_{\alpha} \coloneqq \lfloor \alpha (B + 1) \rfloor$. 
Then, conditional on the samples $\mathcal{Y}_n$ and $\mathcal{Z}_m$ (with $n\le m$ and $N\coloneqq n+m)$ whose points lie in the ball centered at the origin with radius $D$, as well as on a set of projection directions $\Theta = (\theta_1, \dots, \theta_L)$ on the unit sphere $\mathbb{S}^{d-1}$, we have
\begin{equation*}
    \mathbb{P}_r\!\left(
        \widehat{c}^B_{1-\alpha,N}
        \;\le\;
\sqrt{\,\frac{8(2D)^{2p}}{3n}\,
   \log\!\left(
      \frac{2e}{\alpha \left(\tfrac{\beta}{2}\right)^{1/\omega_{\alpha}}}
   \right)} 
\;+\; \frac{(4D)^p}{\sqrt{n}}
    \right) \;>\; 1 - \frac{\beta}{2}.
\end{equation*}
\end{proposition}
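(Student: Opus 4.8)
The plan is to chain the two results already established in this section, working throughout conditionally on the samples $\mathcal{Y}_n,\mathcal{Z}_m$ and the projection directions $\Theta$, so that the only randomness left is that of the $B$ i.i.d.\ permutations drawn from $r$. One may assume $\beta<2$, since otherwise the claimed inequality is vacuous; this also guarantees $\beta/2<1\le\binom{B}{\omega_\alpha}$, hence $\alpha_1\in(0,1)$, which is needed to apply Proposition~\ref{quantilebound}.

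First I would invoke Lemma~\ref{lem:connection}: with $r$-probability at least $1-\beta/2$ one has $\widehat{c}^B_{1-\alpha,N}\le c_{1-\alpha_1,N}$, where $\omega_\alpha=\lfloor\alpha(B+1)\rfloor$ and $\alpha_1=\bigl(\tfrac{\beta/2}{\binom{B}{\omega_\alpha}}\bigr)^{1/\omega_\alpha}$. On this same event, Proposition~\ref{quantilebound} applied with $\alpha_1$ in place of $\alpha$ furnishes a deterministic upper bound on $c_{1-\alpha_1,N}$ whose leading term is $\sqrt{8(2D)^{2p}\log(1/\alpha_1)/(3n)}$ plus a term of order $(4D)^p/\sqrt n$. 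Since the event $\{\widehat{c}^B_{1-\alpha,N}\le(\text{that deterministic bound})\}$ contains the event supplied by Lemma~\ref{lem:connection}, it also has probability exceeding $1-\beta/2$; so it only remains to show that $\log(1/\alpha_1)\le\log\!\bigl(2e/(\alpha(\beta/2)^{1/\omega_\alpha})\bigr)$, which matches the quantity under the square root in the statement.

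Second, for that logarithmic term I would write $\log(1/\alpha_1)=\tfrac1{\omega_\alpha}\log\binom{B}{\omega_\alpha}+\tfrac1{\omega_\alpha}\log(2/\beta)$, bound the binomial coefficient by $\binom{B}{\omega_\alpha}\le(eB/\omega_\alpha)^{\omega_\alpha}$ so that the first term is at most $\log(eB/\omega_\alpha)$, and then use $\omega_\alpha=\lfloor\alpha(B+1)\rfloor\ge\tfrac12\alpha(B+1)\ge\tfrac12\alpha B$, giving $eB/\omega_\alpha\le 2e/\alpha$ and hence $\log(1/\alpha_1)\le\log(2e/\alpha)+\tfrac1{\omega_\alpha}\log(2/\beta)=\log\!\bigl(2e/(\alpha(\beta/2)^{1/\omega_\alpha})\bigr)$. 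Substituting this back into the bound from Proposition~\ref{quantilebound} finishes the proof. The one delicate point — and the place I expect the main (minor) obstacle — is the floor manipulation: the inequality $\lfloor x\rfloor\ge x/2$ requires $x\ge1$, which is precisely what the standing hypothesis $\alpha\ge 1/(B+1)$ provides (it makes $\alpha(B+1)\ge1$, so $\omega_\alpha\ge1$ and $\binom{B}{\omega_\alpha}$ is well defined, and it is also what forces $\omega_\alpha\le B$ via $\alpha<1$). Every other step is a routine substitution, so I do not anticipate further difficulty.
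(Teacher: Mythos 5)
Your proposal is correct and follows exactly the same two-step route as the paper: chain Lemma~\ref{lem:connection} with Proposition~\ref{quantilebound} applied at level $\alpha_1$, then simplify $\log(1/\alpha_1)$ via the binomial bound $\binom{B}{\omega_\alpha}\le(eB/\omega_\alpha)^{\omega_\alpha}$ and the floor inequality $\lfloor x\rfloor\ge x/2$ for $x\ge1$, the latter justified by $\alpha\ge 1/(B+1)$. One small caveat, which affects the paper's own proof equally: Proposition~\ref{quantilebound} yields $(4D)^p\sqrt{2}/\sqrt{n}$ as the second term, so your chain proves the bound with that extra $\sqrt{2}$, whereas the displayed statement omits it---an apparent typo in the proposition that neither you nor the paper's proof actually establishes.
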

\begin{proof}
Proposition~\ref{quantilebound} guarantees that
\begin{align}\label{bound1update}
c_{1-\alpha_1,N}
\;\le\; \sqrt{\frac{8(2D)^{2p}\log(1/\alpha_1)}{3n}} + \frac{(4D)^p\sqrt{2}}{\sqrt{n}},
\end{align}
where 
\begin{align*}
\alpha_1 = \left( \frac{\beta/2}{\binom{B}{\lfloor \alpha(B+1)\rfloor}} \right)^{1 / \lfloor \alpha(B+1) \rfloor}.
\end{align*}

To make this bound more explicit, we upper bound the logarithmic term. Let $\omega_{\alpha} \coloneqq  \lfloor \alpha(B+1) \rfloor \geq 1$. Then
\begin{equation}\label{bound2}
\log\!\left( \binom{B}{\omega_{\alpha}}^{1/\omega_{\alpha}} \right)
\le \log\!\left( \frac{eB}{\omega_{\alpha}} \right)
\le \log\!\left( \frac{2eB}{\alpha(B+1)} \right)
\le \log\!\left( \frac{2e}{\alpha} \right).
\end{equation}
The first inequality follows from the bound $\binom{n}{k} \le \left(\frac{en}{k}\right)^k,$ and the second from the fact that $\lfloor x \rfloor \ge \frac{x}{2}$ whenever $x \ge 1$.

Combining Eq.~\eqref{bound1update} and Eq.~\eqref{bound2} yields
\begin{align}\label{bound3}
c_{1-\alpha_1,N}
&\;\le\;
\sqrt{\,\frac{8(2D)^{2p}}{3n}\,
   \log\!\left(
      \frac{2e}{\alpha \left(\tfrac{\beta}{2}\right)^{1/\omega_{\alpha}}}
   \right)} 
\;+\; \frac{(4D)^p\sqrt{2}}{\sqrt{n}}.
\end{align}
Hence, from Lemma~\ref{lem:connection}, Eq.~\eqref{bound3}, and also conditioning on the samples $\mathcal{Y}_n$, $\mathcal{Z}_m$, and the projection directions $\theta_1,\dots,\theta_L$, we have
\begin{align*}
    \mathbb{P}_r\!\left(
        \widehat{c}^B_{1-\alpha,N}
        \;\le\;
\sqrt{\,\frac{8(2D)^{2p}}{3n}\,
   \log\!\left(
      \frac{2e}{\alpha \left(\tfrac{\beta}{2}\right)^{1/\omega_{\alpha}}}
   \right)} 
\;+\; \frac{(4D)^p\sqrt{2}}{\sqrt{n}}
    \right) \;>\; 1 - \frac{\beta}{2}.
\end{align*}
\end{proof}

Finally, Theorem~\ref{maintheorem} is proved by combining Lemma~\ref{lem:type2} and Proposition~\ref{boundempiricalquantile}, as outlined in the following.

\subsubsection*{Proof of Theorem~\ref{maintheorem}}
To ensure that the test has power at least $1 - \beta$, Lemma~\ref{lem:type2} shows it suffices to find conditions on $\mu$ and $\nu$ such that
\begin{equation}\label{result2}
\mathbb{P}_{\mu\times \nu\times r \times \sigma}\!\left[
\operatorname{SW}^p_p(\mu,\nu)\;\ge\; (2D)^p\left(
\sqrt{\frac{\log(8/\beta)}{2L}}
+ \sqrt{\frac{\log(8/\beta)}{n}}
\right) + \widehat{c}^{\,B}_{1-\alpha,N}
\right] \;>\; 1-\dfrac{\beta}{2}.
\end{equation}
From Proposition~\ref{boundempiricalquantile} and the condition provided in Eq.~\eqref{result2}, a sufficient condition for guaranteeing a power at least $1-\beta$ is given by 
\begin{align*}
\sqrt{\,\frac{8(2D)^{2p}}{3n}\,
   \log\!\left(
      \frac{2e}{\alpha \left(\tfrac{\beta}{2}\right)^{1/\omega_{\alpha}}}
   \right)} 
\;+\; \dfrac{(4D)^p\sqrt{2}}{\sqrt{n}}\le -(2D)^p\left(
\sqrt{\frac{\log(8/\beta)}{2L}}
+ \sqrt{\frac{\log(8/\beta)}{n}}
\right) + \operatorname{SW}_p^p(\mu,\nu)     
\end{align*}
or equivalently,
\begin{align*}
    \operatorname{SW}_p^p(\mu,\nu) \ge  \sqrt{\,\frac{8(2D)^{2p}}{3n}\,
   \log\!\left(
      \frac{2e}{\alpha \left(\tfrac{\beta}{2}\right)^{1/\omega_{\alpha}}}
   \right)} 
\;+\; \frac{(4D)^p\sqrt{2}}{\sqrt{n}}+(2D)^p\left(
\sqrt{\frac{\log(8/\beta)}{2L}}
+ \sqrt{\frac{\log(8/\beta)}{n}}
\right). 
\end{align*}
The above inequality implies the claimed bound on the test's power.

\section{LOWER BOUNDS}\label{sec:Lowerboundproof}

In this section, we present the proofs of the lower bounds stated in the main body, namely Proposition~\ref{minimaxmultinomial} and Proposition~\ref{lowerboundseperation}. We begin by explaining, in Section~\ref{sec:goodnesstesting}, why the problem of obtaining lower bounds for two-sample testing can be reduced to that of one-sample testing—i.e., goodness-of-fit testing. Next, we present the proofs of the two propositions.

\subsection{Reduction to goodness-of-fit testing}\label{sec:goodnesstesting}
Following \citet[Lemma~1]{arias2018remember}, we recast the task of establishing lower bounds for the two-sample testing problem as the conceptually simpler task of establishing lower bounds for the one-sample testing problem (also called goodness-of-fit). Intuitively, goodness-of-fit can be viewed as a special case of two-sample testing in which one of the distributions is fully known---which is equivalent to having access to an infinite number of samples from it.

Fix a known reference distribution $\nu_0$
and observe independent samples
$Y_1,\dots,Y_n \sim \nu$, where $\nu$ is an unknown distribution. We consider the following goodness-of-fit testing problem:
\begin{align*}
\mathcal{H}_0:\ \nu=\nu_0
\qquad\text{vs.}\qquad
\mathcal{H}_1:\ \operatorname{SW}_p^p(\nu,\nu_0)\ \ge \epsilon_n .\end{align*}

Similarly to two-sample testing (see Section~\ref{sec:minimaxoptimality}), we define the minimax separation for one-sample testing at level $\alpha$ and power $1-\beta$ as
\begin{equation*}
\epsilon_n^\dagger \coloneqq  
\inf\Biggl\{\epsilon_n > 0 :
\inf_{\Delta_n \in \Phi_{\alpha,n,\nu_0}}
\sup_{\nu \in \mathcal{P}_1(\epsilon_n,\nu_0)}
\mathbb{E}_{\nu}[1-\Delta_n] \le \beta\Biggr\},
\end{equation*}
where
\begin{itemize}
  \item $\Phi_{\alpha,n,\nu_0}$ is the set of tests $\Delta_n$, based on $n$ samples, of level at most $\alpha$, that is, $\mathbb{E}_{\nu_0}[\Delta_n]\le \alpha$;
  \item $\mathcal{P}_1(\epsilon_n,\nu_0)$ is the set of distributions $\nu$ such that $\operatorname{SW}^p_p(\nu,\nu_0) \;\ge\; \epsilon_n .$
\end{itemize}
Because the two-sample setting involves estimating both distributions from data—unlike the one-sample case, where one distribution is known exactly—the minimax separation for two-sample testing cannot be smaller than that for one-sample testing. Consequently,
\begin{align*}
\epsilon_{n,m}^\dagger \ \ge\ \epsilon_n^\dagger,
\end{align*}
where $\epsilon_n^\dagger$ and $\epsilon_{n,m}^\dagger$ denote the minimax separations for the one-sample and two-sample problems, respectively. In particular, a lower bound on the minimax separation in the one-sample setting directly implies a lower bound in the two-sample setting. 

In what follows, we prove that $\epsilon_n^\dagger \gtrsim n^{-1/2}$ which immediately yields
$\epsilon_{n,m}^\dagger \gtrsim n^{-1/2}$. 

\subsection{Proof of Proposition~\ref{minimaxmultinomial}}\label{sec:proofminimaxmultinomial}

Let $[d]\coloneqq\{1,\dots,d\}$, and denote by $\mathcal{P}^{(d)}_{\mathrm{Multi}}$ the class of multinomial distributions on $[d]$. In this section, we provide the proof of Proposition~\ref{minimaxmultinomial}, which establishes a lower bound on the minimax separation rate for the two-sample testing problem over $\mathcal{P}^{(d)}_{\mathrm{Multi}} \times \mathcal{P}^{(d)}_{\mathrm{Multi}}$.

The proof of Proposition~\ref{minimaxmultinomial} is based on Ingster’s method \citep{ingster1993asymptotically}, a classical approach for deriving minimax lower bounds in hypothesis testing. We first provide a brief overview of this technique.

\paragraph{Ingster’s method for minimax lower bounds.} Let $\nu_0$ denote a fixed null distribution. The minimax Type~II error at level $\alpha$ (for one-sample testing problem) is defined as
\begin{align*}
R^{\dagger}_{n,\alpha}
\coloneqq 
\inf_{\phi\in\Phi_{\alpha ,n,\nu_0}}
\;\sup_{\nu \in\mathcal{P}_1(\epsilon_n,\nu_0)}\,
\mathbb{P}_\nu\bigl(\phi=0\bigr),
\end{align*}
where $\Phi_{\alpha,n,\nu_0}$ and $\mathcal{P}_1(\epsilon_n,\nu_0)$ are defined in Section~\ref{sec:goodnesstesting}.

Select $T \geq 1$ distributions $\nu_1, \ldots, \nu_T\in\mathcal{P}_1(\epsilon_n, \nu_0)$, and define a mixture distribution $Q$ such that, for any measurable set $A$,
\begin{align*}
Q(A)\coloneqq \frac{1}{T}\sum_{t=1}^T \nu_t^{\otimes n}(A),   
\end{align*}
where the $n$-fold product distribution of a distribution $\nu_t$ is denoted as $\nu_t^{\otimes n}$.

Given $n$ i.i.d.\ observations $Y_1,\dots,Y_n$ drawn from an unknown distribution $\nu$, we denote the likelihood ratio between $Q$ and the null distribution $\nu_0$ by
\begin{align*}
L_n(Y_1, \dots, Y_n) \coloneqq  \frac{\mathrm{d}Q}{\mathrm{d}\nu_0^{\otimes n}} (Y_1, \dots, Y_n) 
\;=\;\frac{1}{T}\sum_{t=1}^T
\prod_{i=1}^n \frac{\nu_t(Y_i)}{\nu_0(Y_i)},
\end{align*}
where, with a slight abuse of notation, we denote by $\nu_0$ and $(\nu_t)_{t=1}^T$ both the probability measures and their densities with respect to the Lebesgue measure. 

Given these definitions, the following lemma provides a condition on the likelihood ratio between the mixture distribution $Q$ and the null distribution $\nu_0$ to obtain a lower bound for the minimax Type~II error $R^{\dagger}_{n,\alpha}$.
\begin{lemma}[Lower bound via a mixture]\label{lem:ingster}
Fix $\alpha\in(0,1)$ and $\beta\in(0,1-\alpha)$. If
\begin{align*}
\mathbb{E}_{\nu_0}\!\left[L_n^2\right]
\;\le\; 1+4(1-\alpha-\beta)^2,
\end{align*}
then $R^{\dagger}_{n,\alpha}\;\ge\;\beta$.
\end{lemma}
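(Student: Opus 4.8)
\textbf{Proof proposal for Lemma~\ref{lem:ingster}.}

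The plan is to follow the classical Ingster--Le Cam reduction to testing a simple null against a simple (mixture) alternative. First I would recall that, since $Q$ is a mixture of alternatives all lying in $\mathcal{P}_1(\epsilon_n,\nu_0)$, for any level-$\alpha$ test $\phi$ we have $\sup_{\nu\in\mathcal{P}_1(\epsilon_n,\nu_0)}\mathbb{P}_\nu(\phi=0)\ge \mathbb{E}_Q[1-\phi]$, where $\mathbb{E}_Q$ denotes expectation under the mixture. Hence it suffices to lower bound $\inf_{\phi\in\Phi_{\alpha,n,\nu_0}}\mathbb{E}_Q[1-\phi]$. By the standard two-point bound, for any test with $\mathbb{E}_{\nu_0}[\phi]\le\alpha$ one has $\mathbb{E}_{\nu_0}[\phi]+\mathbb{E}_Q[1-\phi]\ge 1-d_{\mathrm{TV}}(\nu_0^{\otimes n},Q)$, so $\mathbb{E}_Q[1-\phi]\ge 1-\alpha-d_{\mathrm{TV}}(\nu_0^{\otimes n},Q)$. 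Therefore $R^\dagger_{n,\alpha}\ge 1-\alpha-d_{\mathrm{TV}}(\nu_0^{\otimes n},Q)$, and it remains to show the TV distance is at most $1-\alpha-\beta$, equivalently $d_{\mathrm{TV}}(\nu_0^{\otimes n},Q)\le 1-\alpha-\beta$.

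Next I would control the TV distance through the $\chi^2$ divergence (or equivalently the second moment of the likelihood ratio). Writing $L_n=\mathrm{d}Q/\mathrm{d}\nu_0^{\otimes n}$, we have the chain
\[
d_{\mathrm{TV}}(\nu_0^{\otimes n},Q)=\tfrac12\,\mathbb{E}_{\nu_0}\big[|L_n-1|\big]\le \tfrac12\sqrt{\mathbb{E}_{\nu_0}\big[(L_n-1)^2\big]}=\tfrac12\sqrt{\mathbb{E}_{\nu_0}[L_n^2]-1},
\]
where the inequality is Cauchy--Schwarz (Jensen) and the last equality uses $\mathbb{E}_{\nu_0}[L_n]=1$. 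Combining this with the hypothesis $\mathbb{E}_{\nu_0}[L_n^2]\le 1+4(1-\alpha-\beta)^2$ gives $d_{\mathrm{TV}}(\nu_0^{\otimes n},Q)\le \tfrac12\sqrt{4(1-\alpha-\beta)^2}=1-\alpha-\beta$ (using that $1-\alpha-\beta>0$). Plugging this back, $R^\dagger_{n,\alpha}\ge 1-\alpha-(1-\alpha-\beta)=\beta$, which is the claim.

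I do not expect a serious obstacle here, as this is the textbook Ingster argument; the only points requiring a little care are (i) justifying that one may restrict to the mixture alternative $Q$ — this is immediate since the supremum over $\mathcal{P}_1$ dominates any average over the chosen $\nu_1,\dots,\nu_T$ — and (ii) keeping track of the sign condition $1-\alpha-\beta>0$ (guaranteed by $\beta\in(0,1-\alpha)$) so that taking the square root of $4(1-\alpha-\beta)^2$ yields $2(1-\alpha-\beta)$ rather than its absolute value being an issue. One might alternatively phrase the second step via $d_{\mathrm{TV}}^2\le \tfrac14\chi^2(Q\,\|\,\nu_0^{\otimes n})$ with $\chi^2(Q\|\nu_0^{\otimes n})=\mathbb{E}_{\nu_0}[L_n^2]-1$; this is exactly the same computation. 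The real work in applying the lemma — choosing the perturbations $\nu_t$ and bounding $\mathbb{E}_{\nu_0}[L_n^2]$ in the multinomial model — is deferred to the remainder of the proof of Proposition~\ref{minimaxmultinomial}, not to this lemma itself.
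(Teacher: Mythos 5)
Your proof is correct and follows the standard Ingster/Le Cam argument (reduce to a simple-vs-mixture test, bound the risk via total variation, and control TV through the $\chi^2$-divergence/second moment of the likelihood ratio). The paper does not include its own proof of this lemma but cites \citet[Section~11]{wasserman_minimax_theory} and \citet[Appendix~H]{kim2022minimax}, which give essentially the same argument you wrote out.
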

The proof of this lemma can be found in \citet[Section~11]{wasserman_minimax_theory} and in \citet[Appendix~H]{kim2022minimax}.

To apply Lemma~\ref{lem:ingster}, we need to (i) specify the distributions $Q$ and $\nu_0$, (ii) compute the expectation of the squared likelihood ratio under the null. We borrow the construction of perturbed distributions around the uniform measure on $[d]$ from \citet[Appendix~H]{kim2022minimax}, and adapt the corresponding computations to the sliced Wasserstein distance setting.

Let $\nu_0$ denote the uniform distribution over the set $[d]\coloneqq \{1, \dots, d\}$, that is, $\nu_0(k) = \tfrac{1}{d}$ for $k=1,\dots,d$. 

Define the set
\begin{equation} 
\mathcal{M}_d \coloneqq \{\eta \in\{-1, 1\}^d:\displaystyle \sum_{k=1}^{d} \eta_k = 0\}. 
\end{equation} 
We may assume without loss of generality that $d$ is even, as the proof for the odd-dimensional case follows analogously by setting the last coordinate to zero $\eta_d = 0$. 
Given $\eta \in \mathcal{M}_d$, we define the measure $\nu_{\eta}$ as 
\begin{align*}
    \nu_{\eta}(k)\coloneqq \nu(k) + \frac{2\epsilon_{n}}{d}\eta_{k}, \quad \text{ for } k=1,\dots,d,
\end{align*}
for some $\epsilon_n \leq \tfrac{1}{2}$ that we will specify later in the proof.

Note that, for any $\eta \in \mathcal{M}_d$, the distribution $\nu_{\eta}$ is a valid probability distribution on the set $[d]$. Indeed,
\begin{enumerate} 
\item $\nu_{\eta}$ is non negative.
\item $\displaystyle \sum_{k=1}^{d} \nu_\eta(k)= \sum_{k=1}^{d}\left(\nu(k) + \frac{2\epsilon_{n}}{d}\eta_{k}\right)=\sum_{k=1}^d\nu(k)+\frac{2\epsilon_n}{d}\sum_{k=1}^d\eta_{k} = 1.$
\end{enumerate} 

Let us now evaluate the sliced Wasserstein distance between $\nu_0$ and its perturbation $\nu_\eta$.
By construction, exactly $d/2$ coordinates $k$ satisfy $\eta_k = 1$, and the remaining $d/2$ coordinates satisfy $\eta_k = -1$. For coordinates with $\eta_k = 1$, the weight in $\nu_{\eta}$ is increased by $\frac{2\epsilon_n}{d}$ compared to $\nu_0$, while for coordinates with $\eta_k = -1$, the weight is decreased by the same amount. Hence, the total excess mass 
between the $d/2$ heavier points and the $d/2$ lighter points is 
\begin{align*}
\frac{d}{2} \cdot \frac{2\epsilon_n}{d} = \epsilon_n. \end{align*}
Since $\nu_0$ is uniform over $[d]$ and the distance between any two distinct points in $[d]$ is at least $1$, moving any unit of excess mass incurs a cost of at least $1^2 = 1$. Therefore, the total transportation cost is at least $\epsilon_n$.
Since both $\nu_0$ and $\nu_{\eta}$ are one-dimensional distributions, 
the sliced Wasserstein distance coincides with the standard Wasserstein distance and 
\begin{align*}
    \operatorname{SW}^p_p(\nu_0, \nu_\eta) = \operatorname{W}_p^p(\nu_0, \nu_\eta) \ge \epsilon_n.
\end{align*}

Let $T$ be the cardinality of $\mathcal{M}_d$.
In order to apply Ingster's method, we introduce the uniform mixture $\nu_{\eta(1)},\dots,\nu_{\eta(T)}$,
\begin{align*}
Q\coloneqq \frac{1}{T}\sum_{\eta \in \mathcal{M}_d}^{T}\nu_{\eta}.
\end{align*} 
The likelihood ratio between $Q$ and the null distribution $\nu_0$ will be 
\begin{align*}
L_{n}(Y_1, \dots, Y_n)\coloneqq \frac{1}{T}\sum_{\eta \in \mathcal{M}_d}\prod_{i=1}^{n}\frac{\nu_{\eta}(Y_i)}{\nu_0(Y_i)}.
\end{align*}

With those ingredients on hand, we are now ready to compute the expected value of the squared likelihood ratio. The subsequent computation follows exactly the same steps as in \citet[Appendix~H]{kim2022minimax}. We present it here for completeness.
\begin{align*} L_{n}^2(Y_1, \dots, Y_n) 
&= \frac{1}{T^2} \sum_{\eta, \eta' \in \mathcal{M}_d} \prod_{i=1}^{n} \frac{\nu_{\eta}(Y_i) \, \nu_{\eta'}(Y_i)}{\nu_0(Y_i)^2} \\[0.3em] 
&= \frac{1}{T^2} \sum_{\eta, \eta' \in \mathcal{M}_d} \prod_{i=1}^{n} \frac{(\frac{1}{d} + \frac{2\epsilon_n}{d} \eta_{Y_i}) (\frac{1}{d} + \frac{2\epsilon_n}{d}  \eta'_{Y_i})} {1/d^2} \\[0.3em] 
&= \frac{1}{T^2} \sum_{\eta, \eta' \in \mathcal{M}_d} \prod_{i=1}^{n} (1 + 2\epsilon_n \eta_{Y_i})(1 + 2\epsilon_n \eta'_{Y_i}). 
\end{align*}
By definition of $\eta \in \mathcal{M}_d$, taking the expectation over $Y_1,\dots,Y_n \sim \nu_0$, we obtain
\begin{align*} \mathbb{E}_{\nu_0}\!\left[ L_{n}^2 \right] &= \frac{1}{T^2} \sum_{\eta, \eta' \in \mathcal{M}_d} \left( 1 + \frac{4\epsilon_n^2}{d} \sum_{k=1}^d \eta_{k} \eta'_{k} \right)^{n}.
\end{align*} 
Moreover, using the inequality $1+x \le e^{x}$, which holds for any $x \in \mathbb{R}$,
\begin{align*} \mathbb{E}_{\nu_0}\!\left[ L_{n}^2 \right] \le \frac{1}{T^2} \sum_{\eta, \eta' \in \mathcal{M}_d} \exp\!\left( \frac{4 n\epsilon_n^2}{d} \sum_{k=1}^d \eta_{k} \eta'_{k} \right).
\end{align*} 
Let $\eta$ and $\eta^*$ be independent random variables uniformly distributed on $\mathcal{M}_d$. We have 
\begin{align*} 
\frac{1}{T^2} \sum_{\eta, \eta' \in \mathcal{M}_d} \exp\!\left( \frac{4n\epsilon_n^2}{d} \sum_{k=1}^d \eta_{k} \eta'_{k} \right) &= \mathbb{E}_{\eta, \eta^*} \!\left[ \exp\!\left( \frac{4n\epsilon_n^2}{d} \langle \eta, \eta^* \rangle \right) \right]. 
\end{align*} 
Applying Lemma~2 in~\cite{dubhashi1996balls}, we obtain \begin{align*} \mathbb{E}_{\nu_0}\!\left[ L_{n}^2 \right] &\le \mathbb{E}_{\eta, \eta^*} \!\left[ \exp\!\left( \frac{4n\epsilon_n^2}{d} \langle \eta, \eta^* \rangle \right) \right] \\ &\le \prod_{k=1}^d \mathbb{E}_{\eta_k, \eta_k^*} \!\left[ \exp\!\left( \frac{4n\epsilon_n^2}{d} \eta_k \eta_k^* \right) \right] \\ &= \prod_{k=1}^d \cosh\!\left( \frac{4n\epsilon_n^2}{d} \right) \\ &\le \exp\!\left( \frac{d}{2} \left( \frac{4n\epsilon_n^2}{d} \right)^2 \right) = \exp\!\left( \frac{8n^2\epsilon_n^4}{d} \right), \end{align*} where the last inequality follows from $\cosh(x) \le e^{x^2/2}$, which holds for any $x \in \mathbb{R}$.

The proof is concluded by observing that Lemma~\ref{lem:ingster} guarantees that the minimax Type~II error is lower bounded by $\beta$ provided that
\begin{align*}
    \epsilon_n \le \frac{1}{\sqrt{n}}\sqrt[4]{\dfrac{d\log\left[1+4(1-\alpha-\beta)^2\right]}{8}}.
\end{align*}

\subsection{Proof of Proposition~\ref{lowerboundseperation}}
Proposition~\ref{minimaxmultinomial} states a lower bound on the minimax separation rate for the two-sample testing problem over the class of multinomial distributions. Turning to the more general class of distributions with bounded support, denoted by $\mathcal{P}_{\mathbb{R}^d}(D)$ for some $D > 0$, the proof of Proposition~\ref{lowerboundseperation} establishes the corresponding lower bound on the minimax separation rate for the two-sample testing problem over this class.

We begin by presenting some preliminary ingredients required for the proof of Proposition~\ref{lowerboundseperation}.
\subsubsection*{Main ingredients for the proof}
We begin this section by recalling a basic fact: the uniform distribution on the sphere is invariant under orthogonal transformations, as stated below.
\begin{lemma}[Rotation invariance]\label{rotationinvariance}
If $U$ is uniform on $\mathbb{S}^{d-1}$ and $Q$ is an orthogonal matrix, then $QU$ has the same distribution as $U$.
\end{lemma}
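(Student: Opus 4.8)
The plan is to use the Gaussian representation of the uniform distribution on the sphere, which reduces the claim to the well-known rotation invariance of the standard multivariate Gaussian. First I would recall that if $G \sim \mathcal{N}(0, I_d)$, then $U := G / \|G\|_2$ is uniformly distributed on $\mathbb{S}^{d-1}$: the density of $G$ depends on $G$ only through $\|G\|_2$, so conditionally on $\|G\|_2 = r$ the direction $G/r$ is uniform on $\mathbb{S}^{d-1}$, and this conditional law is the same for every $r > 0$, hence equals the unconditional law of $U$.

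Next, given an orthogonal matrix $Q$, I would identify the law of $QG$. Since $Q Q^\top = I_d$, the linear map $g \mapsto Qg$ preserves the Euclidean norm and has $|\det Q| = 1$, so $QG \sim \mathcal{N}(0,\, Q I_d Q^\top) = \mathcal{N}(0, I_d)$; that is, $QG \stackrel{(d)}{=} G$. Using $\|QG\|_2 = \|G\|_2$, we may write $QU = QG / \|G\|_2 = QG / \|QG\|_2$, which is precisely the normalization of a standard Gaussian vector in $\mathbb{R}^d$ and is therefore uniform on $\mathbb{S}^{d-1}$. This gives $QU \stackrel{(d)}{=} U$.

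An equally valid and even more direct route is to invoke the definition of $\sigma$ as the normalized $(d-1)$-dimensional surface measure on $\mathbb{S}^{d-1}$: orthogonal maps are isometries of $\mathbb{R}^d$ that map $\mathbb{S}^{d-1}$ onto itself, hence preserve this surface measure, and normalization preserves the equality, so $\sigma(Q^{-1}A) = \sigma(A)$ for every Borel set $A \subseteq \mathbb{S}^{d-1}$, which is exactly the assertion $QU \stackrel{(d)}{=} U$. I do not anticipate any genuine obstacle here; the statement is classical and the only real choice is which characterization of $\sigma$ to use, and the Gaussian one is cleanest since it avoids any bookkeeping with Hausdorff measures. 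I would present the Gaussian argument as the main proof and mention the surface-measure argument as a remark.
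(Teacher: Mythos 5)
Your proposal is correct, and you in fact outline both of the standard arguments, but the one you designate as your \emph{main} proof (the Gaussian representation $U = G/\|G\|_2$ with $G\sim\mathcal N(0,I_d)$, together with $QG\stackrel{(d)}{=}G$) is \emph{not} the route the paper takes; the paper uses precisely what you relegate to a remark, namely that orthogonal maps preserve the $(d-1)$-dimensional surface measure on $\mathbb S^{d-1}$, so $\mathbb P(QU\in A)=\sigma(Q^{-1}A)/\sigma(\mathbb S^{d-1})=\sigma(A)/\sigma(\mathbb S^{d-1})=\mathbb P(U\in A)$ for all Borel $A$. The two arguments buy slightly different things: the surface-measure argument is self-contained and works directly at the level of the measure $\sigma$, avoiding any appeal to the Gaussian characterization of the uniform distribution on the sphere; the Gaussian argument is arguably cleaner if one has already established $G/\|G\|_2\sim\operatorname{Unif}(\mathbb S^{d-1})$ (which the paper does later, in the proof of the companion Lemma~\ref{firstcoordinate}, citing \citet{vershynin2018high}), and it sidesteps any need to verify that orthogonal maps preserve Hausdorff measure. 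Either is acceptable; the difference is purely one of which fact you are willing to take as primitive.
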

\begin{proof}
Let $\sigma$ denote the $(d-1)$-dimensional surface area measure on $\mathbb{S}^{d-1}$. Orthogonal maps preserve surface area on the sphere, that is, $\sigma(QA)=\sigma(A)$ for every Borel set $A \subset \mathbb{S}^{d-1}$. Hence, for any Borel set $A$, we have
$$
\mathbb{P}\left(QU \in A\right) = \mathbb{P}(U \in Q^{-1}A)=\frac{\sigma(Q^{-1}A)}{\sigma(\mathbb{S}^{d-1})}=\frac{\sigma{A}}{\sigma(\mathbb{S}^{d-1})}=\mathbb{P}\left(U \in A\right).
$$
Therefore $QU$ and $U$ have the same law.
\end{proof}
We now identify the law of a single coordinate under the uniform distribution on the sphere.

\begin{lemma}[First coordinate of a spherical uniform distribution]\label{firstcoordinate}
Let $U=(U_1,\dots,U_d)\sim\operatorname{Unif}(\mathbb S^{d-1})$.
\begin{enumerate}
\item For $d=1$, $U_1^2\equiv 1$ a.s. (a degenerate distribution at $1$).
\item For $d\ge 2$, $U_1^2 \sim \operatorname{Beta}\!\left(\dfrac12,\dfrac{d-1}{2}\right)$.
\end{enumerate}
\end{lemma}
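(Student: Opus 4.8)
The plan is to dispatch the degenerate case $d=1$ by inspection, and to handle $d\ge 2$ through the Gaussian representation of the uniform law on the sphere, which reduces the claim to the classical Beta--Gamma identity.

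For $d=1$ one has $\mathbb S^{0}=\{-1,+1\}$, so $U_1\in\{-1,+1\}$ and hence $U_1^2=1$ almost surely; this is exactly the asserted degenerate law at $1$.

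For $d\ge 2$ I would first record the Gaussian construction of the spherical uniform measure. If $G=(G_1,\dots,G_d)$ has i.i.d.\ $\mathcal N(0,1)$ coordinates, then for any orthogonal $Q$ the vector $QG$ has the same density as $G$ (its Lebesgue density depends on $x$ only through $\|x\|$ and $|\det Q|=1$), so $G/\|G\|$ is invariant under orthogonal transformations; since the uniform measure $\sigma$ is the unique rotation-invariant probability measure on $\mathbb S^{d-1}$ (cf.\ Lemma~\ref{rotationinvariance}), it follows that $G/\|G\|\stackrel{(d)}{=}U$. Consequently,
\[
U_1^2\ \stackrel{(d)}{=}\ \frac{G_1^2}{G_1^2+G_2^2+\dots+G_d^2}\ =\ \frac{S_1}{S_1+S_2},
\]
where $S_1\coloneqq G_1^2$ and $S_2\coloneqq G_2^2+\dots+G_d^2$ are independent with $S_1\sim\chi^2_1=\operatorname{Gamma}(\tfrac12,\tfrac12)$ and $S_2\sim\chi^2_{d-1}=\operatorname{Gamma}(\tfrac{d-1}{2},\tfrac12)$. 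The Beta--Gamma relationship --- if $S_1\sim\operatorname{Gamma}(a,\lambda)$ and $S_2\sim\operatorname{Gamma}(b,\lambda)$ are independent, then $S_1/(S_1+S_2)\sim\operatorname{Beta}(a,b)$ --- then yields $U_1^2\sim\operatorname{Beta}(\tfrac12,\tfrac{d-1}{2})$, which is the claim.

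An alternative, self-contained route avoids the Gaussian trick: parametrising $\mathbb S^{d-1}$ as $(u,\sqrt{1-u^2}\,\omega)$ with $u\in[-1,1]$ and $\omega\in\mathbb S^{d-2}$, the surface measure disintegrates so that the marginal density of $U_1$ is proportional to $(1-u^2)^{(d-3)/2}$ on $[-1,1]$; the substitution $t=u^2$ converts this into the density of $\operatorname{Beta}(\tfrac12,\tfrac{d-1}{2})$, the normalising constant being recovered through the Beta function $B(\tfrac12,\tfrac{d-1}{2})$. I do not expect a real obstacle here: the only points requiring care are the justification of the Gaussian representation (equivalently, uniqueness of the rotation-invariant probability measure on $\mathbb S^{d-1}$) and the Beta--Gamma identity, both of which are classical and may simply be cited; in the alternative route the delicate step is the Jacobian bookkeeping in the spherical slicing and the matching of normalising constants.
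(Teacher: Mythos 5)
Your main argument is essentially the same as the paper's: both use the Gaussian representation $U \stackrel{(d)}{=} G/\|G\|$ to write $U_1^2 = G_1^2/(G_1^2+\cdots+G_d^2)$ as a ratio of independent chi-squared variables, then invoke the Beta--Gamma identity (the paper cites it as the chi-squared form $M/(M+N)\sim\operatorname{Beta}(m/2,n/2)$). Your remark on the alternative route via direct disintegration of the surface measure is a valid classical alternative, but the paper does not take it; the only cosmetic difference in the main argument is the Gamma parametrisation (you use shape--rate, the paper uses shape--scale), which is immaterial.
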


\begin{proof}
\paragraph{Case $d=1$.} It is clear that $\mathbb S^0=\{\pm1\}$ and $U_1=\pm1$ with probability $1/2$ each. Therefore, $U_1^2\equiv 1$ almost surely.
\paragraph{Case $d\geq 2$.} Let $Z=(Z_1,\dots,Z_d)$ be standard normal in $\mathbb R^d$ (i.e., $Z\sim\mathcal N(0,I_d)$). By Equation~(3.15) in \cite{vershynin2018high}, $U\coloneqq Z/\|Z\|\sim\operatorname{Unif}(\mathbb S^{d-1})$.
Since $Z_1\sim\mathcal N(0,1)$ and the $Z_i$'s are independent,
$$
X\coloneqq  Z_1^2\sim\chi^2_1=\Gamma\!\left(\dfrac12;2\right),
\qquad
Y\coloneqq \sum_{i=2}^d Z_i^2\sim\chi^2_{d-1}=\Gamma\!\left(\dfrac{d-1}{2};2\right),
$$
with $X\perp Y$. 

Finally, using the classical fact that if $M \sim \chi^2(m)$ and $N \sim \chi^2(n)$ are independent, then
$$
\dfrac{M}{M+N} \sim \operatorname{Beta}\!\left(\dfrac{m}{2}, \dfrac{n}{2}\right),
$$
(see, e.g., \citet[Section~3.9]{StatProofBook2024}), we conclude that
$$
U_1^2 \sim \operatorname{Beta}\!\left(\dfrac{1}{2}, \dfrac{d-1}{2}\right).
$$
As a consequence, for $p \geq 1$, and following the result of \citet[Page~145]{balakrishnan2004primer}, we have
\begin{align*}
\mathbb E\,|U_1|^p \;=\;
\begin{cases}
\dfrac{\Gamma\!\left(\tfrac{p+1}{2}\right)\Gamma\!\left(\tfrac d2\right)}
{\sqrt{\pi}\,\Gamma\!\left(\tfrac{p+d}{2}\right)}, & d\ge 2,\\[1.25em]
1, & d=1,
\end{cases}
\end{align*}
where $\Gamma(\cdot)$ denotes the classical Gamma function.
\end{proof}
We recall two classical inequalities that are particularly useful for bounding Gamma functions: Wendel’s inequality and Stirling’s inequalities.
\begin{lemma}[Wendel's inequality]\label{Wendelinequality}
We denote by $\Gamma(\cdot)$ the classical Gamma function. For $0\le a<1$ and $x>0$,  
    $$\left(\frac{x}{x+a}\right)^{1-a} \le \frac{\Gamma(x+a)}{x^{a}\Gamma(x)} \le 1.
    $$
\end{lemma}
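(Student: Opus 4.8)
The plan is to derive both inequalities from a single structural property of the Gamma function: the logarithmic convexity of $\Gamma$ on $(0,\infty)$, that is, $t\mapsto\log\Gamma(t)$ is convex. This is a standard fact, obtainable from the Bohr--Mollerup characterization or directly from the integral representation $\Gamma(t)=\int_0^\infty s^{t-1}e^{-s}\,ds$ via Hölder's inequality. Together with the functional equation $\Gamma(t+1)=t\,\Gamma(t)$, log-convexity is enough to sandwich $\Gamma(x+a)/(x^{a}\Gamma(x))$ between the two stated bounds. The case $a=0$ is trivial, since all three quantities equal $1$, so I would assume $a\in(0,1)$ throughout.

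For the upper bound, I would write the interpolation point $x+a$ as the convex combination $x+a=(1-a)\,x+a\,(x+1)$ and apply log-convexity:
\[
\log\Gamma(x+a)\ \le\ (1-a)\log\Gamma(x)+a\log\Gamma(x+1)\ =\ \log\Gamma(x)+a\log x,
\]
where the last equality uses $\Gamma(x+1)=x\Gamma(x)$. Exponentiating and dividing by $x^{a}\Gamma(x)$ yields $\Gamma(x+a)/(x^{a}\Gamma(x))\le 1$.

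For the lower bound, the key is to choose the convex decomposition in the opposite direction: write $x+1=a\,(x+a)+(1-a)\,(x+a+1)$ (both weights nonnegative and summing to $1$, both nodes positive). Log-convexity then gives
\[
\log\Gamma(x+1)\ \le\ a\log\Gamma(x+a)+(1-a)\log\Gamma(x+a+1).
\]
Substituting $\Gamma(x+1)=x\Gamma(x)$ and $\Gamma(x+a+1)=(x+a)\Gamma(x+a)$, the right-hand side becomes $\log\Gamma(x+a)+(1-a)\log(x+a)$, so rearranging gives $\Gamma(x+a)\ge x\,\Gamma(x)\,(x+a)^{-(1-a)}$. Dividing by $x^{a}\Gamma(x)$ produces $\big(x/(x+a)\big)^{1-a}$, as claimed.

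I do not anticipate a genuine obstacle here: the argument is elementary once log-convexity is granted. The only point requiring a little care is selecting the two correct convex combinations — one expressing $x+a$ between $x$ and $x+1$, the other expressing $x+1$ between $x+a$ and $x+a+1$ — and bookkeeping the functional-equation substitutions so that the elementary terms $\log x$ and $\log(x+a)$ end up on the intended side. (As an alternative to the log-convexity derivation, one could simply cite Wendel's original note, but the self-contained argument above is short enough to include.)
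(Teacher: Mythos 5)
Your proof is correct. The paper itself does not reproduce a proof but simply cites \citet{wendel1948note}, and your argument — sandwiching $\Gamma(x+a)/(x^{a}\Gamma(x))$ via log-convexity of $\Gamma$ applied to the two convex decompositions $x+a=(1-a)x+a(x+1)$ and $x+1=a(x+a)+(1-a)(x+a+1)$, together with the recursion $\Gamma(t+1)=t\Gamma(t)$ — is precisely the classical Wendel argument, so there is nothing to add.
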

\begin{lemma}[Stirling's inequality]\label{Stirlinginequality}
We denote by $\Gamma(\cdot)$ the classical Gamma function. For $x>0$,  
    $$
    \sqrt{2\pi}\,x^{\,x-\tfrac{1}{2}} e^{-x} 
    \;\le\; \Gamma(x) 
    \;\le\; \sqrt{2\pi}\,x^{\,x-\tfrac{1}{2}} e^{-x} e^{\tfrac{1}{12x}}.
    $$
\end{lemma}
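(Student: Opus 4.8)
The plan is to control the remainder function
\[
\mu(x) \coloneqq \log\Gamma(x) - \left(x - \tfrac12\right)\log x + x - \tfrac12\log(2\pi),
\]
and to show that $0 \le \mu(x) \le \frac{1}{12x}$ for every $x>0$; exponentiating this two-sided bound is exactly the claimed inequality. This is a classical estimate, so another acceptable option is simply to cite a standard reference on the Gamma function, but below I outline the self-contained route.

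First I would use the functional equation $\Gamma(x+1)=x\,\Gamma(x)$ to compute the one-step decrement
\[
\mu(x) - \mu(x+1) = \left(x+\tfrac12\right)\log\!\left(1+\tfrac1x\right) - 1 .
\]
Substituting $t = \frac{1}{2x+1}\in(0,1)$, so that $1+\frac1x = \frac{1+t}{1-t}$ and $x+\tfrac12 = \frac{1}{2t}$, and expanding $\log\frac{1+t}{1-t} = 2\sum_{k\ge 0}\frac{t^{2k+1}}{2k+1}$, this decrement equals $\sum_{k\ge 1}\frac{t^{2k}}{2k+1}$. In particular it is strictly positive, so $\mu$ is strictly decreasing, and since $\frac{1}{2k+1}\le\frac13$ for $k\ge1$ it is bounded above by the geometric tail $\frac{t^2/3}{1-t^2} = \frac{1}{3\big((2x+1)^2-1\big)} = \frac{1}{12x(x+1)} = \frac{1}{12}\big(\tfrac1x - \tfrac1{x+1}\big)$.

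Next I would establish $\lim_{x\to\infty}\mu(x)=0$. Granting this, telescoping gives $\mu(x)=\sum_{k\ge 0}\big(\mu(x+k)-\mu(x+k+1)\big)$, and the per-term lower and upper bounds just derived yield at once $\mu(x)>0$ and $\mu(x)\le\frac{1}{12}\sum_{k\ge 0}\big(\tfrac1{x+k}-\tfrac1{x+k+1}\big)=\frac{1}{12x}$, which is the desired conclusion. The limit $\mu(x)\to 0$ is the only genuinely non-routine ingredient: the telescoping argument by itself only shows that $\log\Gamma(x)-(x-\tfrac12)\log x + x$ converges to \emph{some} constant $c$, and one must verify $c=\tfrac12\log(2\pi)$. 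I would do this via Wallis's product — express $\Gamma(n+1)/\Gamma(n+\tfrac12)$ through $\mu$ up to the unknown $c$, compare with the classical limit $\big(\tfrac{(2n)!!}{(2n-1)!!}\big)^2\tfrac1{2n}\to\tfrac{\pi}{2}$, and solve for $c$ — or, alternatively, apply the Legendre duplication formula $\Gamma(x)\Gamma(x+\tfrac12)=2^{1-2x}\sqrt{\pi}\,\Gamma(2x)$, which forces $2\mu(x)+\mu(x+\tfrac12)-\mu(2x)$ to be the constant identifying $c$.

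The main obstacle is thus pinning down the constant $\sqrt{2\pi}$ (equivalently, proving $\mu(x)\to 0$); the rest is elementary real analysis: the functional equation, one power-series expansion, a geometric-series estimate, and a telescoping sum. One could bypass the constant entirely by invoking Binet's first integral formula for $\log\Gamma$, from which $0<\mu(x)<\frac{1}{12x}$ drops out directly, but the route above keeps the argument self-contained.
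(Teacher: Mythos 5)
The paper does not prove this lemma; it simply cites \citet[Theorem~1]{jameson2015simple} and \citet{wendel1948note}. Your sketch reproduces the classical telescoping argument that is in fact the content of Jameson's note, so the approach matches the cited source, and the core computations are correct: the decrement identity $\mu(x)-\mu(x+1)=(x+\tfrac12)\log(1+\tfrac1x)-1$, the substitution $t=1/(2x+1)$ giving $\sum_{k\ge 1}t^{2k}/(2k+1)$, and the geometric bound $\tfrac{1}{12}(\tfrac1x-\tfrac1{x+1})$ all check out, and telescoping then yields $0<\mu(x)\le\tfrac{1}{12x}$ once $\mu(x)\to 0$ is known. You also correctly flag the identification of the constant $\sqrt{2\pi}$ as the only non-elementary step and name two valid ways to close it. One small slip: from $\Gamma(x)\Gamma(x+\tfrac12)=2^{1-2x}\sqrt{\pi}\,\Gamma(2x)$ the relevant combination is $\mu(x)+\mu(x+\tfrac12)-\mu(2x)$, not $2\mu(x)+\mu(x+\tfrac12)-\mu(2x)$; after simplification this equals $\tfrac12\log(2\pi)-c+\tfrac12-x\log(1+\tfrac1{2x})$ up to the shift by the normalizing constant, and letting $x\to\infty$ pins down $c=\tfrac12\log(2\pi)$. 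With that correction the argument is sound and self-contained, whereas the paper opts to outsource it entirely to the reference.
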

Proofs of Lemmas~\ref{Wendelinequality} and~\ref{Stirlinginequality} can be found in \citet{wendel1948note} and \citet[Theorem~1]{jameson2015simple}, respectively.

We then close these preliminaries with a simple yet essential fact from linear algebra.
\begin{lemma}\label{existrotation}
For any unit vector $u \in \mathbb{R}^d$ with $\|u\|=1$, there exists a rotation $Q \in SO(d)$ such that $Q e_1 = u$, where $e_1 = (1,0,\dots,0)^\top$ and $SO(d) = \{Q \in \mathbb{R}^{d \times d} : Q^\top Q = I,\ \det Q = 1\}$.
\end{lemma}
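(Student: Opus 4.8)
The plan is to build $Q$ by hand, completing $u$ to a positively oriented orthonormal basis of $\mathbb{R}^d$. First I would invoke the standard fact that any orthonormal set extends to an orthonormal basis: applying Gram--Schmidt to the spanning family $\{u,e_1,\dots,e_d\}$ (starting from the unit vector $u$ and discarding any vector that becomes zero) produces orthonormal vectors $v_2,\dots,v_d$, each orthogonal to $u$, so that $\{u,v_2,\dots,v_d\}$ is an orthonormal basis of $\mathbb{R}^d$. Arranging these as the columns of a matrix $\tilde Q := [\,u \mid v_2 \mid \cdots \mid v_d\,]$ gives $\tilde Q^\top \tilde Q = I$, hence $\tilde Q$ is orthogonal, and $\tilde Q e_1 = u$ by construction.

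It then remains only to fix the orientation. Since $\tilde Q$ is orthogonal, $\det \tilde Q \in \{+1,-1\}$. If $\det \tilde Q = +1$, take $Q := \tilde Q$. If $\det \tilde Q = -1$, replace the last column $v_d$ by $-v_d$; the new matrix $Q$ is still orthogonal, still has first column $u$ (so $Q e_1 = u$), and now has $\det Q = +1$, i.e.\ $Q \in SO(d)$. This column flip is the only nontrivial ingredient, and it requires $d \ge 2$; for $d = 1$ the statement holds trivially when $u = e_1$ (take $Q = 1$), while the case $u = -1$ --- the single exception, since $SO(1) = \{1\}$ --- does not arise in the application, where the lemma is invoked in dimension $d \ge 2$.

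There is essentially no obstacle here: the argument is elementary linear algebra, and the only point needing care is the bookkeeping of the determinant, handled by the single-column sign change above. An alternative would be the Householder construction $H = I - 2\,w w^\top$ with $w \propto u - e_1$ (valid when $u \ne e_1$, and $H = I$ otherwise), which satisfies $H e_1 = u$ but has $\det H = -1$, so one would additionally compose with a reflection fixing $u$ to land in $SO(d)$; I would nonetheless prefer the basis-completion route, since there the orientation correction is completely transparent.
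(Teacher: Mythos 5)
Your proof is correct and essentially identical to the paper's: both complete $u$ to an orthonormal basis via Gram--Schmidt, form the orthogonal matrix with $u$ as first column, and flip the sign of one later column if the determinant is $-1$. Your observation that the statement as written actually fails for $d=1$, $u=-1$ (since $SO(1)=\{1\}$) is a valid caveat that the paper silently omits, though as you note it is irrelevant to the application where $d\ge 2$.
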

\begin{proof}
Extend $\{u\}$ to a basis of $\mathbb{R}^d$ and apply the Gram--Schmidt procedure to obtain an orthonormal basis $\{u, v_2, \dots, v_d\}$.
Let
$$
Q \coloneqq  [\,u \mid v_2 \mid \cdots \mid v_d\,] \in \mathbb{R}^{d \times d}.
$$
Then the columns of $Q$ are orthonormal and $Q^\top Q = I$. In particular, $Q$ is orthogonal and satisfies $Q e_1 = u$.

If $\det(Q)=1$, then $Q \in SO(d)$ and we are done. If $\det(Q)=-1$, replace one column $v_j$ with $-v_j$ for some $j \ge 2$ (e.g., $j=2$). The resulting matrix
$$
\widetilde{Q} = [\,u \mid -v_2 \mid v_3 \mid \cdots \mid v_d\,]
$$
remains orthogonal, still satisfies $\widetilde{Q} e_1 = u$, and has $\det(\widetilde{Q}) = -\det(Q) = 1$.
Therefore, there exists $Q \in SO(d)$ such that $Q e_1 = u$.
\end{proof}
With these preparatory results in place, we are now ready to prove Proposition~\ref{lowerboundseperation}.
\subsubsection{Formal proof}
As discussed in Section~\ref{sec:goodnesstesting}, once a lower bound for the minimax rate  $\epsilon_{n}^\dagger$ for goodness-of-fit testing is established, a lower bound  $\epsilon_{n,m}^\dagger$ for two-sample testing follows immediately. For the general case of distributions with bounded support, we apply Le Cam two-point method \citep{lecam1973convergence, le2012asymptotic}, a classical techniques for deriving minimax lower bounds.

Let $\nu_0$ be a reference distribution. Let $\epsilon_n > 0$ and let $\mu_0$ be an alternative distribution in $\mathcal{P}_1(\epsilon_n,\nu_0)$. Using the same notation as for the one-sample testing problem (see Section~\ref{sec:goodnesstesting}), we obtain
\begin{align*}
\inf_{\Delta_n \in \Phi_{\alpha,n,\nu_0}} \sup_{\mu \in \mathcal{P}_1(\epsilon_n,\nu_0)} 
   \mathbb{E}_{\mu}[1-\Delta_n] 
&\ge \inf_{\Delta_n \in \Phi_{\alpha,n,\nu_0}} \mathbb{E}_{\mu_0}[1-\Delta_n] \\
&= 1 - \sup_{\Delta_n \in \Phi_{\alpha,n,\nu_0}} \mathbb{E}_{\mu_0}[\Delta_n] \\
&= 1 - \sup_{\Delta_n \in \Phi_{\alpha,n,\nu_0}}
   \left\{ \mathbb{E}_{\mu_0}[\Delta_n] - \mathbb{E}_{\nu_0}[\Delta_n] 
          + \mathbb{E}_{\nu_0}[\Delta_n]\right\} \\
&\ge 1 - \alpha - \sup_{\Delta_n \in \Phi_{\alpha,n,\nu_0}}
   \left\{ \mathbb{E}_{\mu_0}[\Delta_n]-\mathbb{E}_{\nu_0}[\Delta_n]\right\} \\
&\ge 1 - \alpha - d_{\mathrm{TV}}\!\left(\mu_0^{\otimes n}, \nu_0^{\otimes n}\right) \\
&\overset{(i)}{\geq} 1 - \alpha - 1 + \tfrac{1}{2} 
   e^{-D_{\mathrm{KL}}\!\left(\mu_0^{\otimes n} \,\|\, \nu_0^{\otimes n}\right)} \\
&\overset{(ii)}{=} \tfrac{1}{2} e^{-n \, D_{\mathrm{KL}}(\mu_0 \,\|\, \nu_0)} - \alpha,
\end{align*}
where $(i)$ follows from the Bretagnolle–Huber inequality introduced in \citet[Lemma~3]{canonne2022short}, and  
$(ii)$ uses the chain rule for the KL divergence. We recall that $\mu_0^{\otimes n}$ and $\nu_0^{\otimes n}$ are the $n$-fold product distributions of $\mu_0$ and $\nu_0$, respectively.

Consequently, the minimax Type~II error is at least $\beta$, that is,
\begin{align*}
\inf_{\Delta_n \in \Phi_{\alpha,n,\nu_0}} \sup_{\mu \in \mathcal{P}_1(\epsilon_n,\nu_0)} 
   \mathbb{E}_{\mu}[1-\Delta_n] \ge \beta,
\end{align*}
provided that
\begin{align*}
\alpha + \beta < 0.5
\quad \text{and} \quad
D_{\mathrm{KL}}(\mu_0 \,\|\, \nu_0) \;\le\; \frac{1}{n} \log\!\left(\frac{1}{2(\alpha+\beta)}\right).
\end{align*}
In order to obtain a lower bound, we need to choose $\mu_0,\nu_0\in \mathcal{P}_{\mathbb{R}^d}(D)$ such that
\begin{equation}\label{requirement2}
D_{\mathrm{KL}}(\mu_0 \,\|\, \nu_0) \;\le\; \frac{1}{n} \log\!\left(\frac{1}{2(\alpha+\beta)}\right).
\end{equation}
and
\begin{equation}\label{requirement1}
\operatorname{SW}^p_p(\mu_0,\nu_0) \ge \epsilon_n. 
\end{equation}

Inspired by the works of \citet[Section~E.10.1]{kim2023differentially} and adapting their work to the case of the sliced Wasserstein distance, we choose
$$
\mu_0 = p_0 \delta_x + (1-p_0)\delta_v,
\quad 
\nu_0 = q_0 \delta_x + (1-q_0)\delta_v,
$$
where $x,v \in \mathbb{R}^d$ with $\|x-v\|_2 = D$, $0 \le p_0,q_0 \le  1$, and $\delta_x$ denotes the Dirac measure at $x$. Furthermore, we also choose  
\begin{align*}
p_{0} = \dfrac{1}{2} + \min \left\{ \sqrt{\dfrac{1}{4n} 
    \log \!\left(\dfrac{1}{2(\alpha+\beta)}\right)}, \; \dfrac{1}{2} \right\},
\quad
q_{0} = \dfrac{1}{2}.
\end{align*}
With those parameters, we have
\begin{align*}
D_{\mathrm{KL}}(\mu_0 \,\|\, \nu_0)  \le  \frac{(p_0-q_0)^2}{q_0(1-q_0)}=\min\left\{\frac{1}{n}\log\left(\frac{1}{2(\alpha+\beta)}\right),1\right\} \le \frac{1}{n}\log\left(\frac{1}{2(\alpha+\beta)}\right),
\end{align*}
where the first inequality follows directly from a classical inequality between the KL and $\chi^2$ divergences (see, e.g., \citet[Lemma~2.7]{tsybakov2008introduction}).

Moreover, we have
\begin{align*}
 \operatorname{SW}^p_p(\mu_0,\nu_0)
&= \int_{\mathbb{S}^{d-1}}
\operatorname{W}_p^p\!\Big( \Pi^\theta_{\#}\mu_0,\,
\Pi^\theta_{\#}\nu_0 \Big)\,
\sigma(d\theta)   \\
&\overset{(i)}{=} \int_{\mathbb{S}^{d-1}} \left|p_0-q_0\right| \left|\langle x-v,\theta\rangle\right|^p\sigma(d\theta)\\
&\overset{(ii)}{=} \left|p_0 - q_0\right|\|x - v\|^{p}\int_{\mathbb{S}^{d-1}}\left|\langle u,\theta\rangle\right|^p \sigma(d\theta),
\end{align*}
where (i) follows from the computation of the Wasserstein distance between two Dirac measures in one dimension (see, e.g., \citet[Example~1.4]{chewi2024statistical}), and (ii) from setting 
$u = \frac{x - v}{\|x - v\|}.$

We can further develop this expression as 
\begin{align*}
 \operatorname{SW}^p_p(\mu_0,\nu_0)
& = \left|p_0 - q_0\right|\|x - v\|^{p} \mathbb{E}_{\theta \sim \operatorname{Unif}(\mathbb{S}^{d-1})}\left[\left|\langle u,\theta\rangle\right|^p\right]\\
& \overset{(iii)}{=} \left|p_0 - q_0\right|\|x - v\|^{p} \mathbb{E}_{\theta \sim \operatorname{Unif}(\mathbb{S}^{d-1})}\left[\left|\langle Qe_1,\theta\rangle\right|^p\right]\\
& = \left|p_0 - q_0\right|\|x - v\|^{p} \mathbb{E}_{\theta \sim \operatorname{Unif}(\mathbb{S}^{d-1})}\left[\left|\langle e_1,Q^\top\theta\rangle\right|^p\right]\\
&\overset{(iv)}{=} \left|p_0 - q_0\right|\|x - v\|^{p} \mathbb{E}_{\theta \sim \operatorname{Unif}(\mathbb{S}^{d-1})}\left[\left|\langle e_1,\theta\rangle\right|^p\right]\\
& = \left|p_0 - q_0\right|\|x - v\|^{p} \mathbb{E}_{\theta \sim \operatorname{Unif}(\mathbb{S}^{d-1})}\left[\left|\theta_1\right|^p\right]\\
&\overset{(v)}{=} |p_0 - q_0|\,\|x - v\|^{p}
\begin{cases}
\dfrac{\Gamma\!\left(\tfrac{p+1}{2}\right)\Gamma\!\left(\tfrac d2\right)}
{\sqrt{\pi}\,\Gamma\!\left(\tfrac{p+d}{2}\right)}, & d\ge 2,\\[1.25em]
1, & d=1,
\end{cases}
\end{align*}
where ($iii$) follows from Lemma~\ref{existrotation}, ($iv$) from Lemma~\ref{rotationinvariance}, and ($v$) from Lemma~\ref{firstcoordinate}.

We now derive lower bound for the term 
\begin{align*}
c_{d,p}\coloneqq \dfrac{\Gamma\!\left(\tfrac{p+1}{2}\right)\Gamma\!\left(\tfrac d2\right)}
{\sqrt{\pi}\,\Gamma\!\left(\tfrac{p+d}{2}\right)}.
\end{align*}

The term $\Gamma\left(\tfrac{p+1}{2}\right)$ can be controled by applying Stirling's lower bound (recalled in Lemma~\ref{Stirlinginequality}). In particular, we have
$$
\Gamma\!\left(\dfrac{p+1}{2}\right)
\;\ge\; \sqrt{2\pi}\,\Big(\dfrac{p+1}{2}\Big)^{\tfrac{p}{2}} e^{-\tfrac{(p+1)}{2}}.
$$

We next control the remaining term.

Write $\tfrac{p}{2}=m+r$ with $m=\lfloor p/2 \rfloor \in \mathbb{N}$ and $r\in[0,1)$. We then can decompose the remaining terms as follow:
$$
\frac{\Gamma(\tfrac d2)}{\Gamma(\tfrac d2+m+r)}
= \frac{\Gamma(\tfrac d2)}{\Gamma(\tfrac d2+m)} \cdot 
  \frac{\Gamma(\tfrac d2+m)}{\Gamma(\tfrac d2+m+r)}.
$$
By the recursion formula of the gamma function, $\Gamma(p+1)=p\Gamma(p)$ for $p\ge 0$, we obtain
\begin{align*}
\frac{\Gamma(\tfrac d2)}{\Gamma(\tfrac d2+m)}
\;\ge\; (\tfrac d2+m)^{-m}.
\end{align*}
For the second factor, since $0\le r < 1$, Wendel’s inequality (recalled in Lemma~\ref{Wendelinequality}) yields
$$
\frac{\Gamma(\tfrac d2+m)}{\Gamma(\tfrac d2+m+r)} \;\ge\; 
\big(\tfrac d2+m+r\big)^{-r}.
$$
Hence,
$$
\frac{\Gamma(\tfrac d2)}{\Gamma(\tfrac{d+p}{2})}=\frac{\Gamma(\tfrac d2)}{\Gamma(\tfrac{d}{2}+m+r)}
\;\ge\; (\tfrac d2+m)^{-m}\,(\tfrac d2+m+r)^{-r}
\;\ge\; \Big(\tfrac{d+p}{2}\Big)^{-\tfrac{p}{2}}.
$$
Combining the above estimates gives
$$
c_{d,p} \;\ge\; \dfrac{\sqrt{2\pi}}{\sqrt{\pi}}\,
\Big(\dfrac{p+1}{2}\Big)^{\tfrac{p}{2}} e^{-\tfrac{(p+1)}{2}}
\cdot \Big(\dfrac{d+p}{2}\Big)^{-\tfrac{p}{2}}=\sqrt{\dfrac{2}{e}}\left[\dfrac{p+1}{e(d+p)}\right]^{\tfrac{p}{2}}.
$$
As a result, we have
\begin{align*}
\operatorname{SW}^p_p(\mu_0,\nu_0)\ge \frac{D^p}{\sqrt{2e}}\left[\frac{p+1}{e(d+p)}\right]^{\tfrac{p}{2}} \min \left\{ \sqrt{\dfrac{1}{n}\log\!\left(\dfrac{1}{2(\alpha+\beta)}\right)}, \; 1 \right\}.
\end{align*}
Setting $\epsilon_n\coloneqq  \frac{D^p}{\sqrt{2e}}\left[\frac{p+1}{e(d+p)}\right]^{\tfrac{p}{2}} \min \left\{ \sqrt{\dfrac{1}{n}\log\!\left(\dfrac{1}{2(\alpha+\beta)}\right)}, \; 1 \right\}$, conditions \eqref{requirement2} and \eqref{requirement1} hold, and thus the minimax separation satisfies
\begin{align*}
\epsilon_{n}^\dagger 
\;\ge\;  \epsilon_n = 
\frac{D^p}{\sqrt{2e}}
\left[ \frac{p+1}{e(d+p)} \right]^{\tfrac{p}{2}}
\min \left\{ 
  \sqrt{\dfrac{1}{n}\log\!\Bigl(\dfrac{1}{2(\alpha+\beta)}\Bigr)}, \; 1 
\right\}.
\end{align*}
The above bound implies the bound stated in Proposition~\ref{lowerboundseperation}.
\section{PERMUTATION APPROACH}\label{sec:PermutationApproach}
In this section, we present the motivation for employing a permutation-based strategy in the two-sample testing problem using the sliced Wasserstein distance.

In hypothesis testing, the determination of the critical value plays a central role, as it directly governs the decision to reject the null hypothesis. In general, existing approaches for determining the critical value can be grouped into two main categories: asymptotic and non-asymptotic methods.

\paragraph{Asymptotic method.} In the asymptotic approach, the critical value is determined from the limiting distribution of the test statistic under the null hypothesis (see, e.g., \cite{shekhar2022permutation, shekhar2023permutation, zaremba2013b}). Specifically, to control the Type~I error in the asymptotic regime, the critical value is chosen as the $(1-\alpha)$ quantile of the null distribution. 

However, this asymptotic approach is not without limitations. In many cases, the limiting null distribution is either intractable or lacks a convenient closed form. For instance, \citet[Theorem~12]{gretton2012mmd} show that the null distribution of the squared empirical biased estimator of MMD is an infinite weighted sum of independent $\chi^2$ random variables, with weights given by the eigenvalues of the kernel operator. This makes the distribution highly dependent on the kernel choice and analytically difficult to handle. Likewise, more general tests based on U-statistics face similar challenges in the multinomial setting: the shape of the null distribution is intricately tied to the probabilities defining the underlying multinomial structure (see \citet[Figure~1]{kim2022minimax}).

The test statistic $\widehat{\operatorname{SW}}^p_p$ defined in \eqref{teststatistic} faces the same challenge. To gain intuition about its null distribution, we approximate it empirically through simulation.
Specifically, we consider the case $p=2$ and generate samples from three pairs of identical distributions with $n = m = 8000$ and $L = n = 8000$ projection directions. Repeating this procedure $4000$ times produces the histograms shown in Figure~\ref{fig:nulldistribution}. The resulting distributions vary significantly across underlying data distributions, which are unknown in practice, making it difficult to compute the critical threshold analytically.

\begin{figure}[H]
  \centering
  \begin{subfigure}{0.325\textwidth}
    \centering
    \includegraphics[width=\linewidth]{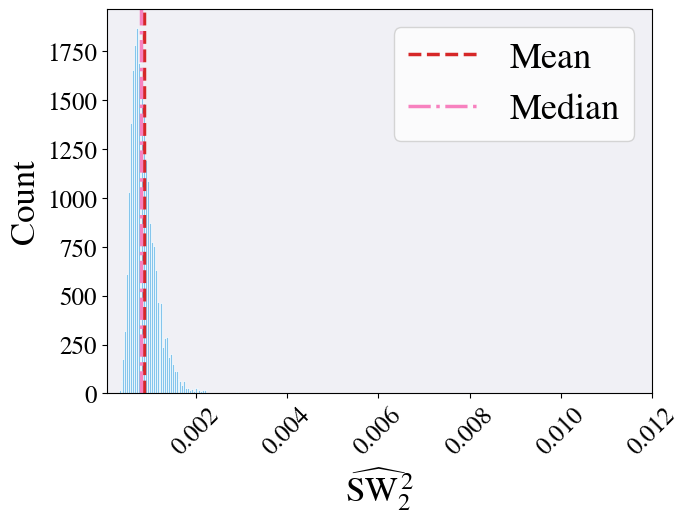}  \end{subfigure}\hspace{0.0005\textwidth}
  \begin{subfigure}{0.325\textwidth}
    \centering
    \includegraphics[width=\linewidth]{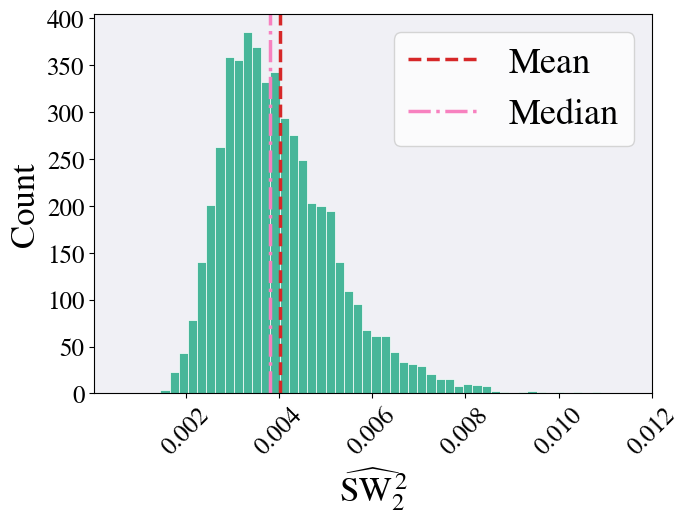}  \end{subfigure}\hspace{0.0005\textwidth}
  \begin{subfigure}{0.325\textwidth}
    \centering
    \includegraphics[width=\linewidth]{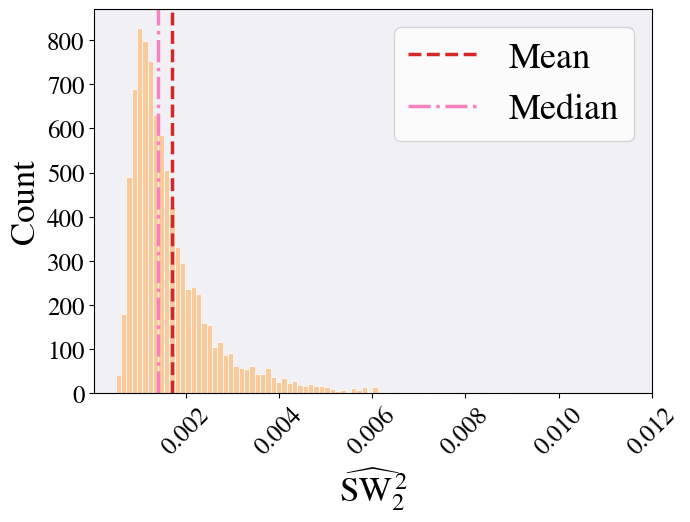}
  \end{subfigure}
  \caption{Histograms of the test statistic 
$\widehat{\operatorname{SW}}^{\,2}_{2}$ computed from $8000$ i.i.d. samples drawn
respectively from (left) Gaussian $\mathcal{N}(0,I_2)$, (middle) Uniform on
$[-1,1]^2$, and (right) a two-component Gaussian mixture 
$\tfrac{1}{2}\mathcal{N}(0,I_2)+\tfrac{1}{2}\mathcal{N}(\mathbf{m},I_2)$, 
where $\mathbf{m}=(2.5,2.5)^\top$.}
  \label{fig:nulldistribution}
\end{figure}

Recent work by \cite{rodriguez2025improved} established a Central Limit Theorem (CLT) that characterizes the asymptotic distribution of the empirical Sliced Wasserstein distance. Under suitable regularity conditions, the theorem states that: As $n, m=m(n)\to +\infty$, we obtain that
\begin{equation}\label{eq:asymptoticsliced}
\sqrt{\dfrac{k\frac{nm}{n+m}}{k+\frac{nm}{n+m}}}\left({\operatorname{\widehat{SW}}}^p_p-\operatorname{SW}^p_p(\mu,\nu)\right) \xrightarrow{d} \mathcal{N}\bigl(0,(1-\tau)w^2_{\mu,\nu}+\tau\left((1-\lambda)v^2_{\mu,\nu}+\lambda v^2_{\nu,\mu}\right)\bigr).
\end{equation}

Here, $\xrightarrow{d}$ denotes convergence in distribution, $n$ and $m = m(n)$ are the respective sample sizes, and $k = k(n)$ denotes the number of projection directions, and the parameters $\tau$ and $\lambda$ are asymptotic limits defined by
$$
\tau = \lim_{n \to +\infty} \frac{k}{k + \frac{nm}{n+m}}
\quad \text{and} \quad
\lambda = \lim_{n \to +\infty} \frac{n}{n+m}.
$$

Moreover, the two components of the asymptotic variance are defined as\footnotemark
$$w_{\mu,\nu}^2 \coloneqq  \displaystyle \int_{\mathbb{S}^{d-1}} \operatorname{W}_p^{\,2p}\!\bigl(\Pi^\theta_{\#}\mu,\Pi^\theta_{\#}\nu\bigr)\, d\sigma(\theta) - \operatorname{SW}_p^{\,2p}\!\bigl(\mu,\nu\bigr),$$
and
$$v_{\mu,\nu}^2 \coloneqq \displaystyle \int_{\mathbb{S}^{d-1}} \int_{\mathbb{S}^{d-1}} \operatorname{Cov}_{\mu}\!\bigl(\phi^{\theta},\, \phi^{\eta}\bigr)\,d\sigma(\theta)\, d\sigma(\eta),$$

where $\phi^{\theta}$ is any c-concave optimal transport potential from $\Pi^\theta_{\#}\mu$ to $\Pi^\theta_{\#}\nu$.

\footnotetext{We use the same notation as in the definition of the sliced Wasserstein distance presented in Subsection \ref{subsec:SlicedWasserStein}}

As can be observed, the asymptotic variances $w^2_{\mu,\nu}$ and $\nu^2_{\mu,\nu}$ are hard to compute and depend on the underlying unknown distributions. Furthermore, those asymptotic variances are equal to zero under the null hypothesis. As a consequence, it is not possible to construct a meaningful test based on this asymptotic null distribution. More broadly, research on the limiting distribution of the sliced Wasserstein distance remains limited, which makes it difficult to address testing problems through asymptotic analysis.

Finally, the number of observations is often limited due to economical or biological constraints (see, e.g., the discussion about applications in neurosciences in \citet[Section~0.4.1]{albert2015tests}). Consequently, asymptotic procedures may not be suitable in such small sample size settings. This is one of the reasons why non-asymptotic methods have been developed.

\paragraph{Non-asymptotic methods.} 
A classical approach is to determine the threshold of a test using a concentration inequality or a tail bound for the test statistic under the null hypothesis (see, e.g., \citet[Section~4.2]{gretton2012mmd}, \cite{wei2011dvoretzky}).
This non-asymptotic approach avoids dependence on asymptotic approximations and is therefore less sensitive to distributional assumptions. On the downside, the resulting bounds are usually conservative, leading to Type~I errors much smaller than $\alpha$ and lower power. Moreover, many of these guarantees involve constants that are not explicitly determined in theory, making the resulting thresholds difficult to compute in practice (cf. \citet[Section~III]{wang2021linear} and the illustration of \citet[Section~9]{kim2022minimax}).

There exist other data-driven approaches such as the bootstrap \citep{tibshirani1993introduction} and subsampling \citep{politis1999subsampling}. Many tests based on these methods have been extensively studied (see, e.g., \cite{van1996weak, romano1988bootstrap, hu2025maxsliced}). However, as noted in \citet[Section~0.5.2]{albert2015tests}, these methods do not guarantee Type~I error control in a non-asymptotic sense, i.e., for small sample sizes — a property that is ensured by permutation-based approaches. As a result, when both methods are applicable and non-asymptotic Type I error control is required, the permutation approach should be preferred.

For the reasons discussed above, we adopt the permutation approach for the two-sample test based on the sliced Wasserstein distance.

\section{EXISTING APPROACHES FOR CONTROLLING THE RANDOM CRITICAL VALUE OF PERMUTATION-BASED (TWO-SAMPLE) TESTS}\label{sec:Technicalreview}

In the previous section, we motivated the use of a permutation approach for the sliced Wasserstein distance two-sample test. We now review how the challenges associated with the theoretical analysis of permutation based two-sample tests have been addressed in the literature (see also, for instance, \citet[Section~1.2]{kim2022minimax}).

A central difficulty in analyzing the non-asymptotic power of permutation tests lies in controlling the random critical value of the test, and in particular the dependence structure induced by permutation sampling. To the best of our knowledge, no prior work has addressed this issue in the context of the sliced Wasserstein distance.

In this section, we review two existing approaches for controlling the random rejection threshold in permutation-based tests. Both rely on the structure of U-statistics: the first through a coupling argument \citep{kim2022minimax}, and the second via a concentration inequality for permuted sums \citep{albert2019concentration}. We then explain why current techniques cannot be directly extended to the sliced Wasserstein setting, motivating the development of a new analytical framework---one of the main theoretical contributions of this work.

\subsection{Coupling technique for U-statistics}
\citet[Section~6]{kim2022minimax} investigate the performance of permutation-based procedures with a  focus on degenerate second-order U-statistics, a broad class of estimators which encompasses many commonly used two-sample test statistics \citep[Chapter~5]{serfling2009approximation}. This line of work has inspired several recent theoretical analysis in permutation-based two-sample testing \citep{schrab2023mmd, kim2023differentially, choi2024computational, chatalic2025efficient}. In order to clarify this approach, we begin by formally introducing  second-order U-statistics. We use the same setting and notation as in Section~\ref{permutation approach}.
\begin{definition}[Second order U-statistic]
Let $\mathsf{X}$ be a measurable space.
Let $g: \mathsf{X} \times \mathsf{X} \to \mathbb{R}$ be a measurable symmetric bivariate function, that is, $g(x,y)=g(y,x)$ for all $x,y \in \mathsf{X}$. We introduce
\begin{align*}
    h(y_1,y_2;z_1,z_2)\coloneqq g(y_1,y_2)+g(z_1,z_2)-g(y_1,z_2)-g(y_2,z_1).
\end{align*}
Let $\mathbf{i}^2_n$ be the set of all couples drawn without replacement from the set $\{1,\dots,n\}$. Then, the corresponding U-statistic is given by
\begin{align*}
U_{n,m}\coloneqq \dfrac{1}{n(n-1)m(m-1)}\sum_{(i_1,i_2)\in \textbf{i}^n_2}\sum_{(j_1,j_2)\in\textbf{i}^m_2}h(Y_{i_1},Y_{i_2};Z_{j_1},Z_{j_2}).
\end{align*}
Moreover, given a permutation $\pi \in S_N$ (with $N = n + m$), 
the permuted U-statistic associated with $\pi$ is defined as
\begin{align*}
U^{\pi}_{n,m}\coloneqq \dfrac{1}{n(n-1)m(m-1)}\sum_{(i_1,i_2)\in \textbf{i}^n_2}\sum_{(j_1,j_2)\in \textbf{i}^m_2}h(X_{\pi_{i_1}},X_{\pi_{i_2}};X_{\pi_{n+j_1}},X_{\pi_{n+j_2}}).   
\end{align*}
\end{definition}
Assume $n \le m$, and let $L = (l_1,\dots,l_n)$ denote an $n$-tuple drawn uniformly without replacement from the set $\{1,\dots,m\}$. 
Given $L$, we define the auxiliary test statistic
\begin{align*}
\tilde{U}^{\pi,L}_{n,m}
:= \frac{1}{n(n-1)}
\sum_{(k_1,k_2)\in\textbf{i}^{n}_2}
h \big(
X_{\pi_{k_1}}, X_{\pi_{k_2}};\,
X_{\pi_{n+l_{k_1}}}, X_{\pi_{n+l_{k_2}}}
\big).
\end{align*}
Note that the statistic $U^{\pi}_{n,m}$ 
is the conditional expectation of 
$\tilde{U}^{\pi,L}_{n,m}$ with respect to $L$,  given all other random quantities, that is,
\begin{equation}\label{eq:ustatisticconnection}
U^{\pi}_{n,m}
= \mathbb{E}_L\!\left[
\tilde{U}^{\pi,L}_{n,m}
\,\middle|\,
\mathcal{X}_{N}, \pi \right].
\end{equation}
The usefulness of this construction becomes apparent when applying the following lemma from \citet{kim2022minimax}. It is a coupling argument based on a symmetrization trick \citep{dumbgen1998symmetrization}.
\begin{lemma}[Coupling with i.i.d.\ random variables]\label{lem:coupling}
Let $N\ge 1$ and let $\pi$ be a random permutation uniformly distributed on the symmetric group $S_N$, i.e., the set of all permutations of $\{1,\dots,N\}$.
Let $k=\lfloor N/2\rfloor$ and let $\Delta=(\delta_1,\dots,\delta_k)$ be a random vector of independent Bernoulli trials, taking values in $\{0,1\}^k$ and independent of $\pi$. Define the transformation $T_\Delta:S_N\to S_N$ of any permutation $\tau=(\tau_1,\dots,\tau_N)\in S_N$ as,
\begin{align*}
\big(T_\Delta(\tau)\big)_{2i-1}=\delta_i\,\tau_{2i-1}+(1-\delta_i)\,\tau_{2i},\qquad
\big(T_\Delta(\tau)\big)_{2i}=(1-\delta_i)\,\tau_{2i-1}+\delta_i\,\tau_{2i},    
\end{align*}
for $i=1,\dots,k$.
Let $\pi'\coloneqq T_\Delta(\pi)$. Then $\pi'$ and $\pi$ are identically distributed.
\end{lemma}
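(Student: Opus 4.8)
The plan is to verify directly that the pushforward of the uniform distribution on $S_N$ under the map $T_\Delta$ (with $\Delta$ an independent vector of i.i.d.\ Bernoulli's) is again uniform on $S_N$, using a symmetry/conditioning argument. Concretely, I would first observe that for a \emph{fixed} deterministic choice $\delta\in\{0,1\}^k$, the map $T_\delta:S_N\to S_N$ is a bijection: it acts on each consecutive pair of coordinates $(\tau_{2i-1},\tau_{2i})$ either as the identity (when $\delta_i=1$) or as the transposition swapping the two entries (when $\delta_i=0$), and each such coordinate-pair operation is an involution, hence $T_\delta\circ T_\delta=\mathrm{id}$, so $T_\delta$ is its own inverse and in particular a bijection of $S_N$ onto itself. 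Since the uniform distribution on a finite set is invariant under any bijection, if $\pi\sim\mathrm{Unif}(S_N)$ then $T_\delta(\pi)\sim\mathrm{Unif}(S_N)$ for every fixed $\delta$.

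Next I would remove the conditioning on $\Delta$. Because $\Delta$ is independent of $\pi$, for any test function $f:S_N\to\mathbb{R}$ we have, by the tower property,
\begin{align*}
\mathbb{E}\big[f(T_\Delta(\pi))\big]
&=\mathbb{E}\Big[\,\mathbb{E}\big[f(T_\Delta(\pi))\mid \Delta\big]\Big]
=\sum_{\delta\in\{0,1\}^k}\mathbb{P}(\Delta=\delta)\,\mathbb{E}\big[f(T_\delta(\pi))\big]\\
&=\sum_{\delta\in\{0,1\}^k}\mathbb{P}(\Delta=\delta)\,\mathbb{E}\big[f(\pi)\big]
=\mathbb{E}\big[f(\pi)\big],
\end{align*}
where the third equality uses the fixed-$\delta$ invariance established above. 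Since this holds for all bounded $f$, it follows that $\pi'=T_\Delta(\pi)$ has the same distribution as $\pi$, namely $\mathrm{Unif}(S_N)$. (If one prefers a cleaner formulation one can take $f=\mathbf{1}\{\cdot=\tau_0\}$ for each $\tau_0\in S_N$ and conclude $\mathbb{P}(\pi'=\tau_0)=\mathbb{P}(\pi=\tau_0)=1/N!$.)

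The only genuinely substantive point—and the step I would be most careful about—is the claim that $T_\delta$ is a bijection for fixed $\delta$; everything else is routine conditioning. Here the key observation is the one noted above: on each index block $\{2i-1,2i\}$ the prescription in the statement reduces to applying either the identity or the swap $(a,b)\mapsto(b,a)$ depending on the bit $\delta_i$, and these operations on disjoint blocks commute and are involutions, so $T_\delta$ composed with itself returns every permutation to itself. A minor bookkeeping remark: when $N$ is odd, $k=\lfloor N/2\rfloor$ and the last coordinate $\tau_N$ is left untouched by $T_\delta$, which does not affect the bijection argument. One should also note that $T_\delta(\tau)$ is indeed a permutation (not merely a tuple): rearranging the entries of $\tau$ within disjoint blocks permutes the same set of values $\{1,\dots,N\}$, so the output lies in $S_N$. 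With these observations in place the proof is complete.
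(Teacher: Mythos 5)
Your argument is correct and follows essentially the same route as the paper's proof: both establish that for fixed $\delta$ the map $T_\delta$ is an involution (hence a bijection of $S_N$), invoke invariance of the uniform distribution under bijections, and then integrate out $\Delta$ using the independence of $\pi$ and $\Delta$. The only cosmetic difference is that you phrase the conditioning step in terms of general test functions $f$ and explicitly note that $T_\delta(\tau)$ remains a permutation, whereas the paper works directly with the point probabilities $\mathbb{P}(\pi'=\sigma)$ and rewrites the event $\{T_\Delta(\pi)=\sigma\}$ as $\{\pi=T_\Delta(\sigma)\}$ before appealing to uniformity; these are the same argument expressed in slightly different notation.
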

For completeness we provide a proof of this lemma.
\begin{proof}
Assume that $N$ is even. If $N$ is odd, we simply set $\left(T_{\Delta}(\tau)\right)_N=\tau_N$ and the proof remains unchanged.

To prove that $\pi'$ and $\pi$ are identically distributed, it suffices to show that, for any arbitrary permutation $\sigma \in S_N$,
\begin{align*}
\mathbb{P}(\pi'=\sigma)=\mathbb{P}\left(\pi=\sigma\right).
\end{align*}

By applying the law of total probability, we obtain
\begin{align*}
\mathbb{P}\left(\pi'=\sigma\right)=\mathbb{E}\left[\mathbb{P}\left(T_{\Delta}(\pi)=\sigma \mid \Delta \right)\right].
\end{align*}

Since $T_{\Delta}$ is an involution, i.e., $T_\Delta(T_\Delta(\tau))=\tau$ for all $\tau \in S_N$, the event $\{ T_\Delta(\pi) = \sigma \}$ can be equivalently expressed as $\{ \pi = T_\Delta(\sigma) \}$. Moreover, since $\pi$ and $\Delta$ are independent, conditioning on $\Delta$ 
does not affect the distribution of $\pi$. Following these observations, it holds that
\begin{align*}
\mathbb{P}\left(\pi'=\sigma\right)=\mathbb{E}\left[\mathbb{P}\left(\pi =T_{\Delta}(\sigma) \mid \Delta \right)\right] =   \mathbb{E}\left[\mathbb{P}\left(\pi =T_{\Delta}(\sigma) \right)\right] =\mathbb{E}\left(\frac{1}{N!}\right)=\frac{1}{N!}=\mathbb{P}\left(\pi=\sigma\right).  
\end{align*}
Since this equality holds for every $\sigma \in S_N$, the claim follows.
\end{proof}

The above lemma implies that the distribution of $\tilde{U}^{\pi,L}_{n,m}$ remains unchanged if we randomly swap $X_{\pi_k}$ and $X_{\pi_{n+l_k}}$ for $k \in \{1,\dots,n\}$. 
In other words, it allows us to connect the statistic to i.i.d.\ Bernoulli random variables, which are easier to handle analytically. 
Moreover, due to the symmetry of $g(x,y)$ and the definition of $h$, this argument naturally extends to i.i.d.\ Rademacher random variables, viewed as random sign flips. 

Indeed, let $\zeta_1,\dots,\zeta_n$ be i.i.d.\ Rademacher random variables. Then
\begin{align} \label{eq:UandRademacher}
\tilde{U}^{\pi,L,\zeta}_{n,m}
\coloneqq \frac{1}{n(n-1)}
\sum_{(k_1,k_2)\in\textbf{i}^{n}_2}
\zeta_{k_1}\zeta_{k_2}\,
h\big(
X_{\pi_{k_1}}, X_{\pi_{k_2}};\,
X_{\pi_{n+\ell_{k_1}}}, X_{\pi_{n+\ell_{k_2}}}
\big) \stackrel{(d)}{=} \;\tilde{U}^{\pi,L}_{n,m}.
\end{align}
This machinery provides a way to bound the tail probability of $U^{\pi}_{n,m}$. Indeed, for any $\lambda > 0$ and $t > 0$, by applying a Chernoff bound \citep[Section~2.3]{vershynin2018high} and Jensen's inequality \citep[Section~1.6]{vershynin2018high} together with \eqref{eq:ustatisticconnection}, we obtain
\begin{align*}
\mathbb{P}_{\pi}\left(U^{\pi}_{n,m}>\lambda \middle| \mathcal{X}_{N}\right)\le e^{-\lambda t}\mathbb{E}_{\pi}\left[\exp\left(\lambda U^{\pi}_{n,m}\right)\middle|\mathcal{X}_N\right]\le e^{-\lambda t}\mathbb{E}_{\pi,L}\left[\exp\left(\lambda \tilde{U}^{\pi,L}_{n,m}\right)\middle|\mathcal{X}_N\right].
\end{align*}
Then, by \eqref{eq:UandRademacher}, it follows that
\begin{align*}
\mathbb{E}_{\pi,L}\left[\exp\left(\lambda \tilde{U}^{\pi,L}_{n,m}\right)\middle| \mathcal{X}_N\right]=\mathbb{E}_{\pi,L,\zeta}\left[\exp \left(\lambda\tilde{U}^{\pi,L,\zeta}_{n,m}\right) \middle| \mathcal{X}_N\right].
\end{align*}
As a result, 
\begin{align*}
\mathbb{P}_{\pi}\left(U^{\pi}_{n,m}>\lambda \mid \mathcal{X}_{N}\right) \le \mathbb{E}_{\pi,L,\zeta}\left[\exp \left(\lambda\tilde{U}^{\pi,L,\zeta}_{n,m}\right) \mid \mathcal{X}_N\right].
\end{align*}
The right-hand side, which involves Rademacher averages, is a well-studied quantity that can be controlled using standard decoupling arguments \cite[Chapter~6]{vershynin2018high}. This, in turn, provides an upper bound on the concentration of $U^{\pi}_{n,m}$ and allows control of the random critical value in the permutation U-statistic test.


Several works on MMD-based two-sample test use this approach by expressing their test statistic as the sum of a U-statistics with respect to the positive definite kernel $g$ associated to the MMD and a controllable remainder term. In contrast, the sliced Wasserstein distance does not relate naturally to U-statistics, preventing a direct application of this approach.

While \citet{kim2022minimax} rely on the symmetrization trick of \citet{dumbgen1998symmetrization} to address the dependence introduced by permutations, \citet{albert2019concentration} takes a different route and establish a concentration bound for permuted sums, offering an alternative way to control the random quantile term. We describe this second approach in the next section.

\subsection{Concentration of permuted sums}
Continuing our review of methods for controlling the random critical value in permutation-based tests, we now turn to the work of \citet{albert2019concentration}.
As shown in \citet[Section~2.2]{albert2015bootstrappermutationtestsindependence}, the test statistic considered in \cite{albert2019concentration} is a rescaled version of a U-statistic. Although their analysis focuses on independence testing, its contribution extends beyond this specific setting by introducing a distinct framework for analyzing random critical values and highlighting the key role of concentration inequalities in this task.

The main contribution of this work lies in Section~2.2, where the author establishes a concentration inequality for permuted sums in a general setting by leveraging fundamental inequalities for random permutations of \cite{talagrand1995concentration}. In particular, they present the following result
\begin{lemma}[Theorem~2.1 of \cite{albert2019concentration}]
Let $\{a_{i,j}\}_{1\le i,j\le N}$ be a collection of any real numbers, and $\pi$ be random uniform permutation in $S_N$. Consider $Z_N=\displaystyle \sum_{i=1}^{N}a_{i,\pi(i)}$. Then, for all $x>0$,
\begin{equation}\label{eq:concentrationpermutedsums}
\mathbb{P}\left(\left|Z_N-\mathbb{E}[Z_N]\right| \ge 2\sqrt{2\left(\frac{1}{n}\sum_{i,j=1}^{N}a^2_{i,j}\right)}+2\max_{1\le i,j\le N}\left\{\left|a_{i,j}\right|\right\}x\right)\le 16e^{1/16}\exp\left(-\dfrac{x}{16}\right).    
\end{equation}
\end{lemma}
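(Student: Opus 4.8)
The plan is to derive the inequality from Talagrand's concentration results for the uniform law on the symmetric group $S_N$, specialized to the linear statistic $f(\pi):=Z_N=\sum_{i=1}^N a_{i,\pi(i)}$. First I would record the elementary facts. Since $\pi(i)$ is marginally uniform on $[N]$ we have $\mathbb{E}[Z_N]=\frac1N\sum_{i,j}a_{i,j}$, so the statement really asserts concentration of $Z_N$ around $\frac1N\sum_{i,j}a_{i,j}$. Next comes the structural property of $f$: for any two permutations $\pi,\sigma$, writing $I(\pi,\sigma)=\{i:\pi(i)\neq\sigma(i)\}$ for their disagreement set,
\[
|f(\pi)-f(\sigma)|=\Bigl|\sum_{i\in I(\pi,\sigma)}\bigl(a_{i,\pi(i)}-a_{i,\sigma(i)}\bigr)\Bigr|\le\sum_{i\in I(\pi,\sigma)}\bigl(|a_{i,\pi(i)}|+|a_{i,\sigma(i)}|\bigr),
\]
so $f$ is ``Lipschitz with configuration'': along each disagreeing coordinate the increment is at most $2\max_{i,j}|a_{i,j}|$, while the averaged sum of squared per‑coordinate values satisfies $\mathbb{E}_\pi\bigl[\sum_i a_{i,\pi(i)}^2\bigr]=\frac1N\sum_{i,j}a_{i,j}^2$, which is exactly the quantity appearing in the statement and which will play the role of the variance proxy.

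Then I would invoke Talagrand's inequality for random permutations (Talagrand 1995, together with its convex‑distance corollary; see also McDiarmid's permutation version), which for a function of this type yields, for a median $M$ of $Z_N$ and every $u>0$, a two‑regime tail of the form $\mathbb{P}(|Z_N-M|\ge u)\le K\exp\bigl(-\tfrac1K\min(u^2/V,\,u/b)\bigr)$ with $b=\max_{i,j}|a_{i,j}|$ and $V$ proportional to $\frac1N\sum_{i,j}a_{i,j}^2$. Choosing the deviation level $u=2\sqrt{2\cdot\tfrac1N\sum_{i,j}a_{i,j}^2}+2b x$ places the two branches of the minimum in the right position (the quadratic branch contributes a bounded factor, the linear branch a term $\asymp x$), and tracking constants through Talagrand's exponential moment bound — whose exponent is precisely the $1/16$ visible in the statement — recovers the claimed prefactor $16e^{1/16}$ and rate $e^{-x/16}$. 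A final routine step replaces the median $M$ by $\mathbb{E}[Z_N]$: integrating the tail shows $|M-\mathbb{E}[Z_N]|\lesssim\sqrt V$, which is absorbed into the $2\sqrt2\sqrt V$ term at the cost of constants (this is where the $2\sqrt2$ and $e^{1/16}$ factors originate).

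The main obstacle — the part that is genuinely an argument rather than bookkeeping — is obtaining the Talagrand‑type bound with the \emph{sharp} variance proxy $\frac1N\sum_{i,j}a_{i,j}^2$ rather than the crude sub‑Gaussian proxy $Nb^2$ that a naive bounded‑differences argument (Azuma along the Fisher–Yates shuffle) would produce. This forces the use of the \emph{weighted} convex‑distance formulation: one applies the inequality to $A=\{Z_N\le M\}$ with the \emph{data‑dependent} weights $w_i(\pi)=|a_{i,\pi(i)}|/\|(a_{i,\pi(i)})_i\|_2$, so that the convex distance $d_A(\pi)$ controls $Z_N(\pi)-M$ after normalising by $\|(a_{i,\pi(i)})_i\|_2$, while the individual weights never exceed $b/\|(a_{i,\pi(i)})_i\|_2$, which is exactly what produces the simultaneous sub‑Gaussian/sub‑exponential behaviour. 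Handling the fact that these weights depend on $\pi$ itself, and carrying out the median‑to‑mean transfer with explicit constants, are the two delicate points; once they are settled, the remainder is a direct substitution into Talagrand's master inequality.
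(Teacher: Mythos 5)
This lemma is not proved in the paper you are being compared against: it is a quoted result, reproduced verbatim (as ``Theorem~2.1 of \cite{albert2019concentration}'') purely as part of a literature review of how the random critical value of permutation tests has been controlled elsewhere, and the paper explicitly does \emph{not} use it in its own analysis. So there is no ``paper proof'' to match; the relevant comparison is to Albert's own argument. That said, your sketch is consistent with the strategy actually used in \citet{albert2019concentration}: the result is indeed obtained from Talagrand's concentration inequality for the uniform measure on the symmetric group, applied to the linear statistic $f(\pi)=\sum_i a_{i,\pi(i)}$ via a weighted convex-distance argument with data-dependent weights (which is what buys the sharp variance proxy $\tfrac{1}{N}\sum_{i,j}a_{i,j}^2$ instead of the crude $N\max_{i,j}a_{i,j}^2$ one would get from a bounded-differences/Azuma bound), followed by a median-to-mean transfer. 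Incidentally, you implicitly corrected a typo in the paper's transcription: the normalisation in the variance term should read $\tfrac{1}{N}$, not $\tfrac{1}{n}$.

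The gap, however, is that what you wrote is a plan rather than a proof. The two genuinely technical steps — establishing the Bernstein-shaped tail $\mathbb{P}(|Z_N-M|\ge u)\le K\exp\!\bigl(-\tfrac1K\min(u^2/V,\,u/b)\bigr)$ from Talagrand's master inequality with the weights $w_i(\pi)=|a_{i,\pi(i)}|/\|(a_{i,\pi(i)})_i\|_2$ (including dealing with the $\pi$-dependence of both the weights and the normalisation), and then transferring from the median $M$ to $\mathbb{E}[Z_N]$ while tracking how much this costs in the additive term — are the ones you flag as ``delicate'' and then do not carry out. In particular, the claim that the exponent $1/16$, the prefactor $16e^{1/16}$, and the $2\sqrt2$ in the deviation threshold ``are recovered'' by substitution is asserted, not shown; in Albert's proof these constants arise from a specific parametrisation of Talagrand's bound together with a specific choice of splitting threshold between the sub-Gaussian and sub-exponential regimes, and one cannot verify the stated inequality without actually performing that bookkeeping. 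So the approach is the right one, but as written it would not stand as a self-contained proof of the lemma with the stated numerical constants.
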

In the final section, the author illustrates the use of this theorem by analyzing the non-asymptotic behavior of a permutation-based independence test introduced in \citet{albert2015bootstrappermutationtestsindependence}. Before going into the details of how they applied the theorem, we now recall in detail their problem of interest and the corresponding test statistic.

Let $\mathsf{X}$ be a separable space, and let $\mathcal{X}_N = (X_1, \ldots, X_N)$ be i.i.d.\ samples from a joint distribution $P$ on $\mathsf{X}^2$, 
where each $X_i = (X_i^{1}, X_i^{2})$ has marginals $P^{1}$ and $P^{2}$ corresponding to its coordinates. The goal is to test whether $P = P^{1}\otimes P^{2}$. 
To this end, they consider the following test statistic, whose motivation is detailed in \citet{albert2015bootstrappermutationtestsindependence}:
\begin{equation*}
T_{\delta}(\mathcal{X}_N)
= \frac{1}{N-1}\!\left(
\sum_{i=1}^{N} \varphi_{\delta}(X_i^{1},X_i^{2})
\;-\;
\frac{1}{n}\sum_{i=1}^{N}\sum_{j=1}^{N} \varphi_{\delta}(X_i^{1},X_j^{2})
\right).
\end{equation*}
where $\varphi_{\delta}$ is a measurable real-valued function on $\mathsf{X}^2$ potentially depending on some unknown
parameter $\delta$.

Then, for any random permutation $\pi$ uniformly distributed over $\{1, \dots, N\}$, 
the corresponding permuted sample is defined as
\begin{align*}
\mathcal{X}^{\pi}_N = (X^{\pi}_1, \dots, X^{\pi}_N),
\qquad 
\text{where } X^{\pi}_i = (X^1_i, X^2_{\pi(i)}), \quad \forall\, 1 \le i \le N,
\end{align*}
and the associated permuted test statistic is given by
\begin{equation}\label{eq:permutedT}
T_{\delta}(\mathcal{X}^{\pi}_N)
= \frac{1}{N-1}\!\left(
\sum_{i=1}^{N} \varphi_{\delta}(X_i^{1}, X_{\pi(i)}^{2})
\;-\;
\frac{1}{n}\sum_{i=1}^{N}\sum_{j=1}^{N} \varphi_{\delta}(X_i^{1}, X_j^{2})
\right).
\end{equation}
To apply their concentration bound, they express the permuted test statistic as the difference between a random variable and its expectation. In particular, by introducing $\tilde{{Z}}(\mathcal{X}_N)=\sum\limits^N_{i=1}\varphi_\delta(X^1_i,X^2_{\pi(i)})$, the permuted test statistic defined in \eqref{eq:permutedT} can be rewritten as
\begin{equation*}
T_{\delta}(\mathcal{X}^{\pi}_N)
= \frac{1}{N-1}\!\left(
\tilde{Z}(\mathcal{X}_N)-\mathbb{E}\left[\tilde{Z}(\mathcal{X}_N) \mid \mathcal{X}_N\right]
\right).
\end{equation*}
This reformulation aligns the left-hand side of the probability inequality in \eqref{eq:concentrationpermutedsums} with the permuted test statistic, thereby allowing the authors to directly apply the concentration result to derive an upper bound on its quantile. However, this approach cannot be extended to the case of the sliced Wasserstein distance, since the latter cannot be written as such a centered difference.

From the preceding technical review concerning the control of the conditional quantile of a permuted test statistic, two main insights can be drawn:
\begin{itemize}
\item Concentration inequalities are a powerful tool for analyzing conditional quantiles, as they provide sharp non-asymptotic high-probability bounds—typically with exponentially small tails—that lead to the desired logarithmic dependence on the Type~I error level~$\alpha$, as discussed in \citet[Section~3.2]{albert2019concentration}. 
\item Unlike $U$-statistics, which are centered at zero under permutation, or the statistic in \citet{albert2015bootstrappermutationtestsindependence}, which can be written as a deviation from its expectation, analyzing the permuted sliced Wasserstein statistic requires explicit control of its expectation under permutation of the samples. We address this challenge in Section~\ref{Sec:OptimalTransportResult}.
\end{itemize}
\section{GAUSSIAN MEAN SHIFT EXPERIMENT}\label{sec:addtionalexperiment}
Due to page constraints in the main paper, we present an additional experiment here. In this setting, samples are drawn from $\mu = \mathcal{N}(0, I_{60})$ and $\nu = \mathcal{N}(\mathbf{m}, I_{60})$, where $\mathbf{m} = (0.6, 0.6, 0, \dots, 0) \in \mathbb{R}^{60}$ is a 60-dimensional vector whose first two entries are equal to $0.6$.

Following the experimental setup from Section~\ref{sec:Testsonsynthetic}, we compare the sliced Wasserstein (SW) tests with the Projected Wasserstein (PW) test \citep{wang2021linear} and the MMD test \citep{gretton2012mmd} using linear, Gaussian and Laplace kernels. As observed, the MMD test with a linear kernel shows a significant improvement in this setting. This improvement can be explained by the fact that for the linear kernel $k(x, y) = x^\top y$, the MMD test reduces to a mean-difference test, making it particularly well-suited for detecting to mean shifts.

It is also observed that, in this scenario, the SW-based tests exhibit lower performance compared to the three MMD-based tests. Nevertheless, the statistical power of SW tests can be considerably improved by increasing the number of projections. This observation motivates the extended investigation presented in Section~\ref{Sec:EffectOfTheNumberProjections}.
\begin{figure}[h]
  \centering
\includegraphics[width=0.5\linewidth]{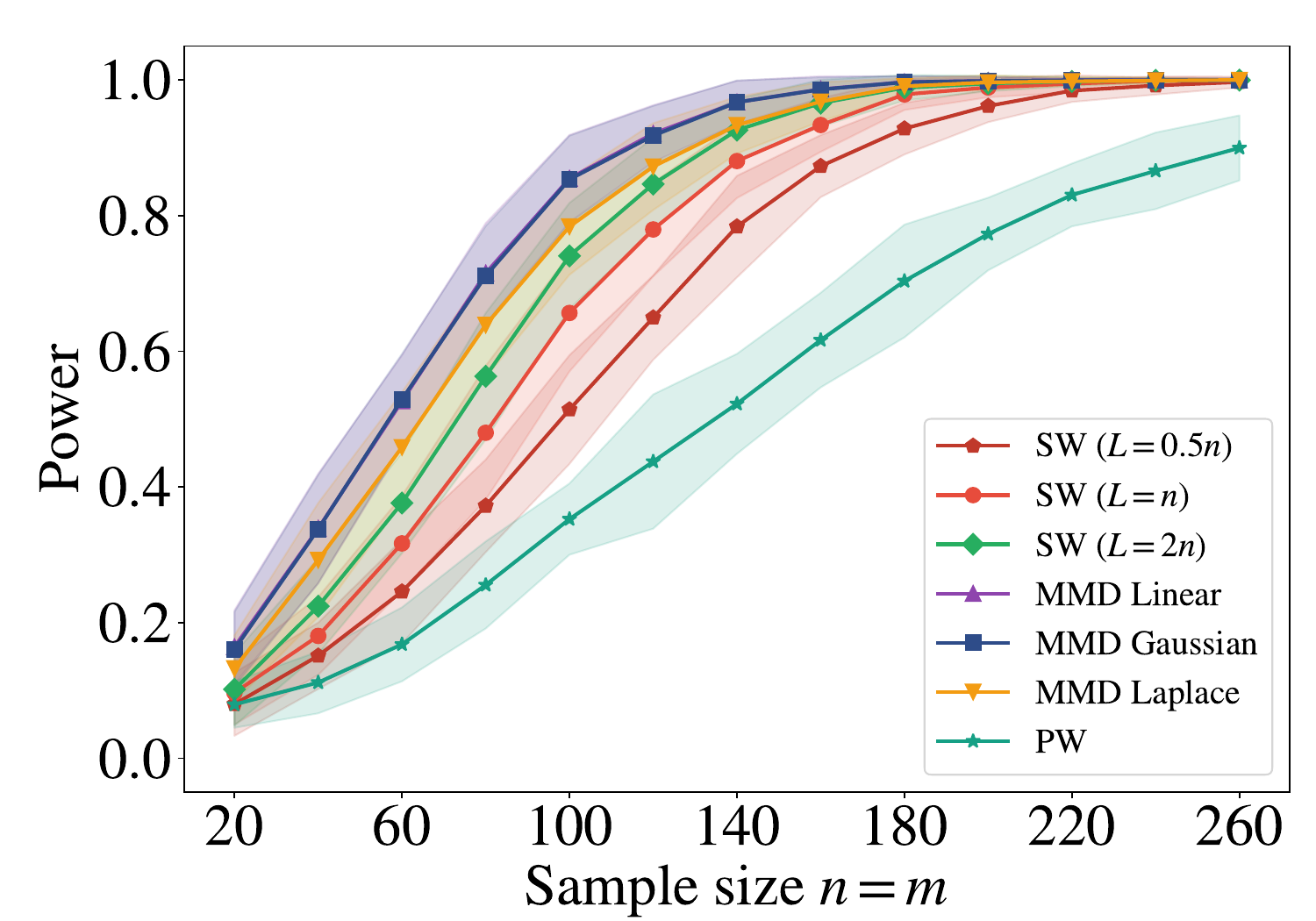}
  \caption{Power vs. number of sample size: Gaussian mean shift scenario}
\end{figure}

\end{document}